\def\1{\bm{1}}
\DeclareMathAlphabet{\mathsfit}{\encodingdefault}{\sfdefault}{m}{sl}
\SetMathAlphabet{\mathsfit}{bold}{\encodingdefault}{\sfdefault}{bx}{n}
\newcommand{\cH}{\mathcal{H}}
\newcommand{\cI}{\mathcal{I}}
\newcommand{\cL}{\mathcal{L}}
\newcommand{\cM}{\mathcal{M}}
\newcommand{\cN}{\mathcal{N}}
\newcommand{\RR}{\mathbb{R}}
\definecolor{lightblue}{RGB}{204, 255, 255}
\theoremstyle{plain}
\newtheorem{theorem}{Theorem}[section]
\newtheorem{proposition}[theorem]{Proposition}
\newtheorem{lemma}[theorem]{Lemma}
\theoremstyle{definition}
\newtheorem{assumption}[theorem]{Assumption}
\newtheorem{observation}[theorem]{Observation}
\theoremstyle{remark}
\begin{document}

\twocolumn[
\icmltitle{Memory-Efficient LLM Training by Various-Grained \\ Low-Rank Projection of Gradients}





\icmlsetsymbol{equal}{*}

\begin{icmlauthorlist}
\icmlauthor{Yezhen Wang}{equal,yyy}
\icmlauthor{Zhouhao Yang}{equal,zzz}
\icmlauthor{Brian K. Chen}{yyy}
\icmlauthor{Fanyi Pu}{xxx}
\icmlauthor{Bo Li}{xxx}
\icmlauthor{Tianyu Gao}{yyy}
\icmlauthor{Kenji Kawaguchi}{yyy}
\end{icmlauthorlist}

\icmlaffiliation{yyy}{National University of Singapore}
\icmlaffiliation{zzz}{Johns Hopkins University}
\icmlaffiliation{xxx}{S-Lab, Nanyang Technological University}

\icmlcorrespondingauthor{Yezhen Wang}{yezhenw@comp.nus.edu.sg}

\icmlkeywords{Machine Learning, ICML}

\vskip 0.3in
]



\printAffiliationsAndNotice{\icmlEqualContribution} 

\begin{abstract}
Building upon the success of low-rank adapter (LoRA), low-rank gradient projection (LoRP) has emerged as a promising solution for memory-efficient fine-tuning. However, existing LoRP methods typically treat each row of the gradient matrix as the default projection unit, leaving the role of \textit{projection granularity} underexplored. In this work, we propose a novel framework, \textbf{VLoRP}, that extends low-rank gradient projection by introducing an additional degree of freedom for controlling the trade-off between memory efficiency and performance, beyond the rank hyper-parameter. Through this framework, we systematically explore the impact of projection granularity, demonstrating that finer-grained projections lead to enhanced stability and efficiency even under a fixed memory budget. Regarding the optimization for VLoRP, we present \textbf{ProjFactor}, an adaptive memory-efficient optimizer, that significantly reduces memory requirement while ensuring competitive performance, even in the presence of gradient accumulation. Additionally, we provide a theoretical analysis of VLoRP, demonstrating the descent and convergence of its optimization trajectory under both SGD and ProjFactor. Extensive experiments are conducted to validate our findings, covering tasks such as commonsense reasoning, MMLU, and GSM8K.
\end{abstract}


\section{Introduction}
Large Language Models (LLMs) have demonstrated remarkable capabilities across various domains~\cite{achiam2023gpt, team2023gemini, touvron2023llama, dubey2024llama, li2024llava, guo2025deepseek}. However, these state-of-the-art models impose substantial memory requirements, making them challenging to finetune in practical settings. For example, fine-tuning an LLM with 7 billion parameters in the bfloat16 format~\cite{dean2012large, abadi2016tensorflow} requires approximately 14 GB of memory for the parameters alone. The gradients demand a similar amount of memory,\footnote{Techniques like layer-wise updates can mitigate this overhead, but gradient accumulation still necessitates storing gradients in practice.} and, with Adam/AdamW~\citep{kingma2014adam, adamw} serving as the de facto optimizer, an additional 28 GB is needed for optimizer states. Including the activations required for backpropagation, the total memory footprint exceeds 60 GB, rendering such training prohibitively costly for most users.

To mitigate this training overhead, various parameter-efficient fine-tuning (PeFT) techniques \cite{Houlsby2019, Razdaibiedina2023} have been developed. Among the most notable are Low-Rank Adapters (LoRA) \citep{lora} and its variants \citep{QLoRA, DoRA, LoRA+, LoRA-GA}, which decompose matrix-shaped parameters into two low-rank matrices to enable more memory-efficient training. Recently, a novel family of methods, Low-Rank Gradient Projection (LoRP) \citep{FLoRA, Galore, fira, appollo}, has emerged. LoRP methods also leverage low-rank properties during LLM training, but rather than operating at the parameter level, they focus on exploiting the low-rank structure within the gradient. Concretely, LoRP methods achieve memory reduction by projecting the gradient matrix $G\in\RR^{n\times m}$ into a low-dimensional subspace spanned by the columns of a projection matrix $P\in\RR^{m\times r}$ with $r\ll m$. When required for parameter updates, the gradient is approximated as $(GP)P^\top$ by projecting the stored low-dimensional gradient $GP$ back to the original space. In general, LoRP can offer better memory efficiency than LoRA, as it only requires the storage of gradient information in a single low-dimensional subspace, whereas LoRA necessitates two. 

An interesting observation is that LoRP can also be interpreted from the perspective of stochastic approximation, specifically in the form of the forward gradient method~\cite{baydin22}: when $P$ is a random matrix sampled from Gaussian distribution $\mathcal{N}(0, \frac{1}{r})$, LoRP can be viewed as an unbiased stochastic approximation of the original gradient, applied row-wisely (elaborated in Sec.\ref{sec:3}):
\begin{equation}
     G_{i, :} \approx G_{i, :}PP^{\top} = \frac{1}{r}\sum_{j=1}^r \left(G_{i, :}\cdot v_j \right)v^{\top}_j,
    \label{eq:fwd_grad}
\end{equation}
where $i = 1, \dots, n$, $v_j\sim \mathcal{N}\left(0, I_m\right)$ is the $j$-th column of $\sqrt{r}P$, and $G_{i, :}$ is the $i$-th row of $G$, representing a segment of the gradient being approximated. In \eqref{eq:fwd_grad}, $r$ serves a dual role: it defines the dimensionality of the projected space in LoRP and also determines the number of samples used for stochastic approximation, akin to the query count in Monte Carlo estimation. 
From the lens of stochastic approximation, the effectiveness of estimation is highly influenced simultaneously by the target dimensionality (\textit{i.e.}, the size of $G_{i,:}$) and the number of samples $r$~\cite{berahas2022theoretical,FG2U}. If $r$ increases, more independent samples can reduce variance in \eqref{eq:fwd_grad}. Alternatively, keeping 
$r$ fixed while decreasing the dimensionality of 
$G_{i,:}$ improves accuracy by reducing the number of dimensions to estimate. This insight naturally leads to a new degree of freedom  in LoRP methods: \textbf{rather than fixing the dimensionality of 
$G_{i,:}$, we can adjust it together with 
$r$ to balance performance and memory efficiency.}

Building on the above observation, we introduce the concept \textbf{Projection Granularity} defined as the length of $G$'s rows, which essentially represents the size of the fundamental projection unit in LoRP. In response,  we propose \textbf{VLoRP} (\textbf{V}arious-Grained \textbf{Lo}w-\textbf{R}ank \textbf{P}rojection of Gradients), a novel framework that simultaneously adjusts both the Projection Granularity and the rank \( r \). By systematically exploring different configurations of granularity and rank in VLoRP, we find that, under a fixed memory budget, selecting a finer granularity, even with a smaller rank, consistently leads to superior performance. Additionally, by explicitly analyzing the mean and variance of gradient estimation, we establish that VLoRP achieves an \( O(1/T) \) convergence rate when paired with the SGD optimizer.


On top of it, given that Adam-based optimizers~\cite{kingma2014adam, adafactor, adamw} are the de facto standard for training modern LLMs, we also explore the potential adaptive training schemes for the VLoRP framework. Specifically, we investigate two distinct optimization schemes: the \textbf{Subspace Scheme (SS)} and the \textbf{Original Space Scheme (OS)}. In \textbf{SS}, optimization states are maintained within the low-dimensional subspace.
In contrast, \textbf{OS} projects the low-dimensional gradients back first before storing and updating the optimization states within the original space. While both schemes only access the rank-\(r\) alternative of the original gradients, our results indicate that \textbf{OS} typically outperforms \textbf{SS}. However, since \textbf{OS} stores the optimizer states in the original full-rank space, its memory requirements can become prohibitive. To address this issue, we propose \textbf{ProjFactor}, a memory-efficient variant of \textbf{OS} that significantly reduces the memory usage for storing optimization states while maintaining comparable performance. Finally, we adopt Hamiltonian descent methods~\cite{maddison2018hamiltonian, chen2023lion, liang2024memory} to show that, despite several approximations made as a compromise of memory efficiency, the Lyapunov function regarding ProjFactor can monotonically decrease with time, ensuring that the optimizer converges to a local optimum.

Our main contributions are as follows:
(i) We introduce VLoRP, a method that facilitates the simultaneous adjustment of Projection Granularity and rank, thereby offering a more nuanced control over the trade-off between memory efficiency and performance. Besides, we argue that finer granularity is more significant than larger rank and validate it with comprehensive experiments.
(ii) We investigate two Adam-based optimization schemes for the VLoRP framework and propose a novel adaptive optimization algorithm, named ProjFactor. This approach significantly reduces memory consumption while maintaining competitive performance.
(iii) We present theoretical proof for the convergence of VLoRP with the SGD optimizer, achieving an $O(1/T)$ convergence rate. Additionally, we provide a rigorous theoretical guarantee for VLoRP under ProjFactor based on the framework of Hamiltonian descent.

\section{Background}
\textbf{Low-Rank Gradient Projection (LoRP)} \quad
Recently, LoRP algorithms~\cite{FLoRA, Galore, fira, appollo, adamem} have become prominent PeFT methods. These methods leverage the observation that the gradients in high-dimensional parameter spaces often lie in a low-dimensional manifold. For instance, FLoRA~\citep{FLoRA} demonstrates that LoRA can be approximated by down-projecting gradients into a low-rank subspace and then up-projecting back using the same random projection matrix. Galore~\citep{Galore} follows a similar spirit but derives the projection matrix through the Singular Value Decomposition (SVD) of the gradients. \citet{fira} extends Galore by incorporating the residual error between the full-rank gradient and its low-rank approximation, effectively simulating full-rank updates. APOLLO~\citep{appollo} approximates
channel-wise learning-rate scaling through an auxiliary low-rank optimizer state derived from random projections. However, existing LoRP approaches typically treat each row of the gradient matrix as the default projection unit, leaving the influence of varying Projection Granularity unexplored.
\begin{figure*}[t]
    \centering
    \includegraphics[width=1.0\linewidth]{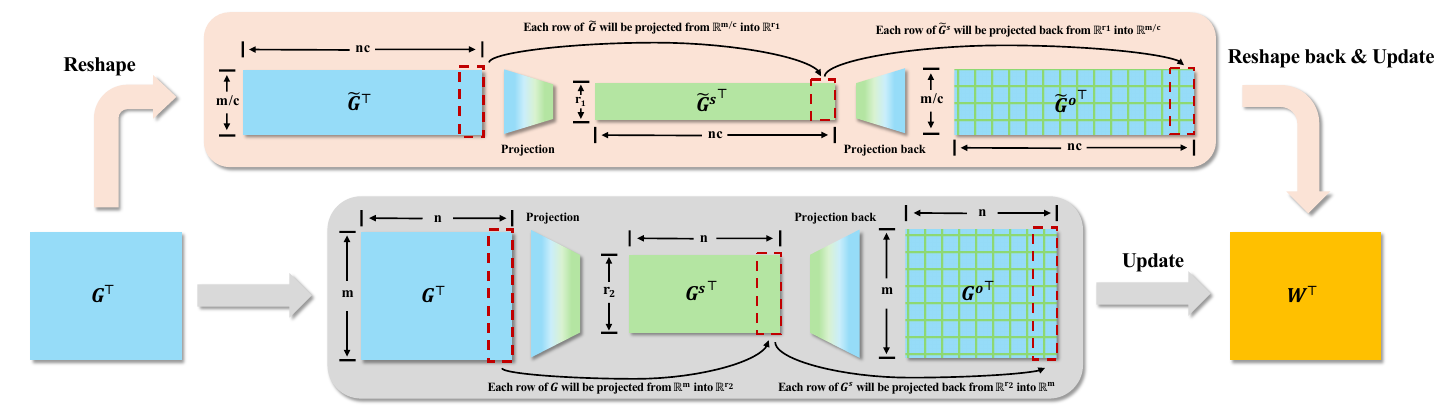} 
    \caption{
    Overview of VLoRP versus standard LoRP.
    \textit{Bottom (gray)} : In ordinary LoRP, the gradient matrix $G$ is directly projected row‐by‐row from $\mathbb{R}^{n\times m}$ into $\mathbb{R}^{n\times r}$ and stored as $G^{s}$.
    \textit{Top (pale beige)}: In contrast, VLoRP reshapes the original gradient matrix $G$ first to adjust the granularity of projection (from $\mathbb{R}^{n\times m}$ to $\mathbb{R}^{nc \times\frac{m}{c}}$), and during the update, the project-backed gradient $\tilde{G}^o$ would be reshaped back into $\mathbb{R}^{n\times m}$ to update the parameter $W\in \mathbb{R}^{n\times m}$.
    }
    \label{fig:VLoRP}
\end{figure*}

\textbf{Forward Gradient (FG)} \quad 
Forward Gradient (FG) is a widely used stochastic approximation technique for gradient estimation, particularly in scenarios where the direct computation of gradients is infeasible or expensive~\citep{Wengert64, SilverGDHH22, baydin22, RenKLH23}. Instead of computing exact gradients, FG estimates them by taking directional derivatives along multiple random perturbation directions.
Formally, for a differentiable model with parameters 
$\theta\in\RR^d$, FG approximates the gradient of the objective function 
$\cL$ as: 
\begin{equation} 
\begin{aligned} 
\label{eq:forward_gradient} 
\nabla_{\theta} \mathcal{L}(\theta) \approx \frac{1}{N} \sum_{i=1}^{N} \left(\nabla_\theta \mathcal{L}(\theta)^{\top}z_i  \right) z_i, 
\end{aligned} 
\end{equation} 
where \(N\) is the number of random perturbation directions, and $z_i$ denotes the i.i.d. sampled random vector. 

More broadly, FG belongs to the family of stochastic approximation methods~\cite{robbins1951stochastic,nevel1976stochastic, SA92}, which iteratively estimate function properties based on noisy or partial observations. These methods are widely used in optimization and root-finding problems where exact computations are impractical. 

For brevity, additional related work is deferred to \cref{appendix:related_works}, and the notation list is provided in \cref{notations}.

\section{Exploring Various Granularities of Low-Rank Gradient Projection}
\label{sec:3}
In this section, we provide a comprehensive understanding of LoRP methods and introduce VLoRP, which leverages the concept of Projection Granularity to optimize gradient projection. In Section~\ref{sec:Understanding LoRPs through the Lens of Stochastic Approximation}, we reinterpret LoRP through the lens of stochastic approximation. In Section~\ref{sec:Various-Grained Low-Rank Projection of Gradient}, we define Projection Granularity as a new degree of freedom in VLoRP, enabling a subtler control of the trade-off between memory efficiency and performance.
Next, in Section~\ref{sec:Projections with Finer Granularity Fairly Demonstrate Superior Efficacy}, we show that finer Projection Granularity under a fixed ``Memory Budget" improves performance, highlighting its importance over rank. Finally, in Section~\ref{sec:Theoretical Guarantee of VLoRP}, we prove an 
$O(1/T)$ convergence rate for VLoRP with SGD, with theoretical proofs provided in \cref{appendix: theorem_proofs}.
\subsection{Understanding LoRPs through the Lens of Stochastic Approximation}
\label{sec:Understanding LoRPs through the Lens of Stochastic Approximation} 
Given the gradient matrix \( G \in \mathbb{R}^{n \times m} \) and the projection matrix $P\in \mathbb{R}^{m\times r}$, LoRP approaches project the gradient into a low-dimensional subspace spanned by columns of projection matrix $P$, and project it back to the original space when gradient information is required for parameter update:
\begin{equation}
\label{eq:proj_projback}
    G^s := GP, \quad G^{o} := \gamma G^sP^{\top} = \gamma GPP^{\top},
\end{equation}
where we use $G^s$ and $G^o$ to represent the projected gradient in the subspace and the project-backed rank-$r$ approximation of the original gradient respectively, and $\gamma$ denotes the scaling coefficient and usually set as $1$. In practice, only the matrix \( G^s \in \mathbb{R}^{n \times r} \) needs to be stored with  \( r \ll \min\{m, n\} \). Particularly, if each element of $P$ is i.i.d. sampled from Gaussian distribution $\cN(0, \frac{1}{r})$, then $G^o$ can be reformulated into
\begin{equation}
\begin{aligned}
G^o = \frac{1}{r}\sum\limits_{j=1}^{r}G v_j v_j\top 
=\left(
  \begin{array}{c}
    \frac{1}{r}\sum\limits_{j=1}^r \left(G_{1,:}\cdot v_j \right) v^{\top}_j \\
    \vdots \\
    \frac{1}{r}\sum\limits_{j=1}^r \left(G_{n, :}\cdot v_j\right) v_j^\top \\
  \end{array}
\right),
\label{eq:flora2FG}
\end{aligned}
\end{equation}
where $v_j\sim \mathcal{N}\left(0, I_m\right)$ is the $j$-th column vector of $\sqrt{r}P$, and $G_{i, :}\in\mathbb{R}^{1\times m}$ is the $i$-th row of the gradient matrix $G$. 
\eqref{eq:flora2FG} highlights that: (a) \( G^o \) can be represented as an average over \( r \) rank-one estimation, each involving one random direction \( v_j \); (b) Each row \( \frac{1}{r}\sum_{j=1}^r (v_j^\top G_{i, :}^\top) v_j^\top \) aligns with the formulation of the averaged forward gradients as in~\eqref{eq:forward_gradient}. Thus, we have the following observation: 
\begin{observation}\label{obs:lora_rs}
\textit{Equation \eqref{eq:flora2FG} indicates that for each parameter matrix, LoRP can be interpreted as a row-wise application of forward gradient estimation with \( r \) samples. }
\end{observation}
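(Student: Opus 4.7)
The plan is to verify \eqref{eq:flora2FG} by a direct column-wise decomposition of $P$ followed by a row-wise read-off, and then to match the resulting expression term-by-term against the forward-gradient template in \eqref{eq:forward_gradient}. Since Observation~\ref{obs:lora_rs} is a structural statement rather than a quantitative convergence claim, the argument is essentially algebraic; the only probabilistic content is the fact that the columns of $\sqrt{r}P$ are i.i.d.\ standard Gaussian vectors, which follows from the assumption that each entry of $P$ is drawn i.i.d.\ from $\mathcal{N}(0,1/r)$.

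First I would write $\sqrt{r}P = [v_1,\ldots,v_r]$ with $v_j \sim \mathcal{N}(0,I_m)$ and expand $PP^\top = \frac{1}{r}\sum_{j=1}^r v_j v_j^\top$, so that $G^o = GPP^\top$ immediately becomes $\frac{1}{r}\sum_j G v_j v_j^\top$, which is the first equality in \eqref{eq:flora2FG}. Next I would take the $i$-th row of both sides, using $(Gv_j v_j^\top)_{i,:} = (G_{i,:}\cdot v_j)\,v_j^\top$, to obtain the row-wise display in \eqref{eq:flora2FG}.

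Finally, to justify the \emph{interpretation} as a forward gradient, I would place \eqref{eq:forward_gradient} next to the per-row expression and align their roles: the target $\nabla_\theta \mathcal{L}(\theta)$ corresponds to $G_{i,:}^\top$, the perturbation directions $z_i$ to $v_j$, and the sample count $N$ to $r$. Up to a transpose arising from the row-vector convention, the two formulas coincide. For completeness I would record the identity $\mathbb{E}[v_j v_j^\top]=I_m$, which yields unbiasedness of each row-wise estimator and confirms that ``forward-gradient estimation with $r$ samples'' is the appropriate name.

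The main obstacle, such as it is, is purely notational: tracking the orientation of $G_{i,:}$ (a row) against $v_j$ (a column) and reconciling the $1/\sqrt{r}$ absorbed into $P$ with the $1/r$ averaging that appears after expansion. There is no analytic difficulty; nothing beyond elementary linear algebra and the second moment of a standard Gaussian is required, and the content of the observation is really a change of viewpoint on \eqref{eq:proj_projback} rather than a new inequality.
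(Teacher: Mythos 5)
Your proposal is correct and follows essentially the same route as the paper: expand $PP^\top = \tfrac{1}{r}\sum_{j=1}^r v_j v_j^\top$ using the Gaussian columns of $\sqrt{r}P$, read off each row of $G^o = GPP^\top$ to obtain the per-row sum $\tfrac{1}{r}\sum_j (G_{i,:}\cdot v_j)v_j^\top$, and identify this with the forward-gradient formula \eqref{eq:forward_gradient} by matching $G_{i,:}^\top \leftrightarrow \nabla_\theta\mathcal{L}$, $v_j \leftrightarrow z_i$, and $r \leftrightarrow N$. The extra remark on $\mathbb{E}[v_jv_j^\top]=I_m$ is not needed for the observation itself (the paper defers bias and variance to Proposition~\ref{thm:error_bound}), but it is consistent and harmless.
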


In general, when training an LLM parametrized by \(\theta\) with an objective function \(\mathcal{L}\), the stochastic approximation of the gradient can be applied at different levels. Consider two extreme cases: 
(1) We can flatten the entire parameter set \(\theta\) into a single vector \(\theta' \in \mathbb{R}^D\), where \(D\) is the total number of parameters, and estimate the gradient using a single random vector \(v \in \mathbb{R}^D\). This approach aligns with the spirit of the MeZO algorithm~\cite{mezo}\footnote{MeZO uses the central difference to estimate the JVP in \eqref{eq:forward_gradient}.}; 
(2) Alternatively, the estimation can be applied element-wise: for each 1-dimensional parameter \(\xi \in \theta\), the coordinate partial derivative \(\nabla_{\xi}\mathcal{L}\) can be approximated as \(\frac{1}{r'}\sum_{i=1}^{r'} v_i \nabla_{\xi} \mathcal{L} v_i\), where \(v_i \sim \mathcal{N}(0, 1)\). Concatenating all such approximations yields an estimate of the full gradient matrix \(G\), resembling the coordinate gradient estimation (CGE) approach~\citep{deepzero}.

These examples illustrate that stochastic approximation can be applied to parameters of varying sizes, leading to different methods. Naturally, according to \cref{obs:lora_rs}, the Projection Granularity, defined by the row size of $G$, can also be adjusted in LoRPs. This insight motivates our exploration of low-rank gradient projections with varying granularities.

\subsection{Various-Grained Low-Rank Projection of Gradient}
\label{sec:Various-Grained Low-Rank Projection of Gradient}
In this section, we present the details of our framework, termed \textbf{VLoRP}, which introduces a novel degree of freedom—namely, the Projection Granularity—beyond the conventional hyperparameter rank to balance memory efficiency and training performance in LoRP approaches.

The core idea is straightforward: because gradient projection is typically applied row-wisely, one can reshape the gradient matrix to modify the row size and correspondingly adjust the shape of the projection matrix generated, leading to the adjustment of the Projection Granularity naturally. Concretely, given an LLM, we introduce the \textbf{granularity factor} \( c \), which can reshape any gradient \( G \in \mathbb{R}^{n \times m} \) into \( \tilde{G} \in \mathbb{R}^{nc \times (m/c)} \). Analogous to the rank \( r \), which globally sets the projection rank for all matrix-shaped parameters, \( c \) similarly serves as a global hyperparameter that controls the granularity of projection. Then, we 
formally have\footnote{with a little abuse of notations, $G^o$ in \eqref{eq:proj_projback} is now redefined in \eqref{eq:RDP}, as they both represent the rank-$r$ estimation of $G$.}:
\begin{equation}
\label{eq:RDP}
\begin{aligned}
\tilde{G}^s &:= \underbrace{\operatorname{Reshape}\left( G, [nc, \tfrac{m}{c}] \right)}_{\text{denoted as } \tilde{G}} \tilde{P},\\
G^o &:= \operatorname{Reshape}\Bigl(\underbrace{\tilde{G}^s\tilde{P}^{\top}}_{\text{denoted as }\tilde{G}^o},\,[n, m]\Bigr),
\end{aligned}
\end{equation}
where the projection matrix \( \tilde{P} \) is of shape \( \tfrac{m}{c} \times r \), with each element i.i.d. sampled from $\mathcal{N}(0,\frac1r)$. Under this setting, \eqref{eq:proj_projback} emerges as a special case with \( c = 1 \). Decreasing \( c < 1 \) horizontally compresses \( G \), thereby coarsening the Projection Granularity, whereas increasing \( c > 1 \) does the opposite. In practice, \( c \) is chosen to be a power of two for implementation convenience, and both \( \tfrac{m}{c} \) and \( nc \) must be integers.

The overall framework is illustrated on the left of \cref{fig:VLoRP}: at each iteration, VLoRP first reshapes the original gradient matrix \( G \) (from \( \mathbb{R}^{n\times m} \) to \( \mathbb{R}^{nc \times (m/c)} \)) to adjust the Projection Granularity. The reshaped \(\tilde{G}\) is then projected into the subspace to obtain \(\tilde{G}^s\) which needs to be stored. During the parameter update, $\tilde{G}^s$ will be projected back to the original space to obtain \(\tilde{G}^o\), after which we reshape \(\tilde{G}^o\) back to update the parameter. 
Notably, in practice, the only but important difference between VLoRP and other LoRP methods is the pair of reshaping operations, underscoring its simplicity of implementation.
\subsection{Benefits of Finer Projection Granularity}
\label{sec:Projections with Finer Granularity Fairly Demonstrate Superior Efficacy}
Fundamentally, VLoRP introduces a novel degree of freedom that extends beyond the rank parameter in LoRP approaches. This raises a critical question: \textit{What level of the Projection Granularity is optimal for gradient projection?}

To address this question, it is essential to establish a fair basis for comparison first: on the one hand, for a fixed rank \( r \), employing a finer-grained projection typically improves performance but increases memory requirements; on the other hand, for a fixed granularity factor \( c \), increasing the rank similarly enhances performance while incurring additional memory costs. We thereby denote a specific pair of $(c, r)$ as a ``configuration'' of VLoRP, and define the ``Memory Budget'' \(\mathcal{M}\) as the product of \( c \) and \( r \)---for configurations sharing the same $\mathcal{M}$, the size of $\tilde{G}^s$, which is needed to be stored, is the same. Based on this, we claim that \textbf{for all configurations sharing the same memory budget, \textit{i.e.}, \( c_ir_i = c_jr_j = \mathcal{M}, \forall i, j \), opting for a finer-grained projection (larger \( c \)) is generally preferable, despite the associated reduction in rank (\( r = \frac{\mathcal{M}}{c}\))}. There are several immediate advantages: 


\textbf{Computational Efficiency}\quad 
Given the granularity factor $c$, the memory budget $\mathcal{M}$ and the reshaped gradient $\tilde{G}\in \mathbb{R}^{nc\times (m/c)}$
, the gradient projection operation $\tilde{G} \tilde{P}$ leads to a computational complexity $O\left(\frac{nm\mathcal{M}}{c}\right)$.
By choosing a finer granularity  (larger \( c \)), we can reduce the overall FLOPs involved.

\textbf{Generation Efficiency of Projection Matrix} \quad Obviously, the efficiency of this generation process depends on the dimensions of the projection matrix $\tilde{P}$ of the shape\(\tfrac{m}{c} \times \tfrac{\mathcal{M}}{c}\). As 
$c$ increases, the projection granularity becomes finer, leading to a quadratic reduction in the size of the generated matrix and hence more efficient generation.

\textbf{Numerical Error}  \quad
As $c$ increases, the numerical accuracy of the random projection tends to improve. This can be attributed to the interplay between matrix dimensions and floating-point arithmetic properties. A larger $c$ reduces the number of columns in $\tilde{G}$, thereby decreasing the number of floating-point operations per dot product in matrix multiplications. Consequently, this helps mitigate the accumulation of numerical errors, which is particularly important in low-precision training scenarios such as float16 or bfloat16, where limited mantissa precision can lead to instability. We provide an analytical experiment in \cref{appendix:numerical_error_testing} to support this observation. Furthermore, this numerical stability advantage is further amplified when using gradient accumulation or Adam-based optimizers, as these methods involve iteratively accumulating optimization states.

Besides, our experiments in \cref{sec:experiments} demonstrate that under the same memory budget 
$\mathcal{M}$, finer-grained projections can consistently outperform coarser counterpart, suggesting that $c$ play a more critical factor than rank $r$.

\subsection{Theoretical Guarantee}
\label{sec:Theoretical Guarantee of VLoRP}
Incorporating Projection Granularity into gradient estimation offers clear benefits, but it is crucial to ensure that finer-grained projections do not compromise the convergence of model training. This section provides theoretical guarantees that under a fixed memory budget \(\mathcal{M}\), varying the granularity factor \(c\) and rank \(r\) does not significantly affect the mean and variance of one-step gradient estimation or the overall convergence rate.

We consider a loss function \(\mathcal{L}: \mathbb{R}^{n \times m} \to \mathbb{R}\) defined over matrix parameters \(W \in \mathbb{R}^{n \times m}\). Fix a memory budget \(\mathcal{M}\), a granularity factor \(c\), rank \(r\), and \(G = \nabla_{W} \mathcal{L}(W)\). 
Throughout this paper, we adopt the Frobenius norm and inner product as the primary metrics for matrices and vectors.
\begin{proposition}
\label{thm:error_bound}
The gradient estimator \(G^o \in \RR^{n\times m}\) satisfies the following properties:
$$
\mathbb{E}[G^o] = G,  \quad
\mathbb{E}\|G^o - G\|^2 = \frac{m + c}{\mathcal{M}} \|G\|^2.
$$
\end{proposition}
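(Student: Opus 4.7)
The plan is to carry out all computations on the reshaped matrix $\tilde{G}\in\RR^{nc\times(m/c)}$ and then transport the result back, since the reshape in \eqref{eq:RDP} is a bijection that preserves both expectation and Frobenius norm. Set $d:=m/c$ and note $r=\mathcal{M}/c$. For unbiasedness, a direct entrywise calculation using $\mathbb{E}[\tilde{P}_{ab}\tilde{P}_{cd}]=\delta_{ac}\delta_{bd}/r$ gives $\mathbb{E}[\tilde{P}\tilde{P}^{\top}]=I_d$, so $\mathbb{E}[\tilde{G}^o]=\tilde{G}$ and hence $\mathbb{E}[G^o]=G$.

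For the second moment, I would introduce the noise matrix $A:=\tilde{P}\tilde{P}^{\top}-I_d$ so that $\tilde{G}^o-\tilde{G}=\tilde{G}A$. Because the reshape preserves the Frobenius norm, the cyclic property of trace gives
$$
\mathbb{E}\|G^o-G\|^2 \;=\; \mathbb{E}\|\tilde{G}A\|^2 \;=\; \Tr\!\bigl(\tilde{G}^{\top}\tilde{G}\,\mathbb{E}[AA^{\top}]\bigr),
$$
and the cross-terms in the expansion of $AA^{\top}$ telescope via $\mathbb{E}[\tilde{P}\tilde{P}^{\top}]=I_d$, reducing everything to computing $\mathbb{E}[(\tilde{P}\tilde{P}^{\top})^2]$ and subtracting $I_d$.

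The remaining step is the fourth-moment evaluation
$$
\mathbb{E}[(\tilde{P}\tilde{P}^{\top})^{2}]_{ij} \;=\; \sum_{k=1}^{d}\sum_{l,p=1}^{r}\mathbb{E}\bigl[\tilde{P}_{il}\tilde{P}_{kl}\tilde{P}_{kp}\tilde{P}_{jp}\bigr],
$$
which I would attack with Isserlis' theorem: enumerate the three Gaussian pairings of the four factors, read off the Kronecker deltas, and sum over the free indices (noting that $k$ ranges over $[d]$ while $l,p$ range over $[r]$). The three pairings contribute $\delta_{ij}$, $\delta_{ij}/r$, and $d\,\delta_{ij}/r$ respectively, yielding $\mathbb{E}[(\tilde{P}\tilde{P}^{\top})^2]=\bigl(1+\tfrac{d+1}{r}\bigr)I_d$ and thus $\mathbb{E}[AA^{\top}]=\tfrac{d+1}{r}I_d$. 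Plugging back and using $\Tr(\tilde{G}^{\top}\tilde{G})=\|G\|^2$ with $d=m/c$, $r=\mathcal{M}/c$ gives the claimed $\tfrac{m+c}{\mathcal{M}}\|G\|^{2}$.

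The main obstacle is the combinatorial bookkeeping in the Wick expansion: three pairings must be tracked, each producing a different pattern of Kronecker deltas on indices with different ranges (matrix-row indices in $[d]$ versus sample indices in $[r]$), and one must verify that the scalar combinations collapse cleanly so that $\mathbb{E}[AA^{\top}]$ is a multiple of the identity. Everything else — the reduction to $\mathbb{E}[AA^{\top}]$, the reshape invariance, and the substitution $r=\mathcal{M}/c$ — is essentially bookkeeping once that computation is in hand.
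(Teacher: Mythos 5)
Your proposal is correct, and all the arithmetic checks out: the three Wick pairings of $\mathbb{E}[\tilde{P}_{il}\tilde{P}_{kl}\tilde{P}_{kp}\tilde{P}_{jp}]$ do contribute $\delta_{ij}$, $\delta_{ij}/r$, and $d\,\delta_{ij}/r$ after summing over $k\in[d]$ and $l,p\in[r]$, giving $\mathbb{E}[(\tilde{P}\tilde{P}^{\top})^2]=\bigl(1+\tfrac{d+1}{r}\bigr)I_d$, hence $\mathbb{E}[AA^{\top}]=\tfrac{d+1}{r}I_d$ and $\mathbb{E}\|\tilde{G}A\|^2=\tfrac{d+1}{r}\|\tilde{G}\|^2=\tfrac{m+c}{\mathcal{M}}\|G\|^2$. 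Your route is genuinely different in its decomposition from the paper's. The paper first isolates a scalar lemma (Lemma~\ref{lemma:error_bound_fg_appendix}) on the mean-squared error of the one-row forward-gradient estimator $\hat g = \tfrac1b\sum_i g v_i v_i^{\top}$, obtaining $\mathbb{E}\|\hat g - g\|^2 = \tfrac{N+1}{b}\|g\|^2$ via moment identities for a single Gaussian vector, and then applies it row-by-row to $\tilde{G}$ (each row of dimension $m/c$, with $r$ samples) and sums; this matches the paper's conceptual framing in Observation~\ref{obs:lora_rs} of LoRP as row-wise forward-gradient estimation. You instead stay at the matrix level: write the error as $\tilde{G}A$ with $A=\tilde{P}\tilde{P}^{\top}-I_d$, reduce $\mathbb{E}\|\tilde{G}A\|^2$ to $\Tr(\tilde{G}^{\top}\tilde{G}\,\mathbb{E}[A^2])$ by cyclicity, and compute $\mathbb{E}[(\tilde{P}\tilde{P}^{\top})^2]$ directly by Isserlis. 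Both evaluate the same Gaussian fourth moments, but your version collapses the row-wise bookkeeping into one trace identity, at the cost of the mildly heavier Wick enumeration you flag; the paper's version is more modular and ties directly to the forward-gradient interpretation it uses elsewhere. Either is a complete proof of Proposition~\ref{thm:error_bound}.
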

Proposition~\ref{thm:error_bound} demonstrates that the estimator \(G^o\) is unbiased and its variance is bounded by \(O\left(\frac{m + c}{\mathcal{M}}\right)\). For LLMs, where the granularity factor \(c\) is significantly smaller than the parameter dimension \(m\), the effect of altering \(c\) on gradient approximation is minimal under a fixed memory budget \(\mathcal{M}\).
The unbiasedness and bounded variance of \(G^o\) lead to the following convergence guarantee:
\begin{theorem}\label{thm:RP_convergence}
Let \(\mathcal{L}\) be an \(L\)-smooth function with respect to the matrix-shaped parameter \(W\).
Assume the parameter updates are given by $W_{t+1} = W_t - \eta \,G_t^o$, where the step size is defined as $\eta = \tfrac{\mathcal{M}}{(m + c + \mathcal{M})L} := C$.
Then, for any \(T \geq 1\):
\[
\frac{1}{T} \sum_{t=0}^{T-1} \mathbb{E}\bigl[\|G_t\|^2\bigr] \leq \frac{2C}{T}\bigl(\mathcal{L}(W_0) - \mathcal{L}(W^*)\bigr),
\]
where \(W^*\) is a global minimizer of \(\mathcal{L}\).
\end{theorem}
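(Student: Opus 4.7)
The plan is to follow the standard descent-lemma argument for stochastic first-order methods, specialized via the mean and variance bounds in Proposition~\ref{thm:error_bound}. Concretely, I would start from the $L$-smoothness of $\mathcal{L}$, which yields the one-step inequality
\begin{equation*}
\mathcal{L}(W_{t+1}) \;\le\; \mathcal{L}(W_t) + \langle G_t, W_{t+1}-W_t\rangle + \tfrac{L}{2}\|W_{t+1}-W_t\|^2.
\end{equation*}
Substituting the VLoRP--SGD update $W_{t+1}-W_t = -\eta G_t^o$ turns the right-hand side into $\mathcal{L}(W_t) - \eta \langle G_t, G_t^o\rangle + \tfrac{L\eta^2}{2}\|G_t^o\|^2$.

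Next I would take conditional expectation at step $t$ (with $W_t$ fixed) and invoke Proposition~\ref{thm:error_bound}. Unbiasedness gives $\mathbb{E}\langle G_t, G_t^o\rangle = \|G_t\|^2$, while combining the mean and variance bounds yields
\begin{equation*}
\mathbb{E}\|G_t^o\|^2 \;=\; \|G_t\|^2 + \mathbb{E}\|G_t^o-G_t\|^2 \;=\; \tfrac{m+c+\mathcal{M}}{\mathcal{M}}\|G_t\|^2.
\end{equation*}
Plugging this in collapses the descent inequality to
\begin{equation*}
\mathbb{E}[\mathcal{L}(W_{t+1})\mid W_t] \;\le\; \mathcal{L}(W_t) \;-\; \eta\Bigl(1-\tfrac{L\eta(m+c+\mathcal{M})}{2\mathcal{M}}\Bigr)\|G_t\|^2.
\end{equation*}
The prescribed step size $\eta = C = \mathcal{M}/((m+c+\mathcal{M})L)$ is chosen precisely so that the bracketed factor equals $1/2$, reducing the recursion to $\mathbb{E}[\mathcal{L}(W_{t+1})] \le \mathbb{E}[\mathcal{L}(W_t)] - \tfrac{C}{2}\mathbb{E}\|G_t\|^2$.

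Finally, I would telescope from $t=0$ to $t=T-1$, use $\mathbb{E}[\mathcal{L}(W_T)] \ge \mathcal{L}(W^*)$, and divide by $T$ (and by $C/2$) to arrive at the stated $O(1/T)$ bound. I do not anticipate a serious obstacle: the variance estimate from Proposition~\ref{thm:error_bound} is already the essential new ingredient, so the only mild subtlety is bookkeeping the constant so that the tuned step size $\eta = C$ exactly cancels the second-order term and leaves the clean coefficient $\tfrac{2}{C}$ (matching the form $\tfrac{2C}{T}(\mathcal{L}(W_0)-\mathcal{L}(W^*))$ as stated in the theorem, up to the interpretation of $C$). The rest is standard telescoping.
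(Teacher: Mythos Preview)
Your proposal is correct and follows essentially the same route as the paper: apply $L$-smoothness, substitute the update, take conditional expectation using the unbiasedness and variance from Proposition~\ref{thm:error_bound} to get the one-step descent inequality with coefficient $\eta - \tfrac{(m+c+\mathcal{M})L\eta^2}{2\mathcal{M}}$, then plug in the prescribed $\eta$ and telescope. Your caveat about the constant is also well placed---the paper's own derivation arrives at $\tfrac{2(m+c+\mathcal{M})L}{\mathcal{M}T} = \tfrac{2}{CT}$, so the ``$\tfrac{2C}{T}$'' in the statement is a typographical slip rather than anything substantive.
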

Theorem~\ref{thm:RP_convergence} confirms that using random-projection-based gradient estimation with VLoRP in conjunction with SGD achieves an \(O(1/T)\) convergence rate, regardless of the granularity factor \(c\). 

\section{Adaptive Memory-Efficient Optimization}
\label{sec:4}
In this section, we first investigate potential Adam-based optimization schemes for VLoRP, then introduce a memory-efficient variant, \textbf{ProjFactor}, for the superior scheme, and finally provide theoretical convergence proof for it. We enable gradient accumulation~\cite{wang13, SmithKYL18} for the projected gradient by default, which means at each update stage, we can only access $\tilde{\bm{G}}^{s} = \sum_{i=1}^{K} \tilde{G}^s_i / K$, where \(K\) is the number of accumulation substeps.

\begin{figure*}[h]
    \centering
    \includegraphics[width=1.0\linewidth]{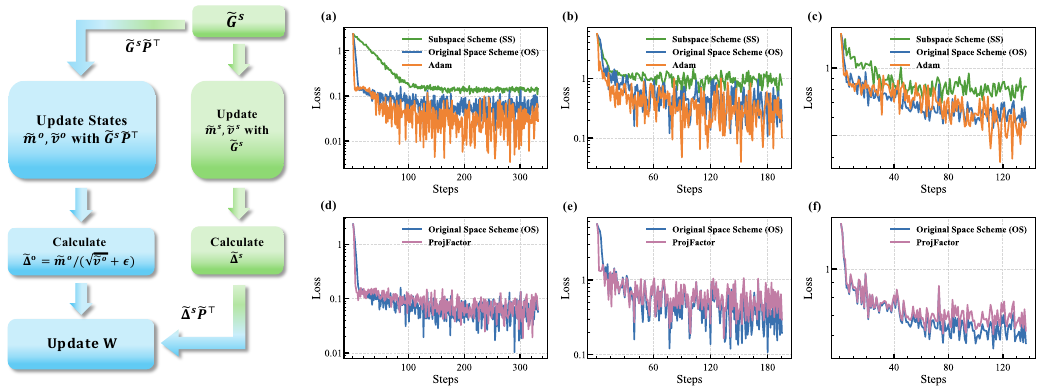} 
    \caption{\textbf{Left:} Schematic illustration of the Subspace Scheme (SS, green) operating in a learned subspace, and the Original Scheme (OS, blue) operating in the original space. 
    \textbf{Right (top row, panels a--c):} Fine-tuning loss curves of SS (green), OS (blue), and Adam (orange) on three tasks, showing that OS outperforms SS by a large margin while has a comparable performance with Adam.
    \textbf{Right (bottom row, panels d--f):} Comparison of OS (blue) and its approximate algorithm ProjFactor (purple), indicating that ProjFactor closely approximates the dynamic of OS.
    Specifically, columns (a) and (d) test on the Commonsense Reasoning task, columns (b) and (e) test on the MMLU, and columns (c) and (f) test on the GSM8K.
    }
    \label{fig:subspace_os_scheme}
\end{figure*}

\textbf{Optimization Schemes: SS vs. OS} \quad
Broadly, with access only to $\tilde{\bm{G}}^{s}$, there are two typical schemes for storing the optimization states of Adam and performing the associated updates for VLoRP, as illustrated on the left of \cref{fig:subspace_os_scheme}: (1) \textbf{Subspace Scheme (SS)} retains the optimization states $m^{s}$ and $v^{s}$ and computes the adapted gradient $\tilde{\Delta}^s_t = \tilde{m}^s_{t} / \sqrt{\tilde{v}^s_{t}}$ within the subspace, while (2) \textbf{Original Space Scheme (OS)} projects $\tilde{\bm{G}}^{s}$ back to the original space, where the optimization states $\tilde{m}^o$ and $\tilde{v}^o$ are stored and the adapted gradient is computed directly. Mathematically, the update rules for both schemes are defined as follows\footnote{\(\odot 2\) denotes element-wise squaring.}:
\begin{equation}
\begin{aligned}
\textbf{SS:} \quad
\tilde{m}^s_{t} &= \beta_1 \tilde{m}^s_{t-1} + (1 - \beta_1) \tilde{\bm{G}}^s_{t}, \\
\tilde{v}^s_{t} &= \beta_2 \tilde{v}^s_{t-1} + (1 - \beta_2) (\tilde{\bm{G}}^s_{t})^{\odot 2}, \\
\tilde{\Delta}^s_t &= \tilde{m}^s_{t} /\sqrt{\tilde{v}^s_{t}}, \\
W_{t} &= W_{t-1} - \eta \operatorname{Reshape}\left(\tilde{\Delta}^s_t\tilde{P}^{\top}, [n, m]\right); \\
\textbf{OS:} \quad 
\tilde{m}^o_{t} &= \beta_1 \tilde{m}^o_{t-1} + (1 - \beta_1) (\tilde{\bm{G}}^s_{t}\tilde{P}^{\top}), \\
\tilde{v}^o_{t} &= \beta_2 \tilde{v}^o_{t-1} + (1 - \beta_2) (\tilde{\bm{G}}^s_{t}\tilde{P}^{\top})^{\odot 2}, \\
\tilde{\Delta}^o_t &= \tilde{m}^o_{t}/\sqrt{\tilde{v}^o_{t}}, \\ 
W_{t} &= W_{t-1} - \eta \operatorname{Reshape} \left(\tilde{\Delta}^o_t, [n,m]\right).
\end{aligned}
\label{eq:ss_os_update_rules}
\end{equation}

\textbf{OS Performs Better} \quad
From \eqref{eq:ss_os_update_rules}, it can be observed that for momentum-based SGD optimizers, where the second moment $v$ is excluded, the \textbf{Subspace Scheme (SS)} and \textbf{Original Space Scheme (OS)} are equivalent, since $\tilde{m}_t^o = \tilde{m}^s_t \tilde{P}^{\top}$. However, when the nonlinear second moment is incorporated, \textbf{SS} and \textbf{OS} become different. An intuition is that the dynamics of \textbf{OS} are closer to Adam's, as (1) both \textbf{OS} and Adam adapt the gradient directly in the original space, and (2) previous works~\cite{Galore, FLoRA} have shown that the gradient of LLMs exhibits a low-rank structure, implying that $\tilde{\bm{G}} \tilde{P} \tilde{P}^\top$ is capable of preserving the primary information of $\tilde{\bm{G}}$. To substantiate it, we finetune a LLaMA2-7B on three different benchmarks and compare the loss curves of \textbf{SS}, \textbf{OS}, and the vanilla Adam. As depicted in \cref{fig:subspace_os_scheme} (right, top rows, panels a-c),  while all three schemes reduce the loss, \textbf{SS} exhibits a noticeably slower and less effective convergence compared to \textbf{OS} and Adam. In contrast, \textbf{OS} closely tracks the loss curve of Adam.


\textbf{ProjFactor: a Memory-Efficient Optimizer for VLoRP} \quad While \textbf{OS} achieves superior performance, it generally requires more memory during training compared to \textbf{SS}, as it does not reduce the memory usage of optimization states, which still occupy \(O(nm)\) space. To address this, we propose \textbf{ProjFactor} which (1) maintains $\tilde{m}^s$ instead of $\tilde{m}^o$ in the subspace and project it back to the original space when calculating $\Delta_t^o$, which is justified by the fact $\tilde{m}^o =\tilde{m}^s \tilde{P}^{\top}$; (2) follows the spirit of Adafactor~\citep{adafactor} by applying rank-$1$ decomposition on \( (\tilde{\bm{G}}^s_{t}\tilde{P}^{\top})^{\odot 2} \), which only requires the storage of two vectors, significantly reducing memory consumption while preserving effective second-moment estimates. Compared to Adafactor, which applies the rank-$1$ decomposition directly on the gradient, our method applies the rank-$r$ approximation of $\tilde{\bm{G}}$ first before the squaring and decomposition. The final algorithm is outlined in \cref{sec:alg_projfactor}. Since Adafactor can achieve performance comparable to Adam~\cite{adafactor, FLoRA}, it is reasonable to conjecture that ProjFactor can similarly match \textbf{OS}. We showcase the performance comparison between ProjFactor and \textbf{OS} in \cref{fig:subspace_os_scheme}(d-f), which aligns with our expectations.

\begin{table*}[t]
\footnotesize
\centering
\caption{Performance Comparison on Commonsense Reasoning. All models are first finetuned on the Commonsense170k~\cite{commonsense170k} dataset and then evaluated separately on 8 reasoning tasks. We set the Memory Budget $\mathcal{M} = 256$ for VLoRP, \textit{i.e.}, the product of $c$ and $r$ equal 256. For a fair comparison, we also set the rank of all other low-rank-based methods as $256$. 
}
\resizebox{1.0\textwidth}{!}{%
\begin{tabular}{l|cccccccc|c}
    \toprule
    \textbf{Methods} & \textbf{ARC\_C} & \textbf{ARC\_E} & \textbf{BoolQ} & \textbf{HellaSwag} & \textbf{OBQA} & \textbf{PIQA} & \textbf{SIQA} & \textbf{winogrande}& \textbf{Avg.} \\
    \midrule
    \textbf{Pretrain-Untuned} & 42.26 \tiny{$\pm$ 1.45} & 75.26 \tiny{$\pm$ 0.87} & 76.86 \tiny{$\pm$ 0.73} & 56.17 \tiny{$\pm$ 0.49} & 30.40 \tiny{$\pm$ 2.08} & 77.91 \tiny{$\pm$ 0.97} & 45.11 \tiny{$\pm$ 1.13} & 58.78 \tiny{$\pm$ 1.30} & 58.84 \\
    \textbf{Adam} & 47.12 \tiny{$\pm$ 1.46} & 78.55 \tiny{$\pm$ 0.83} & 82.27 \tiny{$\pm$ 0.65} & 56.08 \tiny{$\pm$ 0.49} & 33.60 \tiny{$\pm$ 2.13} & 77.45 \tiny{$\pm$ 0.96} & 52.53 \tiny{$\pm$ 1.13} & 71.69 \tiny{$\pm$ 1.25} & 62.41 \\
    \textbf{Adafactor} & 48.06 \tiny{$\pm$ 1.46} & 79.22 \tiny{$\pm$ 0.82} & 80.50 \tiny{$\pm$ 0.68} & 56.24 \tiny{$\pm$ 0.49} & 34.20 \tiny{$\pm$ 2.14} & 77.56 \tiny{$\pm$ 0.96} & 52.02 \tiny{$\pm$ 1.13} & 71.22 \tiny{$\pm$ 1.26} & 62.38 \\
    \midrule
    \textbf{LoRA$\bm{(r=256)}$} & 44.23 \tiny{$\pm$ 1.46} & 76.66 \tiny{$\pm$ 0.85} & 80.21 \tiny{$\pm$ 0.68} & 55.79 \tiny{$\pm$ 0.49} & 33.00 \tiny{$\pm$ 2.13} & 76.57 \tiny{$\pm$ 0.97} & 47.94 \tiny{$\pm$ 1.13} & 69.69 \tiny{$\pm$ 1.27} & 60.51 \\
    \textbf{Galore$\bm{(r=256)}$} & 44.13 \tiny{$\pm$ 1.45} & 76.65 \tiny{$\pm$ 0.87} & 78.23 \tiny{$\pm$ 0.72} & 57.59 \tiny{$\pm$ 0.49} & 32.00 \tiny{$\pm$ 2.09} & 77.95 \tiny{$\pm$ 0.97} & 46.01 \tiny{$\pm$ 1.13} & 69.71 \tiny{$\pm$ 1.29} & 60.28 \\
    \textbf{fira$\bm{(r=256)}$} & 44.06 \tiny{$\pm$ 1.45} & 76.63 \tiny{$\pm$ 0.87} & 78.12 \tiny{$\pm$ 0.73} & 57.69 \tiny{$\pm$ 0.49} & 32.40 \tiny{$\pm$ 2.09} & 77.78 \tiny{$\pm$ 0.97} & 46.21 \tiny{$\pm$ 1.13} & 69.89 \tiny{$\pm$ 1.29} & 60.35 \\
    \textbf{APOLLO$\bm{(r=256)}$} & 44.28 \tiny{$\pm$ 1.45} & 76.26 \tiny{$\pm$ 0.87} & 77.74 \tiny{$\pm$ 0.73} & 57.00 \tiny{$\pm$ 0.49} & 31.40 \tiny{$\pm$ 2.08} & 77.97 \tiny{$\pm$ 0.97} & 46.11 \tiny{$\pm$ 1.13} & 69.46 \tiny{$\pm$ 1.29} &  60.03 \\
    \midrule
    \textbf{VLoRP} & \\
    - \makecell[l]{\bm{$c=2^{-6},$}} \makecell[l]{\bm{$r=2^{14}$}} &  42.92 \tiny{$\pm$ 1.45} & 76.22 \tiny{$\pm$ 0.87} & 79.27 \tiny{$\pm$ 0.71} & 57.53 \tiny{$\pm$ 0.49} & 32.60 \tiny{$\pm$ 2.10} & 77.91 \tiny{$\pm$ 0.97} & 46.72 \tiny{$\pm$ 1.13} & 69.85 \tiny{$\pm$ 1.29} & 60.38 \\
    - \makecell[l]{\bm{$c=2^{-4},$}} \makecell[l]{\bm{$r=2^{12}$}}  &  43.34 \tiny{$\pm$ 1.45} & 76.26 \tiny{$\pm$ 0.87} & 79.54 \tiny{$\pm$ 0.71} & 57.58 \tiny{$\pm$ 0.49} & 32.00 \tiny{$\pm$ 2.09} & 77.64 \tiny{$\pm$ 0.97} & 46.72 \tiny{$\pm$ 1.13} & 70.01 \tiny{$\pm$ 1.29} & 60.39 \\
    - \makecell[l]{\bm{$c=2^{-2},$}} \makecell[l]{\bm{$r=2^{10}$}}  &  43.34 \tiny{$\pm$ 1.45} & 76.30 \tiny{$\pm$ 0.81} & 79.45 \tiny{$\pm$ 0.71} & 57.47 \tiny{$\pm$ 0.49} & 32.20 \tiny{$\pm$ 2.09} & 77.75 \tiny{$\pm$ 0.97} & 46.78 \tiny{$\pm$ 1.13} & 70.01 \tiny{$\pm$ 1.29} & 60.41 \\
    - \makecell[l]{\bm{$c=2^{0},$}} \makecell[l]{\bm{$r=2^{8}$}}  &  43.69 \tiny{$\pm$ 1.45} & 77.02 \tiny{$\pm$ 0.86} & 79.27 \tiny{$\pm$ 0.71} & 57.49 \tiny{$\pm$ 0.49} & 31.80 \tiny{$\pm$ 2.08} & 78.07 \tiny{$\pm$ 0.97} & 47.49 \tiny{$\pm$ 1.13} & 69.77 \tiny{$\pm$ 1.29} & 60.57 \\ 
    - \makecell[l]{\bm{$c=2^{2},$}} \makecell[l]{\bm{$r=2^{6}$}}  &  44.03 \tiny{$\pm$ 1.45} & 76.81 \tiny{$\pm$ 0.87} & 79.17 \tiny{$\pm$ 0.71} & 57.59 \tiny{$\pm$ 0.49} & 31.80 \tiny{$\pm$ 2.08} & 78.02 \tiny{$\pm$ 0.97} & 47.19 \tiny{$\pm$ 1.13} & 69.53 \tiny{$\pm$ 1.29} & 60.53 \\ 
    - \makecell[l]{\bm{$c=2^{4},$}} \makecell[l]{\bm{$r=2^{4}$}}  &  44.71 \tiny{$\pm$ 1.45} & 77.27 \tiny{$\pm$ 0.86} & 79.42 \tiny{$\pm$ 0.71} & 57.50 \tiny{$\pm$ 0.49} & 32.20 \tiny{$\pm$ 2.09} & 77.86 \tiny{$\pm$ 0.97} & 47.54 \tiny{$\pm$ 1.13} & 70.09 \tiny{$\pm$ 1.29} & 60.82 \\
    - \makecell[l]{\bm{$c=2^{6},$}} \makecell[l]{\bm{$r=2^{2}$}}  & 44.97 \tiny{$\pm$ 1.45} & 77.65 \tiny{$\pm$ 0.85} & 80.46 \tiny{$\pm$ 0.69} & 57.56 \tiny{$\pm$ 0.49} & 33.60 \tiny{$\pm$ 2.11} & 77.97 \tiny{$\pm$ 0.97} & 48.06 \tiny{$\pm$ 1.13} & 69.69 \tiny{$\pm$ 1.29} & 61.25 \\
    \rowcolor{lightblue}
    - \makecell[l]{\bm{$c=2^{8},$}} \makecell[l]{\bm{$r=2^{0}$}}  & 45.56 \tiny{$\pm$ 1.46} & 77.78 \tiny{$\pm$ 0.85} & 80.58 \tiny{$\pm$ 0.69} & 57.59 \tiny{$\pm$ 0.49} & 34.00 \tiny{$\pm$ 2.12} & 77.86 \tiny{$\pm$ 0.97} & 48.16 \tiny{$\pm$ 1.13} & 69.69 \tiny{$\pm$ 1.29} & 61.40 \\
    \bottomrule
\end{tabular}
}
\label{tab:commonsense_0shot}
\end{table*}

\textbf{Convergence Analysis} \quad
To analyze ProjFactor's convergence, we adopt the Hamiltonian descent framework, following the line of work~\cite{maddison2018hamiltonian,chen2023lion,liang2024memory,nguyen2024h}. The infinitesimal updates of Projfactor is defined as follows:
\begin{equation}\label{eq:projfactor_infinitesimal}
\begin{aligned}
&\frac{d}{dt} \tilde{m}^s_t = a(\tilde{G}^s_t-\tilde{m}^s_t); \;\hat{v}^o_t = \frac{\tilde{v}^o_{rt}\tilde{v}^o_{ct}}{\mathbf{1}_n^T \tilde{v}^o_{rt}};\\
&\frac{d}{dt}\tilde{v}^o_{rt} = b((\tilde{G}_t^o)^{\odot 2}\mathbf{1}_m-\tilde{v}^o_{rt});\\
&\frac{d}{dt}\tilde{v}^o_{ct} = b(\mathbf{1}_n^T (\tilde{G}_t^o)^{\odot 2} - \tilde{v}^o_{ct});\\
&\frac{d}{dt}W_t = \operatorname{Reshape}\left(-\tilde{m}^s_t\tilde{P}^\top  \Big /\sqrt{\hat{v}^o_t},\;[n,m]\right),
\end{aligned}
\end{equation}
where $a$, $b$ are constants.
The corresponding Lyapunov function (Hamiltonian) is defined as 
\begin{equation*}
\cH(W,\tilde{m}^s,\tilde{v}^o_{r},\tilde{v}^o_c) = \cL(W)+ \frac{1}{2a}\left\langle \tilde{m}^s, \frac{\tilde{m}^s}{\sqrt{\hat{v}^o}}\right\rangle.
\end{equation*}

Then we have the following convergence guarantee:
\begin{theorem}\label{thm:convergence}
Suppose the functions in system \eqref{eq:projfactor_infinitesimal} are continuously differentiable. Under mild assumptions, we have
\begin{enumerate}[(1)]
    \item For $(W_t, \tilde{m}^s_t, \tilde{v}_{rt}^o, \tilde{v}^o_{ct})$ satisfying \eqref{eq:projfactor_infinitesimal}, $$\frac{d}{dt}\cH(W_t, \tilde{m}^s_t, \tilde{v}_{rt}^o, \tilde{v}^o_{ct})\leq 0.$$
    \item Any bounded solution \((W_t, \tilde{m}^s_t, \tilde{v}_{rt}^o, \tilde{v}^o_{ct})_t\) of \eqref{eq:projfactor_infinitesimal} converges to a stationary point of \(\mathcal{L}(W)\) as \(t \to \infty\).
\end{enumerate}
\end{theorem}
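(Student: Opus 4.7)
The plan is to split the proof along the two items of the theorem statement. For (1), I propose to verify the Hamiltonian-descent property by direct chain-rule differentiation, following the spirit of the Lion/H-Fac analyses cited in the excerpt. A convenient first move is to work with the projected-back momentum $\tilde{m}^o := \tilde{m}^s \tilde{P}^\top$, which satisfies the reduced ODE $\frac{d}{dt}\tilde{m}^o_t = a(\tilde{G}^o_t - \tilde{m}^o_t)$ and lets me view the Hamiltonian as $\mathcal{H} = \mathcal{L}(W) + \frac{1}{2a}\langle \tilde{m}^o, \tilde{m}^o/\sqrt{\hat{v}^o}\rangle$. Treating $\tilde{G}^o_t$ as $\nabla_W \mathcal{L}(W_t)$ in the deterministic continuous-time limit (justified by Proposition \ref{thm:error_bound}, where $\tilde{G}^o$ is shown to be the unbiased rank-$r$ image of the true gradient), the chain rule produces three contributions: the $\dot W$-term $\langle \nabla_W \mathcal{L}, \dot W\rangle$, the $\dot{\tilde{m}}^o$-term $a^{-1}\langle \tilde{m}^o/\sqrt{\hat{v}^o}, \dot{\tilde{m}}^o\rangle$, and a correction coming from differentiating $1/\sqrt{\hat{v}^o}$ against $\dot{\hat{v}}^o$.

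Substituting the prescribed dynamics, the gradient pieces from the first two contributions cancel exactly, leaving the negative-definite dissipation $-\|\tilde{m}^o / (\hat{v}^o)^{1/4}\|^2$ plus the $\dot{\hat{v}}^o$-induced correction. Handling this correction is, in my view, the main obstacle, because ProjFactor replaces the plain EMA by the Adafactor-style factored estimate $\hat{v}^o_t = \tilde{v}^o_{rt}\tilde{v}^o_{ct}/(\mathbf{1}_n^\top \tilde{v}^o_{rt})$. I will expand $\frac{d}{dt}\hat{v}^o_t$ using the row/column dynamics, show inductively (from nonnegative initializations) that $\tilde{v}^o_{r}, \tilde{v}^o_c \geq 0$ along the flow, and then rearrange so the correction has the form $\langle -(\tilde{m}^o)^{\odot 2}/(\hat{v}^o)^{3/2},\, \dot{\hat{v}}^o\rangle$, where $\dot{\hat{v}}^o$ is, after substitution, a signed combination of $(\tilde{G}^o)^{\odot 2}$-aggregates and $\hat{v}^o$. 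Under the mild assumptions (nonnegative initial second-moment aggregators and $b$ small enough that the factored approximation remains dominated by the raw EMA of the squared gradient, a standard Adafactor-type condition), the correction will be non-positive, yielding $\frac{d}{dt}\mathcal{H}\leq 0$.

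For (2) I will invoke LaSalle's invariance principle. Continuity of the right-hand side of \eqref{eq:projfactor_infinitesimal} together with the boundedness hypothesis guarantees that the trajectory has a nonempty compact $\omega$-limit set $\Omega$; since $\mathcal{H}$ is non-increasing along orbits by Part (1) and bounded below by $\inf \mathcal{L}$ plus a nonnegative quadratic form, $\mathcal{H}$ is constant on $\Omega$ and $\frac{d}{dt}\mathcal{H}\equiv 0$ there. The dissipation identity then forces $\tilde{m}^o \equiv 0$ on every orbit of $\Omega$, and feeding this back through $\dot{\tilde{m}}^o = a(\tilde{G}^o - \tilde{m}^o)$ yields $\tilde{G}^o \equiv 0$. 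The last, more delicate, step is to pass from $\tilde{G}^o \equiv 0$ to $\nabla_W \mathcal{L}(W)=0$; here I will lean on Proposition \ref{thm:error_bound} together with a persistence-of-excitation assumption on the projection process $\{\tilde{P}_t\}$ (encoded in the ``mild assumptions'' of the statement), ensuring that the time-averaged image of the projection family is full rank and hence that a persistently vanishing $\tilde{G}^o$ forces the true gradient to vanish.

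The two principal difficulties I foresee are therefore (a) the sign-control of the $\dot{\hat{v}}^o$-correction arising from the Adafactor-style factoring, which is the genuinely new ingredient beyond existing Hamiltonian-descent analyses of Adam/Lion; and (b) the passage from $\tilde{G}^o \equiv 0$ to $\nabla_W \mathcal{L}=0$ in the rank-deficient regime $r\ll\min\{n,m\}$, which is what makes the full richness of the ``mild assumptions'' indispensable rather than cosmetic. Addressing both within a unified continuous-time framework will be the core technical content of the proof.
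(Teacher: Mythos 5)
Your overall plan matches the paper's: differentiate the Hamiltonian along the continuous-time flow, split into $\dot W$-, $\dot{\tilde m}$-, and $\dot{\hat v}^o$-contributions, cancel the gradient pieces to expose a negative dissipation term, control the second-moment correction, then invoke LaSalle's invariance principle for convergence. Your re-expression of the Hamiltonian through $\tilde m^o=\tilde m^s\tilde P^\top$ is a harmless (indeed tidier) reformulation of the paper's quadratic form, and your ``persistence of excitation'' condition plays exactly the role of the paper's Assumption C.2(2) (if $G_t\neq 0$ then $\tilde G_t^s\neq 0$ and $\tilde G_t^o\neq 0$). So the architecture of the argument is right.

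The gap is in the sign-control step, which you yourself flag as the crux. You claim the $\dot{\hat v}^o$-correction is non-positive under a ``$b$ small'' / nonnegativity condition, i.e.\ that after writing the correction as $-\frac{1}{4a}\bigl\langle (\tilde m^o)^{\odot 2}/(\hat v^o)^{3/2},\,\dot{\hat v}^o\bigr\rangle$ you can arrange $\dot{\hat v}^o\geq 0$. That is not achievable: $\tilde v^o_{r}$ and $\tilde v^o_{c}$ are exponential moving averages and can decrease whenever the squared-gradient aggregates drop below their current level, so $\dot{\hat v}^o$ is genuinely sign-indefinite along a trajectory, and shrinking $b$ only slows the EMA without fixing its sign. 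The paper instead upper-bounds the two correction terms by
\[
\Bigl\langle \tfrac{dR_t}{d\tilde v^o_{rt}},\tfrac{d\tilde v^o_{rt}}{dt}\Bigr\rangle\ \leq\ \tfrac{bR}{4a}\Bigl\langle \tilde m^s_t,\tfrac{\tilde m^s_t}{\sqrt{\hat v^o_t}}\Bigr\rangle,
\qquad
\Bigl\langle \tfrac{dR_t}{d\tilde v^o_{ct}},\tfrac{d\tilde v^o_{ct}}{dt}\Bigr\rangle\ \leq\ \tfrac{b}{4a}\Bigl\langle \tfrac{\tilde m^s_t}{\sqrt{\hat v^o_t}},\tilde m^s_t\Bigr\rangle,
\]
where $R$ bounds $\|\tilde G^o_t\|^2/\|\tilde v^o_{rt}\|$ (this is the missing ``mild assumption'' in your sketch), and then absorbs both positive contributions into the dissipation by requiring $4a^2\geq (R+1)b$, giving
\[
\tfrac{d}{dt}\mathcal{H}\ \leq\ -\Bigl(1-\tfrac{(R+1)b}{4a}\Bigr)\Bigl\langle \tfrac{\tilde m^s_t}{\sqrt{\hat v^o_t}},\tilde m^s_t\Bigr\rangle\ \leq\ 0.
\]
So the correct mechanism is \emph{domination} of a potentially positive correction by the dissipation under a scale separation between $a$ and $b$, not non-positivity of the correction itself; you should replace your ``$b$ small makes $\dot{\hat v}^o\geq0$'' step with the ratio bound $\|\tilde G^o\|^2/\|\tilde v^o_r\|\leq R$ and the condition $4a^2\geq (R+1)b$. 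Once that is fixed, your LaSalle argument in (2) goes through as in the paper.
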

Theorem~\ref{thm:convergence} presents a Hamiltonian interpretation of Projfactor, indicating that the Lyapunov function, namely, the ``energy'' of the system, decreases monotonically over time. In addition, it ensures that ProjFactor stabilizes at a local optimum, provided the step sizes are sufficiently small. We defer the detailed assumption and proof to Appendix~\ref{appendix:projfactor-convergence}.

\section{Experiments}
\label{sec:experiments}
In this section, we evaluate the effectiveness of VLoRP across multiple finetuning tasks, demonstrating its competitiveness with state-of-the-art baselines. More experimental studies and implementation details can be found in \cref{appendix:empirical_analysis}.

\textbf{Datasets} \quad We present a comprehensive evaluation of our proposed approaches using three benchmarks: (1) \textit{Commonsense Reasoning}, which covers 8 reasoning tasks including BoolQ~\cite{BoolQ}, PIQA~\cite{PIQA}, SIQA~\cite{SIQA}, Hellaswag~\cite{hellaswag}, WinoGrande~\cite{winogrande}, ARC-e~\cite{arc}, ARC-c~\cite{arc}, and OBQA~\cite{OBQA}; (2) The \textit{MMLU} benchmark~\citep{mmlu}, which encompasses a wide range of subjects including Humanities, STEM, Social Sciences, and Other fields; (3) Finally, the \textit{GSM8K} dataset~\cite{gsm8k}, which is a dataset of 8.5K high-quality problems of mathematics.

\textbf{Baselines} \quad We compare VLoRP with 7 state-of-the-art baselines, covering full-parameter finetuning, LoRA, and LoRP methods. Specifically, we compare with \textbf{Pretrain-Untuned}, which represents the basic performance of the pre-trained model without finetuning, \textbf{Adam}~\citep{kingma2014adam}, \textbf{Adafactor}~\citep{adafactor}, \textbf{LoRA}~\citep{lora}, \textbf{Galore}~\citep{Galore}, \textbf{fira}~\cite{fira}, and \textbf{APOLLO}~\cite{appollo}. 

\textbf{Experimental Settings} \quad We use the LLaMA2-7B model with the bfloat16 data type as the primary testbed for all methods. Each method is guaranteed that every token in the training set is encountered at least once. Gradient accumulation and activation checkpointing~\cite{Chen16} are enabled. VLoRP is optimized with ProjFactor by default.

\textbf{Results and Analysis} \quad We report the performance of all methods on: (1) commonsense reasoning tasks in \cref{tab:commonsense_0shot}, (2) the MMLU benchmark in \cref{tab:mmlu_results}, and (3) the GSM8k task in \cref{fig:GSM8k_exp}. In general, all finetuning methods yield significant performance improvements compared to the untuned model. Among the finetuning baselines, Adam and Adafactor generally achieve the highest performance across tasks. Notably, Adafactor delivers comparable even superior results to Adam, consistent with previous findings~\cite{adafactor, FLoRA}. Furthermore, the proposed VLoRP, optimized with ProjFactor, not only rivals the performance of the strongest baselines of efficient finetuning but in many cases surpasses them. On top of it, under a fixed memory budget, configurations of VLoRP with finer Projection Granularity (\textit{i.e.}, larger factor \(c\) albeit smaller $r$) tend to achieve higher average accuracy. For instance, the finest-grained VLoRP configuration \(\bigl(c=2^8, r=2^0\bigr)\) achieves the highest scores of 61.40, 55.83, and 29.42 on Commonsense Reasoning, MMLU, and GSM8k, respectively, among all tested configurations of \(c\) and \(r\) with the same memory budget. This suggests that Projection Granularity may play a more critical role than rank in balancing memory efficiency and performance.

\begin{table}[h!]
\caption{Performance Comparison on the MMLU benchmark. We also set the Memory Budget $\mathcal{M} = 256$ for VLoRP. The best-performing configurations are highlighted.}
\resizebox{1.0\linewidth}{!}{%
\begin{tabular}{l|cccc|c}
    \toprule
    \textbf{Methods} & \textbf{Hum.} & \textbf{STEM} & \textbf{S. Sci.} & \textbf{Other} & \textbf{ALL} \\
    \midrule
    \textbf{Pretrain-Untuned} & 39.54 & 34.85 & 49.14 & 47.73 & 42.53 \\
    \textbf{Adam} & 52.63 & 44.92 & 65.31 & 62.45 & 56.01 \\
    \textbf{Adafactor} & 53.01 & 46.55 & 66.13 & 56.87 & 56.80 \\
    \midrule
    \textbf{LoRA $\bm{(r=256)}$} &  50.74 & 43.61 & 60.93 & 59.73 & 53.55 \\
    \textbf{Galore $\bm{(r=256)}$} & 50.22 & 42.61 & 60.25 & 59.52 & 52.93 \\
    \textbf{fira $\bm{(r=256)}$} & 49.85 & 42.49 & 60.31 & 59.55 & 52.83 \\
    \textbf{APOLLO $\bm{(r=256)}$} & 49.12 & 41.42 & 57.71 & 56.91 & 51.13 \\
    \midrule
    \textbf{VLoRP} & \\
    - \makecell[l]{\bm{$c=2^{-6},$}} \makecell[l]{\bm{$r=2^{14}$}} & 47.06 & 39.54 & 56.33 & 56.71 & 49.65 \\
    - \makecell[l]{\bm{$c=2^{-4},$}} \makecell[l]{\bm{$r=2^{12}$}} & 49.88 & 42.50 & 58.44 & 58.69 & 52.22 \\
    - \makecell[l]{\bm{$c=2^{-2},$}} \makecell[l]{\bm{$r=2^{10}$}} & 50.04 & 41.21 & 58.74 & 58.63 & 52.01 \\
    - \makecell[l]{\bm{$c=2^{0},$}} \makecell[l]{\bm{$r=2^{8}$}} & 50.62 & 43.23 & 61.08 & 60.83 & 53.64 \\
    - \makecell[l]{\bm{$c=2^{2},$}} \makecell[l]{\bm{$r=2^{6}$}} & 50.93 & 44.31 & 61.04 & 60.51 & 53.91 \\
    - \makecell[l]{\bm{$c=2^{4},$}} \makecell[l]{\bm{$r=2^{4}$}} & 50.85 & 44.22 & 61.08 & 60.91 & 54.05 \\
    - \makecell[l]{\bm{$c=2^{6},$}} \makecell[l]{\bm{$r=2^{2}$}} & 51.94 & 44.53 & 63.53 & 62.85 & 55.42 \\
    \rowcolor{lightblue}
    - \makecell[l]{\bm{$c=2^{8},$}} \makecell[l]{\bm{$r=2^{0}$}} & 52.29 & 46.18 & 64.05 & 62.24 & 55.83 \\
    \bottomrule
\end{tabular}
}
\label{tab:mmlu_results}
\end{table}

\begin{figure}[h]
    \centering
    \includegraphics[width=1.0\linewidth]{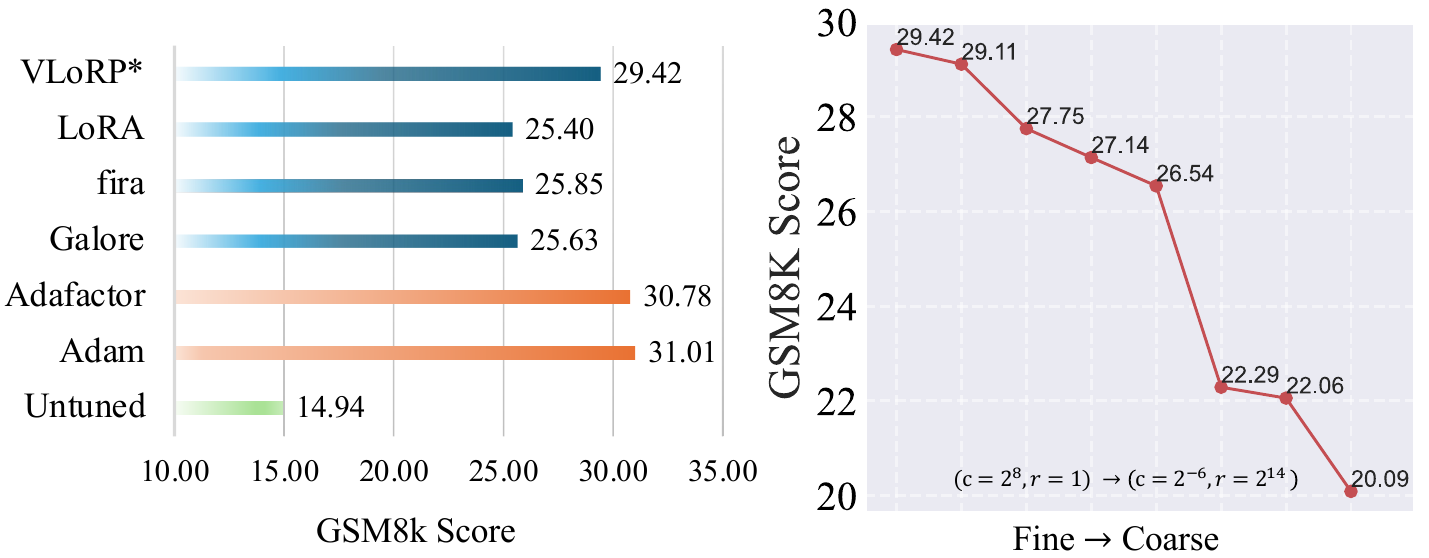} 
    \caption{\textbf{Left:} Performance comparison of different methods on GSM8K. \textbf{Right:} Performance comparison among the configurations of VLoRP with $\mathcal{M}=256$. The x-axis indicates configurations from fine to coarse (left to right).}
    \label{fig:GSM8k_exp}
\end{figure}

\begin{figure}[h]
    \centering
    \includegraphics[width=1\linewidth]{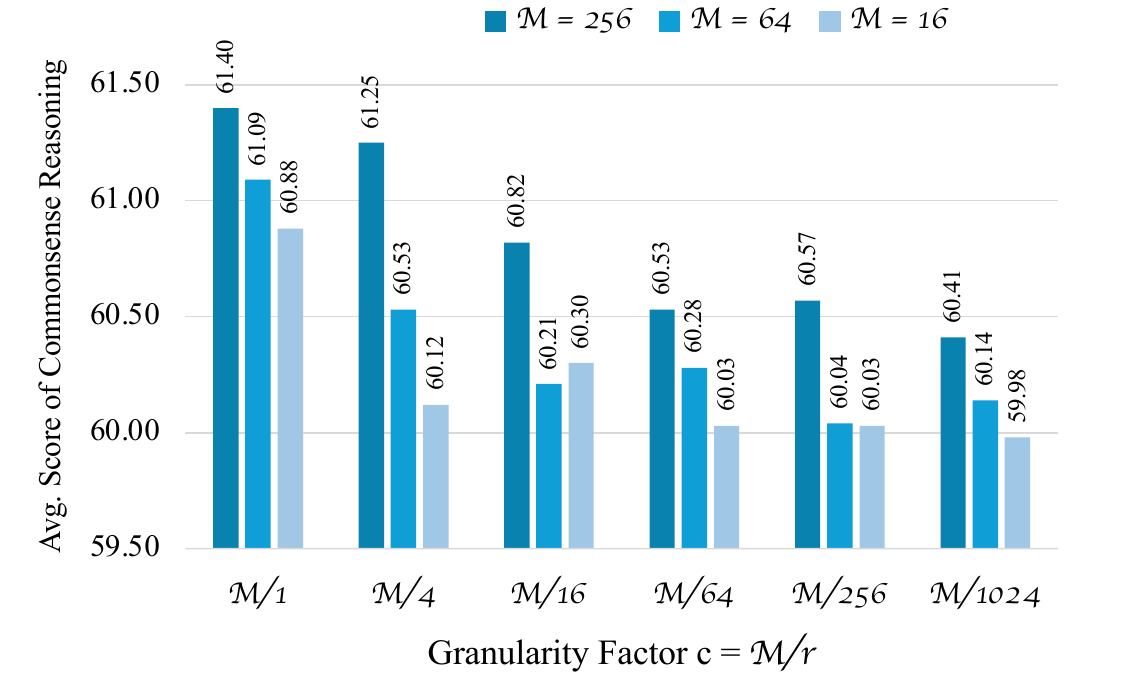} 
    \caption{Performance Evaluation for Different Projection Granularities under Varying Memory Budgets on Commonsense. $\mathcal{M}$ denotes the memory budget. All subcolumns inside the same category $\mathcal{M}/r$ share the same rank $r$.}
    \label{fig:ablation_of_memory_budget}
\end{figure}
\textbf{Different Memory Budgets}\quad Besides, in \cref{fig:ablation_of_memory_budget}, we present the performance of VLoRP under different projection granularities across varying memory budgets. 
Overall, the results indicate that configurations with finer projection granularities consistently outperform other coarser configurations, regardless of the memory budget level. Furthermore, when the rank is fixed (the three subcolumns within each column), the performance improves with finer granularities. 
\begin{figure}[h]
    \centering
    \includegraphics[width=0.62\linewidth]{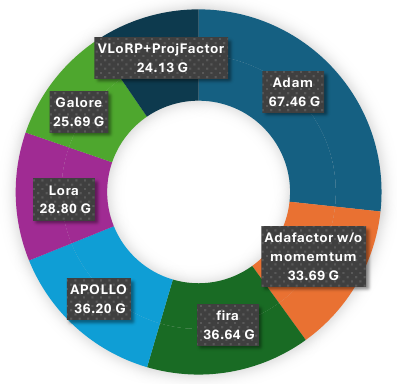} 
    \caption{GPU Memory Usage Comparison on LLaMA2-7B Model with Batch Size 16 and Max Length 1024.}
    \label{fig:memory_ring}
\end{figure}

\textbf{Memory Analysis}  \quad
We illustrate the memory usage of various methods in \cref{fig:memory_ring}, emphasizing that VLoRP achieves the most significant memory reduction compared to baseline approaches. When incorporating gradient accumulation, certain memory-efficient methods, such as fira and APOLLO, fail to achieve memory usage below Adafactor. This limitation stems from their reliance on the full-rank gradient for update computations, which primarily reduces the memory associated with optimization states but not the gradient itself.
In contrast, optimizing VLoRP with ProjFactor fundamentally reduces memory consumption by solely operating on the projected gradients. For a parameter matrix \( W \in \mathbb{R}^{n \times m} \), the total memory required for VLoRP is \( O(mn + 2n\mathcal{M} + n + m) \), where \( \mathcal{M}=c\cdot r \) is the memory budget. On the other hand, LoRA requires significantly more storage, \( O(mn + 4m\mathcal{M} + 4n\mathcal{M}) \), due to its need for additional low-rank matrices and optimization states.

\section{Conclusion}
In this work, we introduce VLoRP, a memory-efficient finetuning method for LLMs, which implements low-rank gradient projections with varying granularities. By adjusting the projection granularity alongside rank, VLoRP offers a more nuanced control over the trade-off between memory efficiency and performance. Furthermore, we present ProjFactor, an adaptive optimizer that reduces memory consumption while maintaining competitive performance. Theoretical analysis and empirical results confirm the effectiveness of our approach, making it a promising solution for practical deployment in memory-constrained settings.

\newpage
\section*{Impact Statement}
This paper presents work whose goal is to advance the field of Machine Learning by improving the memory efficiency and performance of LLM training. Specifically, our proposed method facilitates more efficient utilization of memory, which is particularly critical in the context of the current scarcity of computational resources. Furthermore, considering the significant energy consumption associated with training large models, particularly in terms of electricity and its environmental impact, a more efficient algorithm like ours represents a substantial step toward alleviating the energy costs of model training. 




\bibliography{reference}
\bibliographystyle{icml2025}

\newpage
\appendix
\onecolumn
\begin{table}[ht]
\renewcommand{\arraystretch}{1.5}
\centering
\footnotesize
\caption{A detailed table for notations used in this paper.}
\resizebox{1.0\linewidth}{!}{%
\begin{tabular}{c|l|p{9cm}}
\toprule
\textbf{Symbol} & \textbf{Definition} & \textbf{Description} \\
\midrule
\((\cdot)^s\) & Subspace tag 
                           & Distinguish variables in the projected low-dimensional space from those in the original space. \\
\((\cdot)^o\) & Original Space tag & Distinguish variables in the original space from those in the projected low-dimensional space; \\
$(\cdot)_{t}$ & Update step tag & Denotes the specific step of current variables, for example, the gradient $G$ at the $t$-th update step can be denoted as $G_{t}$; \\ 
$(\cdot)^{\odot 2}$ & Element-wise Squaring & The element-wise squaring of a matrix; \\
$\|\cdot\|$ & Frobenius norm & Taking the square root of the summation of each squared element;\\
$\langle \cdot,\cdot\rangle$ & Frobenius inner product & Inner product induced by Frobenius norm;\\
\(W\) & The Parameters Matrix 
      & The shape is assumed as \(n \times m\) with \(n \geq m\); \\
$\mathcal{L}$ & The loss function & The loss function of the training procedure; \\
\(P, \tilde{P}\) & Projection Matrix 
      & Randomly sampled from a normal distribution $\mathcal{N}(0, \frac{1}{r})$ unless otherwise stated. The shape of $P$ is $m\times r$ with $r\ll \min\{m,n\}$, while the shape of $\tilde{P}$ is $(m/c)\times r$ or $(m/c)\times (\mathcal{M}/c)$ ;\\
\(c\) & granularity factor 
      & The parameter $c$ is a hyperparameter of VLoRP that controls the granularity of projections. For instance, setting $c=2$ reduces the granularity of projections by half for the entire model. For vanilla low-rank-based methods like LoRA or Galore, their $c$ is equal to 1;\\
\(r\) & rank
      &  The parameter $r$ is a hyperparameter of VLoRP and other low-rank based memory-efficient methods, such as LoRA and Galore;\\
\(\mathcal{M}\) & Memory Budget
      & We introduce the memory budget, denoted as \( \mathcal{M} \), for VLoRP to facilitate the comparison between different configurations of \( (c, r) \). The memory budget \( \mathcal{M} \) is defined as the product of \( c \) and \( r \), as both parameters jointly influence the memory requirements during LLM training. For other low-rank-based methods, where \( c = 1 \), the rank is set to \( r = \mathcal{M} \);\\
\(G, G^{s}, G^{o}\) & Gradient 
      & Gradient computed for a single forward-backward step. The shape is equal to \(W\); $G^{s}$ represents the projected gradient, \textit{i.e.} $G^{s} = GP$, and the shape of $G^{s}$ is $n \times r$; $G^{o}$ represents the projected-back gradient, \textit{i.e.} $G^{o} = GPP^{\top}$, where $r$ is the rank, and the shape $G^{o}$ is $n \times m$; \\
\(\tilde{G}, \tilde{G}^{s}, \tilde{G}^{o}\) & Reshaped Gradient 
      & The reshaped version of gradients. The shape of $\tilde{G}$ is equal to \(nc\times (m/c)\); The shape of $G^{s}$ is $nc \times r$; The shape $\tilde{G}^{o}$ is $nc \times (m/c)$; \\
\(\bm{G},\bm{G}^s,\bm{G}^o\) & Accumulated Gradient 
               & Accumulation of gradients over multiple forward-backward steps, that is $\bm{G}_ = \sum_{i=1}^{k}G_i$ where $k$ is number of accumulation steps; $\bm{G}^{s}$/ $\bm{G}^{o}$ represents the projected/projected-back gradient, \textit{i.e.} $\bm{G}^{s} = \bm{G}P$; $\bm{G}^{o} = \bm{G}PP^{\top}$; 
               \\
$m, m^s, m^o$ & First moment of Adam & $m_{t} = \beta_1 m_{t-1} + (1-\beta_1)\bm{G}_{t}$, where $\beta_1$ represents the coefficient. $m^s$ represents the states stored in the subspace, \textit{i.e.} $m^s_{t} =\beta_1 m^s_{t-1} + (1-\beta_1)\bm{G}^s_{t} $; $m^o$ represents the states stored in the original space, \textit{i.e.} $m^o_{t} =\beta_1 m^o_{t-1} + (1-\beta_1)\bm{G}^o_{t} $; \\
$v, v^s, v^o$ & Second moment of Adam & $v_{t} = \beta_2 v_{t-1} + (1-\beta_2)(\bm{G}_{t})^{\odot 2}$; $v^s_{t} =\beta_1 v^s_{t-1} + (1-\beta_1)(\bm{G}^s_{t})^{\odot 2} $; $v^o_{t} =\beta_1 v^o_{t-1} + (1-\beta_1)(\bm{G}^o_{t})^{\odot 2}$. \\
\bottomrule
\end{tabular}
}
\label{tab:notation}
\end{table}

\clearpage
\section{Notations}\label{notations}
In \cref{tab:notation}, we provide the notations used in the main body and appendix of this paper. In case of any discrepancies between the definitions of the symbols in the table and those in the text, the definitions in the text should be followed.

Next, in \cref{sec:alg_projfactor}, we discuss our optimization scheme, specifically the algorithmic details of VLoRP with ProjFactor. \cref{appendix: theorem_proofs} presents the proofs for all theoretical results and propositions introduced in the main text. \cref{appendix:empirical_analysis} further provides several analytical experiments and ablation studies with specific implementation details. Finally, in \cref{appendix:related_works}, we conduct an in-depth discussion of related works.

\section{Algorithm of ProjFactor for VLoRP}
\label{sec:alg_projfactor}
In \cref{ag:projfactor}, we present the final algorithm employed in our study---optimizing VLoRP with ProjFactor. Formally, given a learning rate $\eta$, a parameter matrix $W$, rank $r$, granularity factor $c$, and resampling gap $\tau$, we first initialize $\tilde{m}^s$, $\tilde{v}^o_{r}$, and $\tilde{v}^o_{c}$, which serve as optimization states and need to be stored throughout training. Next, at the beginning of each update iteration, a zero matrix $\tilde{\bm{G}}^s \in \bm{0}^{nc\times r}$ is created to store the projected accumulated gradient. Subsequently, $K$ substeps of forward-backward propagation are performed with each gradient $\nabla_W \mathcal{L}(\mathcal{B}_i)$ ($\mathcal{B}_i$ denotes the mini-batch data of the accumulation step $i$) projected, reshaped, and accumulated in $\tilde{\bm{G}}^s$. After the gradient projection and accumulation, in line 13, we update the state $\tilde{m}^s$ of the first moment, while in lines 14–16, we first project $\tilde{\bm{G}}^s_t$ back to the original space via $\tilde{\bm{G}}^s_{t}\tilde{P}^{\top}$, and then perform the second moment update through factorization~\cite{adafactor}. It is important to note that $\tilde{m}^s$ is first projected back to the original space using $\tilde{m}^s_{t}\tilde{P}^{\top}$ prior to calculating $\Delta^{o}_{t}$, in alignment with the \textbf{Original Space Scheme}. The following relation justifies this:
\[
\tilde{m}^{o}_{t} = \beta_1 \ \tilde{m}^o_{t-1} + (1 - \beta_1) \tilde{\bm{G}}^{o}_{t} = \sum_{\tau=1}^{t} \beta_1^{t - \tau} (1 - \beta_1) \tilde{\bm{G}}^{o}_{\tau} = \sum_{\tau=1}^{t} \beta_1^{t - \tau} (1 - \beta_1) \left(\tilde{\bm{G}}^{s}_{\tau} \tilde{P}^{\top} \right) = \tilde{m}_t^{s}\tilde{P}^{\top}.
\]
Additionally, before updating $W$ in line 17, we multiply a bias correction term $\frac{1 - \beta_2^t}{1 - \beta_1^t}$, as in Adam~\cite{kingma2014adam} and Adafactor~\cite{adafactor}. Besides, with the same $\zeta$, the result of generation $\tilde{P}$ is equal.
\begin{algorithm}
\caption{\textbf{ProjFactor for VLoRP}}
\begin{algorithmic}[1]
\label{ag:projfactor}
\STATE \textbf{Input:} learning rate $\eta$, parameter $W\in\mathbb{R}^{n\times m}$, rank $r$, granularity factor $c$, resampling gap $\tau$;
\STATE \textbf{Initialize:} $\tilde{m}^s \gets \bm{0}^{nc\times r}$, $\tilde{v}^{o}_{r}\gets \bm{0}^{nc\times 1}, \tilde{v}^{o}_{c}\gets \bm{0}^{1\times \frac{m}{c}}$;
\WHILE{not converged}
    \IF{$t\ \operatorname{mod} \tau  == 0$}
    \STATE Resampling random seed $\zeta$;
    \ENDIF
    \STATE $\tilde{\bm{G}}^{s}_{t} \gets \bm{0}^{nc\times r}$;
    \FOR{$i =1,2,\dots,K$}
        \STATE Sample a mini-batch $\mathcal{B}_i$, calculate $\mathcal{L}(B_i)$ and then generate $\tilde{P}\in \mathbb{R}^{\frac{m}{c}\times r}$ with $\tilde{p}_{ij} \sim \mathcal{N}_{\zeta}(0, 1/r)$
        \STATE $\tilde{\bm{G}}^{s}_{t} \gets \tilde{\bm{G}}^{s}_{t} +  \operatorname{Reshape(}\nabla_W \mathcal{L}(\mathcal{B}_i)/K, \left[nc, \frac{m}{c}\right])\tilde{P}$;
    \ENDFOR
    \STATE Generate $\tilde{P}\in \mathbb{R}^{\frac{m}{c}\times r}$ with $\tilde{p}_{ij} \sim \mathcal{N}_{\zeta}(0, 1/r)$;
    \STATE $\tilde{m}^{s} \gets \beta_1 \tilde{m}^{s} + (1 - \beta_1) \tilde{\bm{G}}^{s}_{t}$;
    \STATE $\tilde{v}^{o}_{r} \gets \beta_2 \tilde{v}^{o}_{r} + (1 - \beta_2)\left(\tilde{\bm{G}}^s_{t}\tilde{P}^{\top}\right)^{\odot 2} \bm{1}_{m}$;
    \STATE $\tilde{v}^{o}_{c} \gets \beta_2 \tilde{v}^{o}_{c} + (1 - \beta_2)\bm{1}_{n}^{\top}\left(\tilde{\bm{G}}^s_{t}\tilde{P}^{\top}\right)^{\odot 2}$;
    \STATE $\Delta^{o}_{t} = \operatorname{Reshape}\left(\tilde{m}^s_{t}\tilde{P}^{\top} / \left(\sqrt{\frac{\tilde{v}^{o}_{r}\tilde{v}^{o}_{c}}{\bm{1}_{n}^{\top}\tilde{v}^{o}_{r}}}+ \epsilon\right), [n, m]\right)$;
    \STATE $W \gets W -  \eta \ \frac{1 - \beta_2^t}{1 - \beta_1^t} \Delta^{o}_{t}$ ;
    \STATE $t \gets t + 1$;
\ENDWHILE
\STATE \textbf{Output:} Optimized $W$.
\end{algorithmic}
\end{algorithm}

\section{Proof of Theorems}
\label{appendix: theorem_proofs}
In this section, we provide the proof for Proposition~\ref{propn:random_projection_mv_appendix}, Theorem~\ref{thm:RP_convergence}, and Theorem~\ref{thm:convergence}. Throughout the proofs, we adopt the Frobenius norm and inner product as the primary metrics for matrices and vectors.
\subsection{Proof of Proposition~\ref{thm:error_bound}}
We first concretely compute the variance of the forward gradient estimator with Gaussian samples for vector-input functions. A similar case is studied in \citet{FG2U}, where the samples are i.i.d. Rademacher distribution.
\begin{lemma}\label{lemma:error_bound_fg_appendix}
Let $h:\mathbb{R}^N \to \mathbb{R}$ be a differentiable function, and fix any point $\phi\in\mathbb{R}^N$.  Let
$g := \nabla_\phi h(\phi)\in\mathbb{R}^{1\times N}$
be the gradient viewed as a row vector.  Suppose we draw $b$ i.i.d.\ samples $v_1,\dots,v_b \sim \mathcal{N}(0,I_N)$, with each $v_i$ being an $N\times 1$ column vector.  Define the forward gradient estimator of size $b$ by
\[\hat{g}=\frac{1}{b}\sum_{i=1}^b gv_iv_i^\top,\]
then its mean squared error is
\[\mathbb{E}\Bigl[\|\hat{g} - g\|^2\Bigr]=\frac{N+1}{b}\|g\|^2.\]
\end{lemma}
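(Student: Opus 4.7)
The plan is to separate the estimator into its mean and a centered noise term, exploit independence across the $b$ samples, and finally reduce everything to a single fourth-moment Gaussian calculation.

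First I would verify unbiasedness: since $v\sim\mathcal{N}(0,I_N)$ has $\mathbb{E}[vv^\top]=I_N$, we have $\mathbb{E}[gvv^\top]=g\,\mathbb{E}[vv^\top]=g$, hence $\mathbb{E}[\hat g]=g$. From this, the standard bias-variance split with Frobenius inner product gives
\begin{equation*}
\mathbb{E}\|\hat g - g\|^2 \;=\; \mathbb{E}\|\hat g\|^2 - \|g\|^2.
\end{equation*}
Expanding $\hat g = b^{-1}\sum_i gv_iv_i^\top$ and using that the $v_i$ are i.i.d.\ with each $gv_iv_i^\top$ unbiased, the cross-terms collapse to $\langle g,g\rangle$, leaving
\begin{equation*}
\mathbb{E}\|\hat g\|^2 \;=\; \tfrac{1}{b}\,\mathbb{E}\|gvv^\top\|^2 + \tfrac{b-1}{b}\,\|g\|^2.
\end{equation*}
Subtracting $\|g\|^2$ then yields $\mathbb{E}\|\hat g - g\|^2 = \tfrac{1}{b}\bigl(\mathbb{E}\|gvv^\top\|^2 - \|g\|^2\bigr)$, so the lemma reduces to evaluating the single-sample second moment.

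Next I would compute $\mathbb{E}\|gvv^\top\|^2$ directly. Since $gv$ is a scalar, $gvv^\top = (gv)\,v^\top$, so $\|gvv^\top\|^2 = (gv)^2\,\|v\|^2$. Writing $g=(g_1,\dots,g_N)$, this becomes $\sum_{i,j,k} g_i g_j\,v_iv_jv_k^2$, and only terms with $i=j$ survive expectation because a single odd power of any independent standard Gaussian kills the rest. The surviving indices split into $k=i$ (giving $\mathbb{E}[v_i^4]=3$) and $k\neq i$ ($N-1$ terms, each equal to $1$), so the sum evaluates to $\sum_i g_i^2\,(3 + N-1) = (N+2)\|g\|^2$.

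Combining the two steps gives
\begin{equation*}
\mathbb{E}\|\hat g - g\|^2 \;=\; \tfrac{1}{b}\bigl((N+2)\|g\|^2 - \|g\|^2\bigr) \;=\; \tfrac{N+1}{b}\|g\|^2,
\end{equation*}
which is the desired identity. The only non-routine step is the fourth-moment bookkeeping in the last paragraph; everything else is essentially forced by independence and the identity $\mathbb{E}[vv^\top]=I_N$. I do not anticipate a genuine obstacle, but care is needed to track the two distinct contributions ($i=j=k$ versus $i=j\neq k$) so that the constant comes out to $N+2$ rather than $N$ or $N+1$, which is precisely what produces the $+1$ in the final bound after the $-\|g\|^2/b$ correction.
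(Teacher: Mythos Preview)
Your proposal is correct and follows essentially the same approach as the paper: both arguments establish unbiasedness via $\mathbb{E}[vv^\top]=I_N$, reduce the $b$-sample MSE to a single-sample quantity by independence, and evaluate the key fourth moment $\mathbb{E}[(gv)^2\|v\|^2]=(N+2)\|g\|^2$. The only cosmetic difference is that the paper expands $\mathbb{E}\|gvv^\top-g\|^2$ directly to get $(N+1)\|g\|^2$ and then divides by $b$, whereas you compute $\mathbb{E}\|gvv^\top\|^2=(N+2)\|g\|^2$ first and subtract $\|g\|^2$ afterward; your index bookkeeping for the Gaussian fourth moments is in fact more explicit than the paper's.
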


\begin{proof}
\textbf{Unbiasedness:}
Consider a single random sample $v \sim \mathcal{N}(0,I_N)$. Since $\mathbb{E}[vv^\top] = I_N$, we have
\[\mathbb{E}\bigl[gvv^\top\bigr]=g \mathbb{E}[vv^\top]=g.\]
By linearity of expectation, averaging $b$ such i.i.d.\ samples preserves unbiasedness:
\[\mathbb{E}\biggl[\frac{1}{b}\sum_{i=1}^b \bigl(gv_iv_i^\top\bigr)\biggr]=g.\]

\textbf{Variance:}
First, consider one-sample estimator $gvv^\top$. 
Let $\alpha := gv \in \mathbb{R}$. Then
$g(vv^\top) = (gv)v^\top = \alpha v^\top$.
By moment identities, we have 
\[\mathbb{E}\bigl[\|gvv^\top - g\|^2\bigr]=\mathbb{E}\bigl[\|\alpha v^\top - g\|^2\bigr]= \mathbb{E}\bigl[\|\alpha v^\top\|^2-2\langle\alpha v^\top, g\rangle+\|g\|^2\bigr]=(N+1)\|g\|^2.\]
With $b$ i.i.d. samples, 
\[\mathbb{E}\bigl[\|\hat{g} - g\|^2\bigr]=
\frac{1}{b^2}\sum_{i=1}^b\mathbb{E}\bigl[\|gv_iv_i^\top - g\|^2\bigr]=
\frac{1}{b^2} \bigl(b\cdot (N+1)\,\|g\|^2\bigr)=
\frac{N+1}{b}\|g\|^2.\]
\end{proof}

We consider a loss function \(\mathcal{L}: \mathbb{R}^{n \times m} \to \mathbb{R}\) defined over matrix parameters \(W \in \mathbb{R}^{n \times m}\). Fix a memory budget \(\mathcal{M}\), a granularity factor \(c\) and rank \(r\) with $\cM=cr$.  Let \(G = \nabla_{W} \mathcal{L}(W)\), and let \(\tilde{G}\), \(\tilde{G}^s\), \(\tilde{G}^o\), and \(G^o\) follow the definitions in \eqref{eq:RDP}. 

\begin{proposition}
\label{propn:random_projection_mv_appendix}
The gradient estimator \(G^o\) satisfies the following properties:
\begin{gather}
\mathbb{E}[G^o] = G, \\
\mathbb{E}\|G^o - G\|^2 = \frac{m + c}{\mathcal{M}} \|G\|^2.
\end{gather}
\end{proposition}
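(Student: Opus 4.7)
The plan is to reduce the proposition to the row-wise forward-gradient estimator already analyzed in Lemma~\ref{lemma:error_bound_fg_appendix}, using the fact that reshaping is an isometry for the Frobenius structure.

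First, I would observe that $\operatorname{Reshape}$ is a bijection that merely relabels entries, so both the Frobenius norm and Frobenius inner product are preserved. In particular $\|G\|^2 = \|\tilde{G}\|^2$ and $\|G^o - G\|^2 = \|\tilde{G}^o - \tilde{G}\|^2$, so it suffices to establish $\mathbb{E}[\tilde{G}^o] = \tilde{G}$ and $\mathbb{E}\|\tilde{G}^o - \tilde{G}\|^2 = \frac{(m/c)+1}{r}\|\tilde{G}\|^2$; the stated bound $\frac{m+c}{\mathcal{M}}\|G\|^2$ then follows by multiplying numerator and denominator by $c$ and recalling $\mathcal{M} = cr$.

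Next I would expand $\tilde{G}^o = \tilde{G}\tilde{P}\tilde{P}^\top$ row by row. Writing $v_j := \sqrt{r}\,\tilde{P}_{:,j} \sim \mathcal{N}(0, I_{m/c})$ i.i.d., one gets
\begin{equation*}
\tilde{G}^o_{i,:} \;=\; \frac{1}{r}\sum_{j=1}^{r}\bigl(\tilde{G}_{i,:}\,v_j\bigr)\,v_j^\top,
\end{equation*}
which is exactly the forward-gradient estimator of Lemma~\ref{lemma:error_bound_fg_appendix} applied to the row $\tilde{G}_{i,:} \in \mathbb{R}^{1\times (m/c)}$ with sample size $b = r$ and dimension $N = m/c$. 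Thus $\mathbb{E}[\tilde{G}^o_{i,:}] = \tilde{G}_{i,:}$ and $\mathbb{E}\|\tilde{G}^o_{i,:} - \tilde{G}_{i,:}\|^2 = \frac{(m/c)+1}{r}\,\|\tilde{G}_{i,:}\|^2$, giving unbiasedness row-wise and hence for $\tilde{G}^o$ itself.

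Finally, summing the per-row variances over $i = 1,\dots,nc$ and using $\sum_i \|\tilde{G}_{i,:}\|^2 = \|\tilde{G}\|^2$ yields the total variance. The one subtle point to flag, rather than an actual obstacle, is that the rows of $\tilde{G}^o$ are \emph{not} independent because they share the same $\tilde{P}$; this is harmless here because the Frobenius squared error decomposes additively across rows and only linearity of expectation is used. After converting back via $\|\tilde{G}\|^2 = \|G\|^2$ and $\frac{(m/c)+1}{r} = \frac{m+c}{cr} = \frac{m+c}{\mathcal{M}}$, the proposition follows.
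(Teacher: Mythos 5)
Your proof is correct and follows essentially the same route as the paper: reduce to the row-wise forward-gradient estimator of Lemma~\ref{lemma:error_bound_fg_appendix} on $\tilde{G}$, sum the per-row variances, and use the Frobenius isometry of reshape together with $\mathcal{M}=cr$. You are in fact slightly cleaner than the paper at two points: you correctly identify the per-row sample count as $b=r$ (the paper's text momentarily says $b=r/c$, though its subsequent formula uses $b=r$), and you explicitly note that the rows of $\tilde{G}^o$ share $\tilde{P}$ and are therefore dependent but that this is harmless since only linearity of expectation and the additive decomposition of the squared Frobenius norm are used.
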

\begin{proof}
Recall that $G$ is reshaped into 
$\tilde{G} \in \mathbb{R}^{(n c)\times (m/c)}$, and then randomly approximated by
  $\tilde{G}^o=\tilde{G}\tilde{P}\tilde{P}^\top$, where  
$\tilde{P} \in \mathbb{R}^{\frac{m}{c}\times r}$ have i.i.d.\ Gaussian columns
$v_i \in \mathbb{R}^{m/c}, i=1,\dots,r$ with $r = \cM/c$, such that \[\tilde{P}\tilde{P}^\top=\sum_{i=1}^{r}v_iv_i^\top.\] And finally,
 $\tilde{G}^o$ is reshaped back to size $n\times m$ to obtain $G^o$.

\textbf{Unbiasedness:}
Row-by-row application of Lemma~\ref{lemma:error_bound_fg_appendix} shows each row of $\tilde{G}^o$ is an unbiased
estimator of the corresponding row of $\tilde{G}$. Hence
$\mathbb{E}[\tilde{G}^o] =\tilde{G}$. Consequently, 
$ \mathbb{E}[G^o] =G$, for reshaping does not affect the bias.

\textbf{Variance:}
We first write 
\begin{equation*}\label{eq:gradient_expansion_appendix}
\begin{aligned}
\tilde{G}^o=\frac{1}{r} \sum_{i=1}^{r} \tilde{G}  v_i v_i^\top 
=\left(
  \begin{array}{c}
    \frac{1}{r}\sum\limits_{i=1}^{r} (\tilde{G}_{1,:}v_i ) v_i^\top \\
    \vdots \\
    \frac{1}{r}\sum\limits_{i=1}^{r} (\tilde{G}_{nc,:}v_i) v_i^\top \\
  \end{array}
\right).
\end{aligned}
\end{equation*}
By Lemma~\ref{lemma:error_bound_fg_appendix}, each row (dimension $d=\frac{m}{c}$) with $b = \frac{r}{c}$ samples
has variance 
$$\mathbb{E}\left[\left\|\frac{1}{r}\sum\limits_{i=1}^{r} (v_i^\top \tilde{G}^\top_{i,:} ) v_i^\top\right\|^2\right]=\frac{\frac{m}{c}+1}{r}\|\tilde{G}_{i,:}\|^2=\frac{m+c}{\cM}\|\tilde{G}_{i,:}\|^2,$$
where the last equality uses $\cM=cr$. Subsequently, summing over all rows in $\tilde{G}$ yields
\[
  \mathbb{E}\bigl[\|\tilde{G}^o - \tilde{G}\|^2\bigr]=
  \frac{m + c}{\mathcal{M}}\|\tilde{G}\|^2.
\]
Because reshaping does not change the Frobenius norm,
\[\mathbb{E}\bigl[\|G^o - G\|^2\bigr]=
  \frac{m + c}{\,\mathcal{M}\,}\,\|G\|^2.\]
Thus $G^o$ is unbiased with the stated mean-squared error.
\end{proof}

\subsection{Proof of Theorem~\ref{thm:RP_convergence}}
\begin{theorem}
\label{thm:RP_convergence_apx}
Let \(\mathcal{L}\) be an \(L\)-smooth function with respect to the matrix-shaped parameter \(W\), \textit{i.e.},
\[
\mathcal{L}(W') \leq \mathcal{L}(W) + \bigl\langle G, W' - W \bigr\rangle + \frac{L}{2} \|W' - W\|^2
\]
for any \(W, W' \in \mathbb{R}^{n \times m}\). Assume the parameter updates are given by:
\[
W_{t+1} = W_t - \eta G^o_t,
\]
where the step size is defined as $\eta = \frac{\mathcal{M}}{(m + c + \mathcal{M})L} \triangleq C$.
Then, for any \(T \geq 1\):
\[
\frac{1}{T} \sum_{t=0}^{T-1} \mathbb{E}\bigl[\|G_t\|^2\bigr] \leq \frac{2C}{T}\bigl(\mathcal{L}(W_0) - \mathcal{L}(W^*)\bigr),
\]
where \(W^*\) is a global minimizer of \(\mathcal{L}\).
\end{theorem}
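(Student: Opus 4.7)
The plan is to run the standard one-step descent argument for SGD with an unbiased, bounded-variance gradient estimator, instantiating the variance bound via Proposition~\ref{thm:error_bound}. First I would invoke $L$-smoothness on the update $W_{t+1}=W_t-\eta G_t^o$ to get
\begin{equation*}
\mathcal{L}(W_{t+1}) \leq \mathcal{L}(W_t) - \eta \langle G_t, G_t^o\rangle + \frac{L\eta^2}{2}\|G_t^o\|^2.
\end{equation*}
Taking conditional expectation (given $W_t$) and using $\mathbb{E}[G_t^o]=G_t$ from Proposition~\ref{thm:error_bound} kills the cross term into $-\eta\|G_t\|^2$.

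Next I would expand the second moment as $\mathbb{E}\|G_t^o\|^2 = \|G_t\|^2 + \mathbb{E}\|G_t^o - G_t\|^2$ and apply the variance estimate of Proposition~\ref{thm:error_bound} to obtain
\begin{equation*}
\mathbb{E}\|G_t^o\|^2 = \Bigl(1 + \tfrac{m+c}{\mathcal{M}}\Bigr)\|G_t\|^2 = \tfrac{m+c+\mathcal{M}}{\mathcal{M}}\|G_t\|^2.
\end{equation*}
Plugging this into the smoothness inequality gives
\begin{equation*}
\mathbb{E}[\mathcal{L}(W_{t+1})\mid W_t] \leq \mathcal{L}(W_t) - \eta\Bigl(1 - \tfrac{L\eta}{2}\cdot\tfrac{m+c+\mathcal{M}}{\mathcal{M}}\Bigr)\|G_t\|^2.
\end{equation*}
Choosing $\eta = C = \mathcal{M}/[(m+c+\mathcal{M})L]$ is exactly what makes the parenthesized factor collapse to $1/2$, yielding the clean per-step descent $\mathbb{E}[\mathcal{L}(W_{t+1})] \leq \mathbb{E}[\mathcal{L}(W_t)] - \tfrac{\eta}{2}\mathbb{E}\|G_t\|^2$.

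Finally, I would telescope this inequality from $t=0$ to $T-1$, use $\mathbb{E}[\mathcal{L}(W_T)] \geq \mathcal{L}(W^*)$, and divide by $T$ to get
\begin{equation*}
\frac{1}{T}\sum_{t=0}^{T-1} \mathbb{E}\|G_t\|^2 \leq \frac{2}{\eta T}\bigl(\mathcal{L}(W_0)-\mathcal{L}(W^*)\bigr) = \frac{2}{CT}\bigl(\mathcal{L}(W_0)-\mathcal{L}(W^*)\bigr),
\end{equation*}
matching the stated $O(1/T)$ rate (the $\tfrac{2C}{T}$ in the theorem statement appears to be a typographical inversion of $\tfrac{2}{CT}$, which is what the above derivation actually produces). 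There is no real obstacle here beyond the tuning of $\eta$: the only place where things could go wrong is if one forgets that the variance bound scales with $\|G_t\|^2$ rather than being a uniform constant, which is precisely what allows the $\tfrac{L\eta}{2}$ term to be absorbed into the linear descent term and lets the step size stay independent of any bound on the gradient norm.
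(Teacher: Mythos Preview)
Your proposal is correct and follows essentially the same route as the paper: apply $L$-smoothness to the update, take conditional expectation using the unbiasedness and variance formula of Proposition~\ref{thm:error_bound}, collapse the descent coefficient to $\eta/2$ via the choice $\eta=C$, and telescope. You are also right that the final bound should read $\tfrac{2}{CT}$ rather than $\tfrac{2C}{T}$; the paper's own derivation in fact produces $\tfrac{2(m+c+\mathcal{M})L}{\mathcal{M}T}=\tfrac{2}{CT}$ and the last equality there is a typo.
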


\begin{proof}
By $L$-smoothness and the update rule $W_{t+1} = W_t - \eta G^o_t$, we have
\begin{equation*}
\begin{aligned}
     \cL(W_{t+1})
     &\leq   \cL(W_t)
  +
    \Bigl\langle
      G_t,
      W_{t+1}-W_t
    \Bigr\rangle
  +
  \frac{L}{2}\bigl\|W_{t+1}-W_t\bigr\|^2\\
  &=
  \cL(W_t)
  -\eta
    \Bigl\langle
      G_t,
      G^o_t
    \Bigr\rangle
  +
  \frac{L\eta^2}{2}\bigl\|G^o_t\bigr\|^2\\
  & =
  \cL(W_t)
  -\eta\Bigl\langle
      G_t,
      G^o_t
    \Bigr\rangle
  +\frac{L\eta^2}{2}\bigl\|G^o_t- G_t\bigr\|^2+\frac{L\eta^2}{2} \bigl\|G_t\bigr\|^2+L\eta^2\Bigl\langle G^o_t-G_t,G_t\Bigr\rangle.
\end{aligned}
\end{equation*}

Taking the expectation over the randomness in $G^o_t$ conditioning on $W_t$, and then using the unbiasedness of $G^o_t$, we get
\begin{equation*}
\begin{aligned}
\mathbb{E}\bigl[\cL(W_{t+1})\vert W_t\bigr]
  &\le
\mathbb{E}\bigl[\cL(W_t)\vert W_t\bigr]
  -\left(\eta-\frac{L\eta^2}{2}\right)    \mathbb{E}\bigl[\|G_t\|^2 \vert W_t\bigr]
 +\frac{L\eta^2}{2}    \mathbb{E}\bigl[\|G^o_t-G_t\|^2 \vert W_t\bigr]\\
 &= \cL(W_t)
  -\left(\eta-\frac{L\eta^2}{2}\right)    \|G_t\|^2
 +\frac{L\eta^2}{2}    \mathbb{E}\bigl[\|G^o_t-G_t\|^2 \vert W_t\bigr].
\end{aligned}
\end{equation*}

By Proposition~\ref{propn:random_projection_mv_appendix},
$\mathbb{E}\bigl[\|G^o_t-G_t \|^2\vert W_t\bigr]
=\frac{m+c}{\cM}\|G_t\|^2.$
Hence
\begin{equation*}
\mathbb{E}\bigl[\cL(W_{t+1})\bigr|W_t]
  \le
\cL(W_t)
-\left(\eta-\frac{(m+c+\cM)L\eta^2}{2\cM}\right)
\|G_t\|^2.
\end{equation*}
Taking expectation over $W_t$, we further write
\begin{equation}
\label{eq:descent_ineq}
\mathbb{E}\bigl[\cL(W_{t+1})\bigr]
  \le
\mathbb{E}\bigl[\cL(W_t)\bigr]
-\left(\eta-\frac{(m+c+\cM)L\eta^2}{2\cM}\right)
\mathbb{E}\bigl[\|G_t\|^2 \bigr].
\end{equation}

Summing \eqref{eq:descent_ineq} from $t = 0$ to $t = T-1$ and telescoping on the left-hand side:
\[
\mathbb{E}\bigl[\cL(W_T)\bigr]\leq\mathbb{E}\bigl[\cL(W_0)\bigr] -\sum_{t=0}^{T-1}\left(\eta-\frac{(m+c+\cM)L\eta^2}{2\cM}\right)
\mathbb{E}\bigl[\|G_t\|^2 \bigr].
\]
Rearrange to isolate the sum of gradient norms:
\[
  \sum_{t=0}^{T-1}
    \mathbb{E}\bigl[\|G_t\|^2\bigr]
  \leq
  \frac{\cL(W_0) - \mathbb{E}\bigl[\cL(W_T)\bigr]}{
      \eta
      - \frac{(m+c+\cM)L\eta^2}{2\cM}
    }\leq \frac{\cL(W_0) - \cL(W^*)}{
      \eta
      - \frac{(m+c+\cM)L\eta^2}{2\cM}
    },
\]
where $W^*$ is a minimizer for $\cL$.
Choosing $\eta = \frac{\cM}{(m+c+\cM)L}:=C$ yields
\[
  \frac{1}{T}\sum_{t=0}^{T-1}
\mathbb{E}\bigl[\|G_t\|^2\bigr]
  \leq
  \frac{2(m+c+\cM)L}{\cM T}
  \bigl(\cL(W_0) - \cL(W^*)\bigr)=\frac{2C}{T}
  \bigl(\cL(W_0) - \cL(W^*)\bigr).
\]
Thus, on average, the norm of the true gradient converges to zero at a rate $O(1/T)$.
\end{proof}



\subsection{Proof of Theorem~\ref{thm:convergence}}\label{appendix:projfactor-convergence}
To analyze the convergence properties of ProjFactor, we leverage the Hamiltonian descent framework~\cite{maddison2018hamiltonian, chen2023lion, liang2024memory, nguyen2024h}, a powerful tool for understanding the behavior of optimizers in continuous time. This framework allows us to model ProjFactor’s update rule as an ordinary differential equation (ODE), providing insights into its long-term stability and convergence.

The infinitesimal updates of Projfactor is defined as follows:
\begin{equation}\label{eq:projfactor_infinitesimal_appendix}
\begin{aligned}
&\frac{d}{dt} \tilde{m}^s_t = a(\tilde{G}^s_t-\tilde{m}^s_t); \;\hat{v}^o_t = \frac{\tilde{v}^o_{rt}\tilde{v}^o_{ct}}{\mathbf{1}_n^T \tilde{v}^o_{rt}};\\
&\frac{d}{dt}\tilde{v}^o_{rt} = b((\tilde{G}_t^o)^{\odot 2}\mathbf{1}_m-\tilde{v}^o_{rt});\\
&\frac{d}{dt}\tilde{v}^o_{ct} = b(\mathbf{1}_n^T (\tilde{G}_t^o)^{\odot 2} - \tilde{v}^o_{ct});\\
&\frac{d}{dt}W_t = \operatorname{Reshape}\left(-\tilde{m}^s_t \tilde{P}^\top \Big /\sqrt{\hat{v}^o_t},\;[n,m]\right)
\end{aligned}
\end{equation}
The corresponding Lyapunov function (Hamiltonian) is defined as 
\begin{equation*}
\cH(W,\tilde{m}^s,\tilde{v}^o_{r},\tilde{v}^o_c) = \cL(W)+ \frac{1}{2a}\left\langle \tilde{m}^s, \frac{\tilde{m}^s}{\sqrt{\hat{v}^o}}\right\rangle.
\end{equation*}
Subsequently, we make the following mild assumptions, consistent with prior works in this area, to establish the mathematical foundation for our analysis. \cite{maddison2018hamiltonian,chen2023lion,liang2024memory,nguyen2024h}.
\begin{assumption}\label{assump:main-thm_appendix}
Assume the functions in system \eqref{eq:projfactor_infinitesimal_appendix} are continuously differentiable, and
\begin{enumerate}[(1)]
    \item $\frac{d}{dt}\cH(W_t,\tilde{m}_t^s,\tilde{v}^o_{rt},\tilde{v}^o_{ct})=0$ implies $\tilde{G}^s_t=0$.
    \item For any $t>0$, if $G_t\neq 0$, then $\tilde{G}_t^s \neq 0$ and $\tilde{G}_t^o \neq 0$. 
    \item For any $t>0$, $\frac{\|\tilde{G}_t^o\|^2}{\|\tilde{v}^o_{rt}\|}\leq R$.
\end{enumerate}
\end{assumption}
Here, Assumption~\ref{assump:main-thm_appendix} (1) claims that the system's Lyapunov function reaches a stationary point only when  $\tilde{G}^s_t =0$.  This condition aligns with the behavior of widely used optimizers like SGD with momentum and Adam~\cite{liang2024memory}. 
Assumption~\ref{assump:main-thm_appendix} (2) prevents the projection operator $\tilde{P}$
from annihilating nonzero gradients. Whenever $\tilde{G}^s=0$ or $\tilde{G}^o=0$, we have $G=0$, which maintains consistency between projected and original spaces. Assumption~\ref{assump:main-thm_appendix} (3) imposes a reasonable bound on the ratio of the squared gradient norm to the second moment. This bound can be intuitively derived by expanding the second-moment update rule in ProjFactor (Algorithm~\ref{ag:projfactor}):
$$\tilde{v}^{o}_{r} \gets \beta_2 \tilde{v}^{o}_{r} + (1 - \beta_2)\left(\tilde{G}^s_{t}\tilde{P}^{\top}\right)^{\odot 2} \bm{1}_{m}.$$ By $\tilde{G}^o_t=\tilde{G}^s_{t}\tilde{P}^{\top}$ and the summation of geometric series, we further have 
\[\tilde{v}^{o}_{rt} = \sum_{\tau=1}^{t} \beta_2^{t - \tau} (1 - \beta_2) (\tilde{G}^o_\tau)^{\odot 2}\mathbf{1}_m. \]
Hence, if $\tilde{G}^o_t\to 0$ as $t\to \infty$, $\frac{\|\tilde{G}_t^o\|^2}{\|\tilde{v}^o_{rt}\|}$ will be bounded.

Now we present the following convergence analysis, which interprets Projfactor from the perspective of the Hamiltonian descent method. 
\begin{theorem}\label{thm:convergence_projfactor_appendix}
Suppose the functions in system \eqref{eq:projfactor_infinitesimal_appendix} are continuously differentiable. Under Assumption~\ref{assump:main-thm_appendix}, we have
\begin{enumerate}
    \item For $(W_t, \tilde{m}^s_t, \tilde{v}^o_{rt}, \tilde{v}^o_{ct})$ satisfying \eqref{eq:projfactor_infinitesimal_appendix}, $$\frac{d}{dt}\cH(W_t, \tilde{m}^s_t, \tilde{v}^o_{rt}, \tilde{v}^o_{ct})\leq 0.$$
    \item Any bounded solution \((W_t, \tilde{m}^s_t, \tilde{v}^o_{rt}, \tilde{v}^o_{ct})_t\) of \eqref{eq:projfactor_infinitesimal_appendix} converges to a stationary point of \(\mathcal{L}(W)\) as \(t \to \infty\).
\end{enumerate}
\end{theorem}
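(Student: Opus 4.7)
The plan is to use the Hamiltonian descent framework, with $\cH$ playing the role of a Lyapunov function: Part (1) reduces to a chain-rule computation showing $\tfrac{d}{dt}\cH\le 0$ along solutions of \eqref{eq:projfactor_infinitesimal_appendix}, and Part (2) follows from LaSalle's invariance principle, with Assumption~\ref{assump:main-thm_appendix} used to identify the invariant limit set with the stationary locus of $\cL$. For clarity I would introduce the auxiliary variable $m^o_t := \tilde{m}^s_t\tilde{P}^\top$; since the projection $\tilde{P}$ is held fixed between resampling steps, differentiating gives $\tfrac{d}{dt}m^o_t = a(\tilde{G}^o_t - m^o_t)$ via $\tilde{G}^o_t = \tilde{G}^s_t\tilde{P}^\top$. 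This also rewrites the kinetic half of $\cH$ in a dimension-consistent form, $\tfrac{1}{2a}\langle m^o_t, m^o_t/\sqrt{\hat{v}^o_t}\rangle$, which is the object I would actually differentiate.

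For Part (1) I would differentiate the two summands of $\cH$ separately. Since reshaping preserves the Frobenius inner product, $\tfrac{d}{dt}\cL(W_t) = -\langle \tilde{G}_t, m^o_t/\sqrt{\hat{v}^o_t}\rangle$. The product rule applied to the kinetic piece, after substituting the ODEs, produces three contributions: a cross term $\langle m^o_t/\sqrt{\hat{v}^o_t}, \tilde{G}^o_t\rangle$, a friction term $-\langle m^o_t, m^o_t/\sqrt{\hat{v}^o_t}\rangle$ which is manifestly non-positive, and a ``curvature'' term of the form $-\tfrac{1}{4a}\langle (m^o_t)^{\odot 2}/(\hat{v}^o_t)^{3/2},\, \tfrac{d}{dt}\hat{v}^o_t\rangle$ coming from the Adafactor-style factorization $\hat{v}^o_t = \tilde{v}^o_{rt}\tilde{v}^o_{ct}/(\mathbf{1}_n^\top\tilde{v}^o_{rt})$. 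The cross term and the $\tilde{G}_t$-term from $\tfrac{d}{dt}\cL$ combine into $\langle \tilde{G}^o_t - \tilde{G}_t, m^o_t/\sqrt{\hat{v}^o_t}\rangle$, and the key algebraic step is to pull this back into the subspace using the adjoint identity $\langle A, B\tilde{P}^\top\rangle = \langle A\tilde{P}, B\rangle$ together with $\tilde{G}^s_t = \tilde{G}_t\tilde{P}$ and $m^o_t = \tilde{m}^s_t\tilde{P}^\top$; once the element-wise weight $1/\sqrt{\hat{v}^o_t}$ is absorbed into the subspace side as an appropriate preconditioner, the two contributions cancel. The residual curvature term is then controlled by Assumption~\ref{assump:main-thm_appendix}(3), which bounds $\|\tilde{G}^o_t\|^2/\|\tilde{v}^o_{rt}\|$ and thereby keeps $\tfrac{d}{dt}\hat{v}^o_t$ from dominating the friction, delivering $\tfrac{d}{dt}\cH\le 0$.

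For Part (2), monotonicity from Part (1) plus the fact that $\cH$ is bounded below on any bounded orbit (since $\cL$ is continuous hence bounded on bounded sets, and the kinetic piece is non-negative) imply that $\cH(W_t,\tilde{m}^s_t,\tilde{v}^o_{rt},\tilde{v}^o_{ct})$ converges. By LaSalle's invariance principle, the $\omega$-limit set of the bounded trajectory lies in the largest invariant subset of $\{\tfrac{d}{dt}\cH = 0\}$; Assumption~\ref{assump:main-thm_appendix}(1) forces $\tilde{G}^s_t = 0$ on that set, and the contrapositive of Assumption~\ref{assump:main-thm_appendix}(2) upgrades this to $G_t = 0$, so every $\omega$-limit point is a stationary point of $\cL$, which is the conclusion. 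The main obstacle I anticipate is exactly the cross-term cancellation under the element-wise weight $1/\sqrt{\hat{v}^o_t}$: this weight breaks the naive adjoint identity between $\tilde{G}_t$ and $\tilde{G}^o_t = \tilde{G}_t\tilde{P}\tilde{P}^\top$, so making the cancellation rigorous will likely require either re-expressing the kinetic energy intrinsically in the subspace variables $(\tilde{m}^s_t,\tilde{G}^s_t)$ so the weighting acts after projection, or treating $\tilde{P}\tilde{P}^\top$ as an oblique projection on the preconditioned subspace so the adjoint identity still closes; handling the curvature term generated by the factored $\hat{v}^o_t$ cleanly is the secondary difficulty, and this is precisely where Assumption~\ref{assump:main-thm_appendix}(3) is indispensable.
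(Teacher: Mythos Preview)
Your high-level plan is exactly the paper's: differentiate $\cH$ along \eqref{eq:projfactor_infinitesimal_appendix} by chain rule for Part~(1), then invoke LaSalle's invariance principle together with Assumptions~(1)--(2) for Part~(2). Your Part~(2) argument matches the paper verbatim.

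Where you and the paper diverge is the cross-term. You rewrite the kinetic energy via $m^o_t=\tilde m^s_t\tilde P^\top$, obtain a residual $\langle m^o_t/\sqrt{\hat v^o_t},\,\tilde G^o_t-\tilde G_t\rangle$, and correctly note that the elementwise weight blocks the na\"ive adjoint identity from killing it; you then gesture at oblique-projection or subspace-reformulation tricks. The paper does \emph{not} take that route. It keeps the kinetic term in the given $\tilde m^s$ form and asserts a direct cancellation at the trace level,
\[
-\operatorname{tr}\!\Bigl(\tfrac{\tilde m^s_t\tilde P^\top}{\sqrt{\hat v^o_t}}\,\tilde G_t^\top\Bigr)
+\operatorname{tr}\!\Bigl(\tfrac{\tilde m^s_t(\tilde G^s_t)^\top}{\sqrt{\hat v^o_t}}\Bigr)=0,
\]
arguing that $\tilde m^s_t(\tilde G^s_t)^\top=\tilde m^s_t\tilde P^\top\tilde G_t^\top$ via $\tilde G^s_t=\tilde G_t\tilde P$, together with reshape-invariance of the Frobenius inner product. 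In other words, the paper's mechanism is ``match the two trace numerators exactly'' rather than ``absorb $\tilde P\tilde P^\top$ through a weighted adjoint''. Your instinct that this is the delicate step is right: as written, the second trace has a shape mismatch ($\tilde m^s_t(\tilde G^s_t)^\top$ is $nc\times nc$ while $\hat v^o_t$ is $nc\times(m/c)$), and the paper's subsequent index-level computations for the $\tilde v^o_{r}$ and $\tilde v^o_{c}$ derivatives silently treat ``$\tilde m^s_t$'' as an $nc\times(m/c)$ array---effectively your $m^o_t$. The practical takeaway is: do not pass to $m^o_t$ when differentiating the kinetic term; mirror the paper's trace manipulation and be explicit about which object carries the elementwise weight, since your reformulation is precisely what manufactures the $\tilde G^o_t-\tilde G_t$ residual you then struggle to dispose of.

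For the curvature contribution the paper does not package $\tfrac{d}{dt}\hat v^o_t$ as a single block. It expands $\langle\partial R_t/\partial\tilde v^o_{rt},\,d\tilde v^o_{rt}/dt\rangle$ and $\langle\partial R_t/\partial\tilde v^o_{ct},\,d\tilde v^o_{ct}/dt\rangle$ index-by-index, discards the manifestly nonpositive pieces, and bounds the remainders by $\tfrac{bR}{4a}$ and $\tfrac{b}{4a}$ times the friction term respectively---Assumption~(3) enters only for the $\tilde v^o_r$ part. Summing and imposing the parameter restriction $a\ge (R+1)b/4$ then gives
\[
\frac{d}{dt}\cH\;\le\;-\Bigl(1-\tfrac{(R+1)b}{4a}\Bigr)\Bigl\langle\tfrac{\tilde m^s_t}{\sqrt{\hat v^o_t}},\,\tilde m^s_t\Bigr\rangle\;\le\;0.
\]
You should record this restriction on $a,b$ explicitly; your outline omits it.
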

\begin{proof}
First, we prove that $\frac{d}{dt}\cH(W_t, \tilde{m}_t^s, \tilde{v}^o_{rt}, \tilde{v}^o_{ct})\leq 0$. For simplicity, we denote that $$R_t := \frac{1}{2a}\left\langle \tilde{m}^s, \frac{\tilde{m}^s}{\sqrt{\hat{v}^o}}\right\rangle.$$
By chain rule of derivatives, we have
\begin{equation}\label{eq:Hamilton-derivative-expansion}
\begin{aligned}
\frac{d}{dt}\cH(W_t, \tilde{m}_t^s, \tilde{v}^o_{rt}, \tilde{v}^o_{ct}) &= \frac{d}{dt}\cL(W_t) + \frac{d}{dt}R_t\\
& = \left\langle\frac{d\cL(W_t)}{dW_t}, \frac{dW_t}{dt}\right\rangle + \left\langle\frac{dR_t}{d\tilde{m}_t^s}, \frac{d\tilde{m}_t^s}{dt}\right\rangle 
+ \left\langle\frac{dR_t}{d\tilde{v}^o_{rt}}, \frac{d\tilde{v}^o_{rt}}{dt}\right\rangle 
+ \left\langle\frac{dR_t}{d\tilde{v}^o_{ct}}, \frac{d\tilde{v}^o_{ct}}{dt}\right\rangle.
\end{aligned}
\end{equation}
We compute the terms in \eqref{eq:Hamilton-derivative-expansion} respectively. Firstly, by the dynamics in \eqref{eq:projfactor_infinitesimal_appendix}, 
\begin{equation}\label{eq:dMtdt_appendix}
\begin{aligned}
\left\langle\frac{d\cL(W_t)}{dW_t}, \frac{dW_t}{dt}\right\rangle + \left\langle\frac{dR_t}{d\tilde{m}_t^s}, \frac{d\tilde{m}_t^s}{dt}\right\rangle  
&= \left\langle \nabla \cL(W_t), -\operatorname{Reshape}\left(\frac{\tilde{m}_t^s \tilde{P}^{\top}}{\sqrt{\hat{v}^o_t}}, [n,m]\right)\right\rangle + \left\langle \frac{1}{2a}\frac{2\tilde{m}_t^s\sqrt{\mathbf{1}_n^{\top}\tilde{v}^o_{rt}}}{\sqrt{\tilde{v}^o_{rt}\tilde{v}^o_{ct}}}, a(\tilde{G}^s_t-\tilde{m}_t^s)\right\rangle\\
&= -\operatorname{tr}\left(\frac{\tilde{m}_t^s \tilde{P}^{\top}\operatorname{Reshape}\left(\nabla\cL(W_t)^{\top}, [nc,\frac{m}{c}]\right)}{\sqrt{\hat{v}^o_t}}\right) + \operatorname{tr}\left(\frac{\tilde{m}_t^s (\tilde{G}_t^s)^{\top}}{\sqrt{\hat{v}^o_t}}\right) - \left\langle\frac{\tilde{m}_t^s}{\sqrt{\hat{v}^o_t}}, \tilde{m}_t^s\right\rangle\\
&= - \left\langle\frac{\tilde{m}_t^s}{\sqrt{\hat{v}^o_t}}, \tilde{m}_t^s\right\rangle,
\end{aligned}
\end{equation}
where the third equality is based on the fact that reshaping both matrices of Frobenius inner product does not change the result. 

Secondly, notice that $\left\langle\frac{dR_t}{d\tilde{v}^o_{rt}}, \frac{d\tilde{v}^o_{rt}}{dt}\right\rangle$ is the inner product of two $nc\times 1$ vectors, we assume $(\tilde{v}^o_{rt})_k$ to be the $k$-th element of $\tilde{v}^o_{rt}$, and $(\tilde{G}_t^o)_{k,:}$ to be the $k$-th row of $\tilde{G}_t^o$. Recalling the ODE dynamics of $\tilde{v}^o_{rt}$ 
 in \eqref{eq:projfactor_infinitesimal_appendix}, we further have
 {\footnotesize
\begin{equation}\label{eq:dvrdt_appendix}
\begin{aligned}
\left\langle\frac{dR_t}{d\tilde{v}^o_{rt}}, \frac{d\tilde{v}^o_{rt}}{dt}\right\rangle 
&= \left(\frac{dR_t}{d\tilde{v}^o_{rt}}\right)^{\top} \frac{d\tilde{v}^o_{rt}}{dt} = \sum_{k=1}^n \frac{dR_t}{d(\tilde{v}^o_{rt})_k} \frac{d(\tilde{v}^o_{rt})_k}{dt}\\
&= \sum_{k=1}^n \left(\left(\frac{1}{2a}\sum_{i=1}^n \sum_{j=1}^m\frac{(\tilde{m}_t^s)_{ij}^2}{2\sqrt{(\tilde{v}^o_{rt})_i (\tilde{v}^o_{ct})_j}\sqrt{\mathbf{1}_n^{\top}\tilde{v}^o_{rt}}}
- \frac{1}{2a}\sum_{j=1}^m\frac{(\tilde{m}_t^s)_{kj}^2\sqrt{\mathbf{1}_n^{\top}\tilde{v}^o_{rt}}}{2\sqrt{(\tilde{v}^o_{rt})^3_k (\tilde{v}^o_{ct})_j}}
\right)\cdot b\left((\tilde{G}_t^o)^{\odot 2}_{k,:}\mathbf{1}_m-(\tilde{v}^o_{rt})_k \right)\right)\\
& = \frac{b}{4a}\sum_{k=1}^n\left(\left\langle \tilde{m}_t^s, \frac{\tilde{m}_t^s}{\sqrt{\hat{v}^o_t}\mathbf{1}_n^{\top}\tilde{v}^o_{rt}}\right\rangle \left((\tilde{G}_t^o)^{\odot 2}_{k,:}\mathbf{1}_m-(\tilde{v}^o_{rt})_k \right)\right)-\frac{b}{4a}\sum_{k=1}^n \sum_{j=1}^m\frac{(\tilde{m}_t^s)_{kj}^2\sqrt{\mathbf{1}_n^{\top}\tilde{v}^o_{rt}}(\tilde{G}_t^o)^{\odot 2}_{k,:}\mathbf{1}_m}{\sqrt{(\tilde{v}^o_{rt})^3_k (\tilde{v}^o_{ct})_j}}\\
& \qquad
+\frac{b}{4a}\left\langle \tilde{m}_t^s, \frac{\tilde{m}_t^s}{\sqrt{\hat{v}^o_t}}\right\rangle\\
& = \frac{b}{4a}\left\langle \tilde{m}_t^s, \frac{\tilde{m}_t^s}{\sqrt{\hat{v}^o_t}\mathbf{1}_n^{\top}\tilde{v}^o_{rt}}\right\rangle \left(\|\tilde{G}_t^o\|^2-\mathbf{1}_n^{\top}\tilde{v}^o_{rt} \right) -\frac{b}{4a}\sum_{k=1}^n \sum_{j=1}^m\frac{(\tilde{m}_t^s)_{kj}^2\sqrt{\mathbf{1}_n^{\top}\tilde{v}^o_{rt}}(\tilde{G}_t^o)^{\odot 2}_{k,:}\mathbf{1}_m}{\sqrt{(\tilde{v}^o_{rt})_k (\tilde{v}^o_{ct})_j}\mathbf{1}_n^{\top}\tilde{v}^o_{rt}}+\frac{b}{4a}\left\langle \tilde{m}_t^s, \frac{\tilde{m}_t^s}{\sqrt{\hat{v}^o_t}}\right\rangle\\
& =  \frac{b}{4a} \frac{1}{\mathbf{1}_n^{\top}\tilde{v}^o_{rt}} \left(\left\langle \tilde{m}_t^s, \frac{\tilde{m}_t^s}{\sqrt{\hat{v}^o_t}}\right\rangle \|\tilde{G}_t^o\|^2-\sum_{k=1}^n \sum_{j=1}^m\frac{(\tilde{m}_t^s)_{kj}^2\sqrt{\mathbf{1}_n^{\top}\tilde{v}^o_{rt}}(\tilde{G}_t^o)^{\odot 2}_{k,:}\mathbf{1}_m}{\sqrt{(\tilde{v}^o_{rt})_k (\tilde{v}^o_{ct})_j}}\right)\\
& \leq \frac{bR}{4a}\left\langle \tilde{m}_t^s, \frac{\tilde{m}_t^s}{\sqrt{\hat{v}^o_t}}\right\rangle,
\end{aligned}
\end{equation}}
where the inequality comes from Assumption~\ref{assump:main-thm_appendix} (3) and the fact that $-\sum_{k=1}^n \sum_{j=1}^m\frac{(\tilde{m}_t^s)_{kj}^2\sqrt{\mathbf{1}_n^{\top}\tilde{v}^o_{rt}}(\tilde{G}_t^o)^{\odot 2}_{k,:}\mathbf{1}_m}{\sqrt{(\tilde{v}^o_{rt})_k (\tilde{v}^o_{ct})_j}}\leq 0$.

Thirdly, since $\left\langle\frac{dR_t}{d\tilde{v}^o_{ct}}, \frac{d\tilde{v}^o_{ct}}{dt}\right\rangle$ is the inner product of two $1\times m$ vectors, we assume $(\tilde{v}^o_{ct})_l$ to be the $l$-th element of $\tilde{v}^o_{ct}$, and $(\tilde{G}_t^o)_{:,l}$ to be the $l$-th column of $\tilde{G}_t^o$. With the ODE dynamics of $\tilde{v}^o_{ct}$ in \eqref{eq:projfactor_infinitesimal_appendix}, we deduce
\begin{equation}\label{eq:dvcdt_appendix}
\begin{aligned}
\left\langle\frac{dR_t}{d\tilde{v}^o_{ct}}, \frac{d\tilde{v}^o_{ct}}{dt}\right\rangle &=
\frac{dR_t}{d\tilde{v}^o_{ct}} \left(\frac{d\tilde{v}^o_{ct}}{dt}\right)^{\top} = \sum_{l=1}^m \frac{dR_t}{d(\tilde{v}^o_{ct})_l} \left(\frac{d(\tilde{v}^o_{ct})_l}{dt}\right)\\
&= \sum_{l=1}^m  \left(\sum_{i=1}^n\left(\frac{1}{2a} \frac{-(\tilde{m}_t^s)^2_{il}\sqrt{\mathbf{1}_n^{\top}\tilde{v}^o_{rt}}}{2\sqrt{(\tilde{v}^o_{rt})_i(\tilde{v}^o_{ct})^3_l}}\right) \cdot 
b\left(\mathbf{1}_n^{\top}(\tilde{G}_t^o)^{\odot 2}_{:,l}-(\tilde{v}^o_{ct})_l\right)\right)\\
&=\frac{b}{4a}\sum_{l=1}^m  \sum_{i=1}^n\left( -\frac{(\tilde{m}_t^s)^2_{il}\mathbf{1}_n^{\top}(\tilde{G}_t^o)^{\odot 2}_{:,l}}{\sqrt{(\tilde{v}^o_{rt})_i(\tilde{v}^o_{ct})^3_l}} + \frac{(\tilde{m}_t^s)^2_{il}}{\sqrt{(\tilde{v}^o_{rt})_i(\tilde{v}^o_{ct})_l}}\right)\sqrt{\mathbf{1}_n^{\top}\tilde{v}^o_{rt}}\\
&\leq \frac{b}{4a}\sum_{l=1}^m  \sum_{i=1}^n\left(\frac{(\tilde{m}_t^s)^2_{il}}{\sqrt{(\tilde{v}^o_{rt})_i(\tilde{v}^o_{ct})_l}}\right)\sqrt{\mathbf{1}_n^{\top}\tilde{v}^o_{rt}}\\
&= \frac{b}{4a} \left\langle\frac{\tilde{m}_t^s}{\sqrt{\hat{v}^o_t}}, \tilde{m}_t^s\right\rangle,
\end{aligned}
\end{equation}
where the inequality is due to $\frac{b}{4a}\sum_{l=1}^m  \sum_{i=1}^n -\frac{(\tilde{m}_t^s)^2_{il}\mathbf{1}_n^{\top}(\tilde{G}_t^o)^{\odot 2}_{:,l}}{\sqrt{(\tilde{v}^o_{rt})_i(\tilde{v}^o_{ct})^3_l}}\leq 0$.

Combining 
\eqref{eq:Hamilton-derivative-expansion}, \eqref{eq:dMtdt_appendix}, \eqref{eq:dvrdt_appendix}, \eqref{eq:dvcdt_appendix}, and setting $a\geq (R+1)b/4a$, we have 
\begin{equation}
    \frac{d}{dt}\cH(W_t, \tilde{m}_t^s, \tilde{v}^o_{rt}, \tilde{v}^o_{ct}) \leq -\left(1-\frac{(R+1)b}{4a}\right) \left\langle\frac{\tilde{m}_t^s}{\sqrt{\hat{v}^o_t}}, \tilde{m}_t^s\right\rangle \leq 0, 
\end{equation}
which demonstrates that the Lyapunov function $\cH(W_t, \tilde{m}_t^s, \tilde{v}^o_{rt}, \tilde{v}^o_{ct})$ is monotonically decreasing along the ODE trajectory.

Furthermore, define \[
\mathcal{I} = \left\{\text{the union of complete trajectories satisfying } \frac{d}{dt} \cH(W_t, \tilde{m}_t^s, \tilde{v}^o_{rt}, \tilde{v}^o_{ct}) = 0, \, \forall t \right\}.
\]

Subsequently, by LaSalle’s invariance principle \cite{lasalle1960some}, as $t\to +\infty$, the accumulation points of any trajectory $\{(W_t, \tilde{m}_t^s, \tilde{v}^o_{rt}, \tilde{v}^o_{ct})\}_t$ lies in $\cI$. By Assumption~\ref{assump:main-thm_appendix} (1), the points in the limit set $\mathcal{I}$ should satisfy that for any $t$, $\tilde{G}^s_t =0$.
Hence, by Assumption~\ref{assump:main-thm_appendix} (2), we further have $G_t = \nabla\cL(W_t)=0$ for any $t$. This indicates that any trajectory will converge to the local optimum.
\end{proof}

Theorem~\ref{thm:convergence_projfactor_appendix} demonstrates the Lyapunov function's monotonic descent and the infinitesimal ODE system's convergence to local optimum. These results imply that ProjFactor stabilizes at a local optimum, provided the step sizes are sufficiently small.

\section{More Empirical Analysis}
\label{appendix:empirical_analysis}
In this section, we present additional empirical results. Specifically, we provide descriptions of the baseline methods used for comparison in \cref{appendix:baseline_description}. The loss curves corresponding to different projection granularities are reported in \cref{appendix_loss_curves_of_diff_proj_granularities}. Further details on the numerical error testing procedure are elaborated in \cref{appendix:numerical_error_testing}. In \cref{appendix:diff_ways_of_proj_matrix_gen}, we explore alternative approaches for generating projection matrices. The impact of warm-up steps on our optimization scheme is examined in \cref{appendix:study_on_the_warmup_steps}, while the effect of projection matrix update frequency is analyzed in \cref{appendix:study_on_the_Update_Frequency}. Additionally, we evaluate the throughput efficiency of different methods in \cref{appendix:throughput_analysis}. Further experimental results are provided in \cref{appendix:exp_with_diff_models} and \cref{appendix:Specific_Statistics}. A detailed overview of the hyperparameters used in our experiments is given in \cref{appendix:implementation_details}. Finally, sample prompts utilized during training are presented in \cref{appendix:showcase_of_Prompts}.

We use the implementation from HuggingFace for the Adam, Adafactor, and LoRA approaches while all other methods were implemented on our own by referencing their respective open-source code repositories. The complete code for our implementations and experiments will be released.

\subsection{Description of Baselines}
\label{appendix:baseline_description}
In this section, we provide a brief overview of the finetuning methods compared in our study. 
\begin{itemize}
    \item \textbf{Adam}~\cite{kingma2014adam} is an adaptive gradient-based optimization method that maintains exponentially moving averages of both the first and second moments of gradients. By adjusting step sizes for each parameter individually, Adam often converges faster and requires less finetuning of hyperparameters compared to non-adaptive methods.
    \item \textbf{Adafactor}~\cite{adafactor} generalizes the adaptive learning-rate principles of Adam but employs a factored approximation of second-order moments. This factorization reduces memory overhead, making Adafactor particularly suitable for large-scale training while preserving performance benefits similar to Adam.
    \item \textbf{LoRA}~\cite{lora} (Low-Rank Adaptation) provides an efficient approach to finetuning large language models by introducing a trainable, low-rank decomposition into selected layers. This design substantially reduces both memory usage and training time, all while maintaining competitive performance levels.
    
    \item \textbf{Galore}~\cite{Galore} is a recent optimizer that extends low-rank adaptation techniques by exploiting the low-rank structure in the update gradient rather than in the parameters themselves. Specifically, it applies singular value decomposition (SVD) to construct a projection matrix that projects the original gradient into a subspace.
    
    \item \textbf{fira}~\cite{fira} improves upon Galore by combining both the projected gradient and the residual component in the original space. Nonetheless, this design necessitates retaining the original gradient for each update step, which increases memory consumption under gradient accumulation.
    
    \item \textbf{APOLLO}~\cite{appollo} is a newly introduced, memory-efficient optimization algorithm that approximates channel-wise learning-rate scaling through an auxiliary low-rank optimizer state derived from random projections.
\end{itemize}
Besides, \textbf{FLoRA}~\cite{FLoRA} is equivalent to our methods by setting the granularity factor $c=1$, which also generates the gradient from Gaussian distribution.

\subsection{Loss Curves of Different Projection Granularities}
\label{appendix_loss_curves_of_diff_proj_granularities}
\cref{fig:loss_curves_of_diff_grains} presents the loss curves of two distinct-grained configurations of projections, Fine-Grained Projection $(c=256, r=1)$, and Ordinary-Grained Projection $(c=1, r=256)$, evaluated on the Commonsense170k dataset under a fixed Memory Budget $\mathcal{M}=256$. \cref{fig:loss_curves_of_diff_grains}(a) shows the performance when trained LLaMA2-7B with ProjFactor, where both projection configurations rapidly reduce the loss and perform comparably in the initial training phase. However, Fine-Grained Projection demonstrates a clear advantage over Ordinary-Grained Projection as the training continues. In contrast, \cref{fig:loss_curves_of_diff_grains}(b) illustrates the results obtained when LLaMA2-7B is trained with the Subspace Scheme (see \eqref{eq:ss_os_update_rules} and right of \cref{fig:subspace_os_scheme}). Here, Fine-Grained Projection consistently outperforms Ordinary-Grained Projection, even in the earlier stages of training. 

\begin{figure}[h!]
    \centering
    \includegraphics[width=0.98\linewidth]{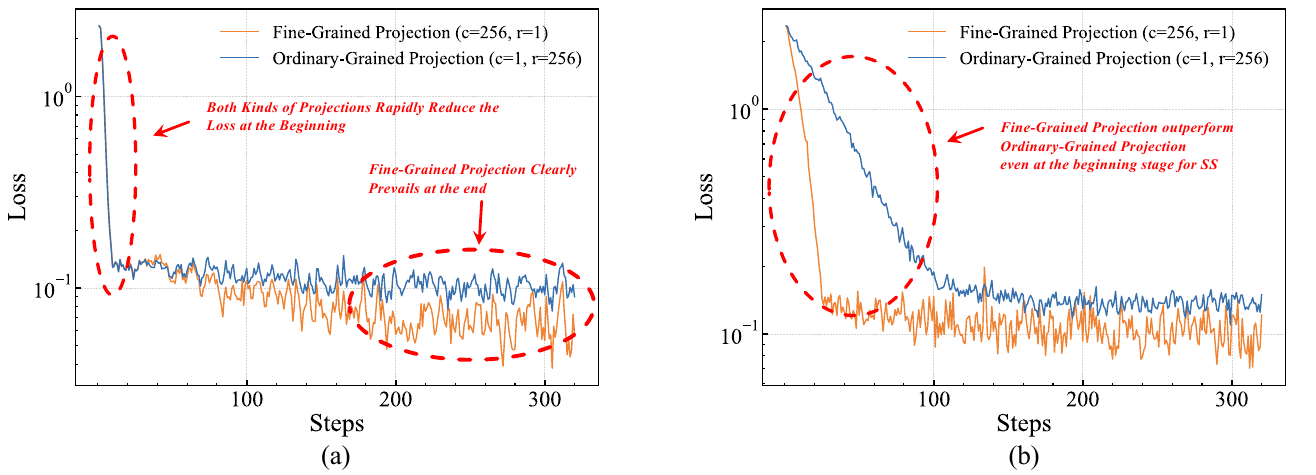} 
    \caption{Loss curves of different grained projections on the Commonsense170k dataset: (a) Training LLaMA2-7B with \textbf{ProjFactor}; (b) Training LLaMA2-7B with the Subspace Scheme (as described in \cref{sec:4}). The performance of two types of grained projections is compared under the same memory budget of $256$.}
    \label{fig:loss_curves_of_diff_grains}
\end{figure}

\subsection{Numerical Error Testing for the Projection Operation}
\label{appendix:numerical_error_testing}
\begin{figure}[h!]
    \centering
    \includegraphics[width=0.9\linewidth]{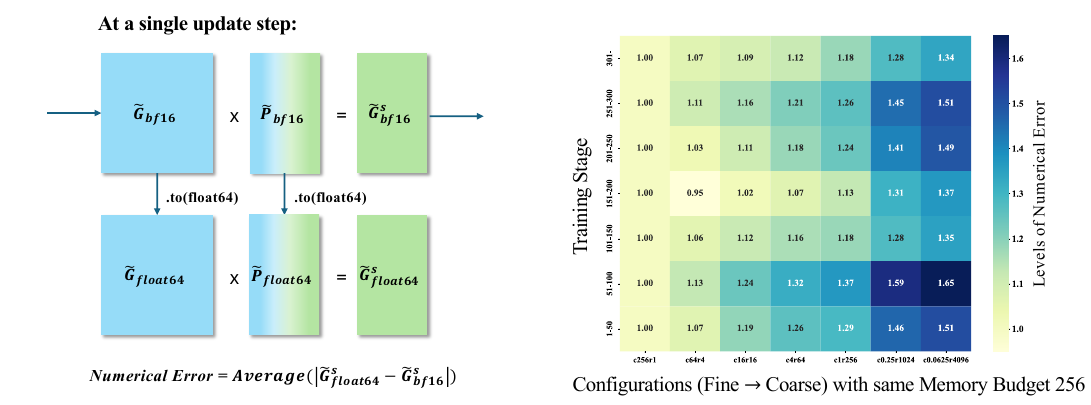} 
    \caption{\textbf{Left:} Illustration of computational numerical error for a single parameter matrix during an update step. The numerical error of the projection operator is defined as the absolute difference between $\tilde{\bm{G}}_{bf16}\tilde{P}_{bf16}$ and $\tilde{\bm{G}}_{float64}\tilde{P}_{float64}$, averaged across all parameter matrices applied low-rank gradient projection.
    \textbf{Right:} Comparison of projections' numerical errors for configurations under a constant memory budget $\mathcal{M}=256$. The y-axis denotes the training steps, which is divided into 7 stages, while the x-axis is 7 different-grained configurations.}
    \label{fig:numerical_error_exp_apx}
\end{figure}
In this section, we elaborate on the numerical error testing discussed in \cref{sec:Projections with Finer Granularity Fairly Demonstrate Superior Efficacy}. We evaluate numerical errors introduced by projection operations under varying granularities with a fixed memory budget $\mathcal{M}=256$. The procedure is shown in the LHS of \cref{fig:numerical_error_exp_apx}. Specifically, given a VLoRP configuration, at each update step, we compute $\tilde{\bm{G}}^s = \tilde{\bm{G}}\tilde{P}$ using bfloat16, denoted as $\tilde{\bm{G}}^s_{bf16} = \tilde{\bm{G}}_{bf16} \tilde{P}_{bf16}$. Simultaneously, we create double-precision copies $\tilde{\bm{G}}_{float64}$ and $\tilde{P}_{float64}$, which are numerically equivalent to their bfloat16 counterparts but retain higher precision (\textit{e.g.}, $0.1100$ vs. $0.11$). Using these, we compute $\tilde{\bm{G}}^s_{float64} = \tilde{\bm{G}}_{float64} \tilde{P}_{float64}$. The discrepancy between $\tilde{\bm{G}}^s_{float64}$ and $\tilde{\bm{G}}^s_{bf16}$ reflects numerical errors introduced by low-precision computation. For example, in low precision, $0.11 \times 0.11 = 0.01$, while in high precision, $0.1100 \times 0.1100 = 0.0121$, yielding an error of $0.0021$. The optimization is performed using the $\tilde{\bm{G}}^s_{bf16}$ datatype.

To quantify this error, we compute the absolute element-wise difference between $\tilde{\bm{G}}^s_{float64}$ and $\tilde{\bm{G}}^s_{bf16}$, averaging across all elements and parameter matrices subject to low-rank projections. This yields a numerical error metric $\delta_{(c, r)}$ for a given configuration $(c, r)$:
\begin{equation}
\delta_{(c,r)} = \operatorname{Average}_{\forall \tilde{\bm{G}}^s}\left(\operatorname{Average}_{\forall (\tilde{\bm{G}}^s)_{ij} \in \tilde{\bm{G}}^s}\left| ((\tilde{\bm{G}}^s)_{ij})_{float64} - ((\tilde{\bm{G}}^s)_{ij})_{bf16} \right|\right)
\end{equation}

The results, shown on the RHS of \cref{fig:numerical_error_exp_apx}, examine seven configurations $(c, r)$, ranging from the finest $(c=256, r=1)$ to the coarsest $(c=0.0625, r=4096)$ under the constraint $\mathcal{M}=256$. We normalize $\delta_{(c, r)}$ by $\delta_{(c=256, r=1)}$ and compute a moving average over every 50 steps. The numerical error increases almost monotonically as the configuration shifts from fine-grained $(c=256, r=1)$ to coarse-grained $(c=0.0625, r=4096)$, a trend consistent throughout training. This observation partly explains why finer-grained configurations typically yield better performance given a fixed memory budget $\mathcal{M}$---fine-grained projections have lower numerical errors introduced by using the low-precision datatype which makes the finer granularity of projection (larger $c$ albeit smaller $r$) a better choice among all the configurations shared a same $\mathcal{M}$.

\subsection{Different Ways of Projection Matrix Generation}
\label{appendix:diff_ways_of_proj_matrix_gen}
In this section, we investigate three approaches for generating the projection matrix. The first method, "normal," involves sampling the projection matrix from a standard normal distribution. The second, "Rademacher," samples the projection matrix from the Rademacher distribution, which consists of entries drawn from \(\{-1, +1\}\) with equal probability. The third approach, "SVD," constructs the projection matrix using singular value decomposition (SVD), following the methodology proposed in Galore~\cite{Galore}. These methods are evaluated on the Commonsense Reasoning task using the LLaMA2-7B model as the testbed. For all experiments, the maximum input sequence length is set to 1024, and the effective batch size is configured to 512. We finetune the model for 1 epoch in total.

\begin{figure}[h!]
    \centering
    \includegraphics[width=0.95\linewidth]{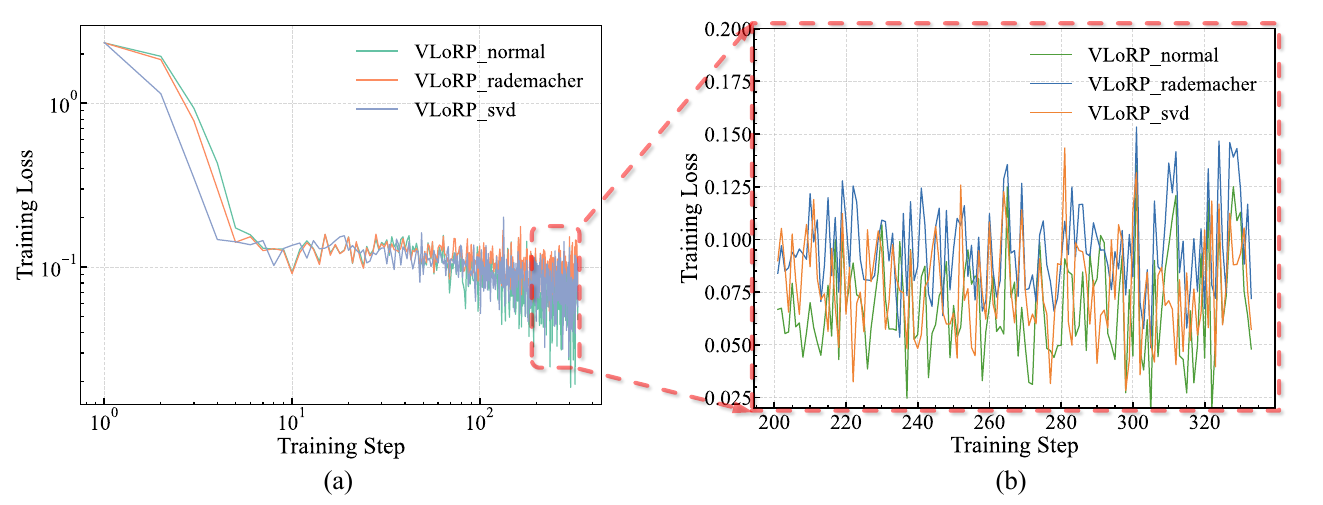} 
    \caption{Comparative Analysis of Projection Matrix Generations in LLaMA2-7B Training on the Commonsense170k Dataset: (a). Training loss curves across all training steps; (b). Zoomed-in view of convergence behavior highlighting loss variability among projection methods.}
    \label{fig:diff_ways_proj_matrix}
\end{figure}

As illustrated in \cref{fig:diff_ways_proj_matrix}(a) and the zoomed-in view in \cref{fig:diff_ways_proj_matrix}(b), the three projection matrix generation methods—normal, Rademacher, and SVD—demonstrate broadly similar training dynamics. Each approach effectively reduces the training loss during the initial steps and ultimately converges to nearly equivalent final loss values. The zoomed-in view highlights a subtle difference, with the normal-based projection slightly outperforming the others in achieving a lower training loss after convergence. These results align with the Johnson–Lindenstrauss lemma, which asserts that in high-dimensional spaces, random projections can effectively preserve geometric properties. Therefore, we adopt the sampling from normal distribution as the method for generating the projection matrix, as it provides computational and storage efficiency without compromising performance.

\subsection{Ablation Study on Update Frequency of Projection Matrix}
\label{appendix:study_on_the_Update_Frequency}
According to recent research on low-rank projection in memory-efficient LLM training~\citep{FLoRA, Galore, welore}, it is advantageous to keep the vectors \( \bm{v}_i\) constant over several training steps before resampling or reconstructing them, in order to balance the trade-off between variance and biases during training.
\begin{figure}[h!]
    \centering
    \includegraphics[width=0.95\linewidth]{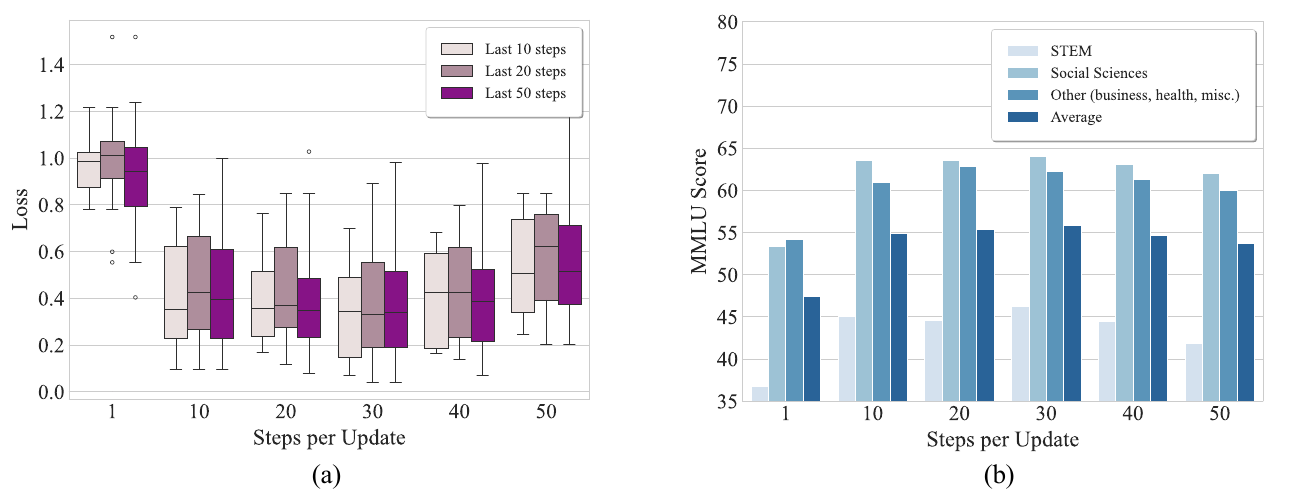} 
    \caption{Ablation study on the update frequency of the projection matrix: (a). Training loss statistics across different update frequencies of the projection matrix. "Last n steps" represents the number of final steps used to calculate the statistics; (b). MMLU scores for different categories.}
    \label{fig:update_frequency_of_proj_matrix}
\end{figure}

\cref{fig:update_frequency_of_proj_matrix} presents an ablation study evaluating the effect of varying the update frequency of the projection matrix on training loss and MMLU performance. \cref{fig:update_frequency_of_proj_matrix}(a) shows the training loss statistics calculated over the last 10, 20, and 50 steps of training for different update frequencies, while \cref{fig:update_frequency_of_proj_matrix}(b) illustrates the MMLU scores across STEM, social sciences, other (e.g., business, health), and the overall average for the same frequencies.

In \cref{fig:update_frequency_of_proj_matrix}(a), the box plots highlight that a lower update frequency, such as 1, results in higher loss values with greater variability. As the update frequency increases, the loss decreases and stabilizes, with frequencies of 20-30 demonstrating relatively consistent performance. This suggests that updating the projection matrix too frequently may introduce high variance leading to the instability of training while a larger interval for updating the projection matrix can cause the model’s updates to become constrained within fixed subspaces, leading to a degradation in performance. \cref{fig:update_frequency_of_proj_matrix}(b) reveals the impact of update frequency on MMLU scores. A similar trend emerges, where frequencies of 30 yield the highest average scores though the performance gain is slight. However, a very low-frequency 1, which updates the projection matrix at each update step results in significantly lower scores across all categories, particularly in STEM and social sciences, indicating the negative impact of high variance on the model’s ability to generalize.

\subsection{Study on the Warmup Steps}
\label{appendix:study_on_the_warmup_steps}
\cref{fig:warmup_curves} illustrates the effect of varying warm-up steps on the training loss when training our VLoRP framework with ProjFactor. The experiments evaluate five configurations: no warm-up, and warm-up steps set to 10, 20, 50, and 100. The x-axis represents the training steps on a logarithmic scale, while the y-axis shows the training loss.

\begin{figure}[h!]
    \centering
    \includegraphics[width=0.95\linewidth]{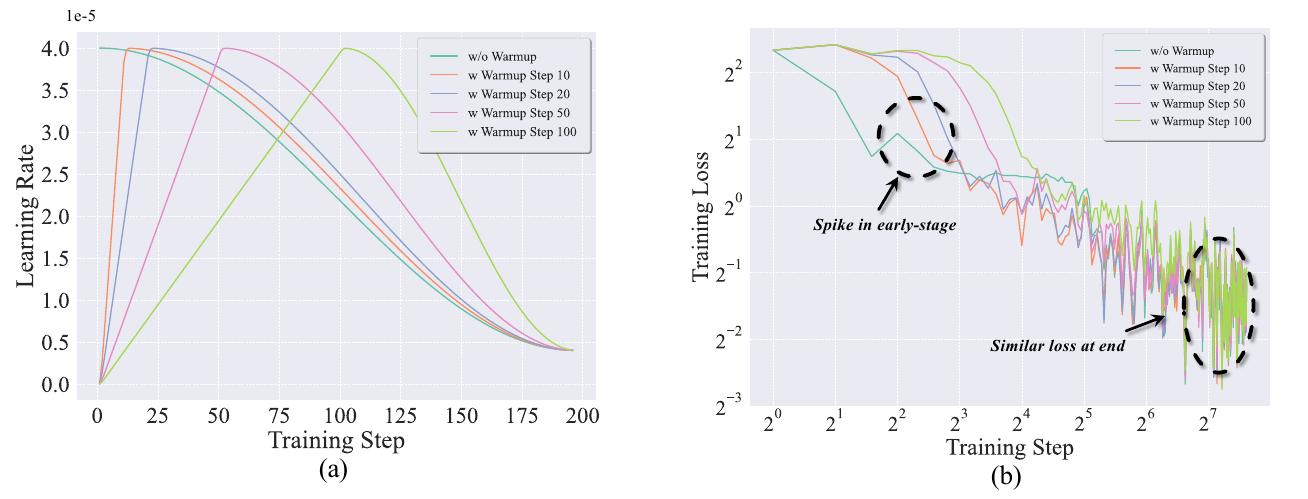} 
    \caption{Effect of different warm-up steps on training loss for VLoRP: (a). Learning rate schedules with varying warm-up steps; (b). Impact of different warm-up steps on the model's convergence.}
    \label{fig:warmup_curves}
\end{figure}

The results demonstrate that warm-up steps make the early loss curve smoother. Specifically, the configuration without warm-up exhibits a sharp spike in training loss during the initial steps, suggesting instability in optimization at the start of training. In contrast, introducing a warm-up phase helps to mitigate this instability, as evidenced by smoother loss curves for all configurations with warm-up steps. While early-stage differences are prominent, the training loss for all configurations converges to similar values by the end of training. This suggests that the choice of warm-up steps primarily impacts the transient phase of training without significantly affecting the final model performance. Notably, longer warm-up periods incur a trade-off, as they delay the convergence but enhance the stability of training in the initial phase.

\subsection{Throughput Analysis}
\label{appendix:throughput_analysis}
\begin{figure}[h!]
    \centering
    \includegraphics[width=0.7\textwidth]{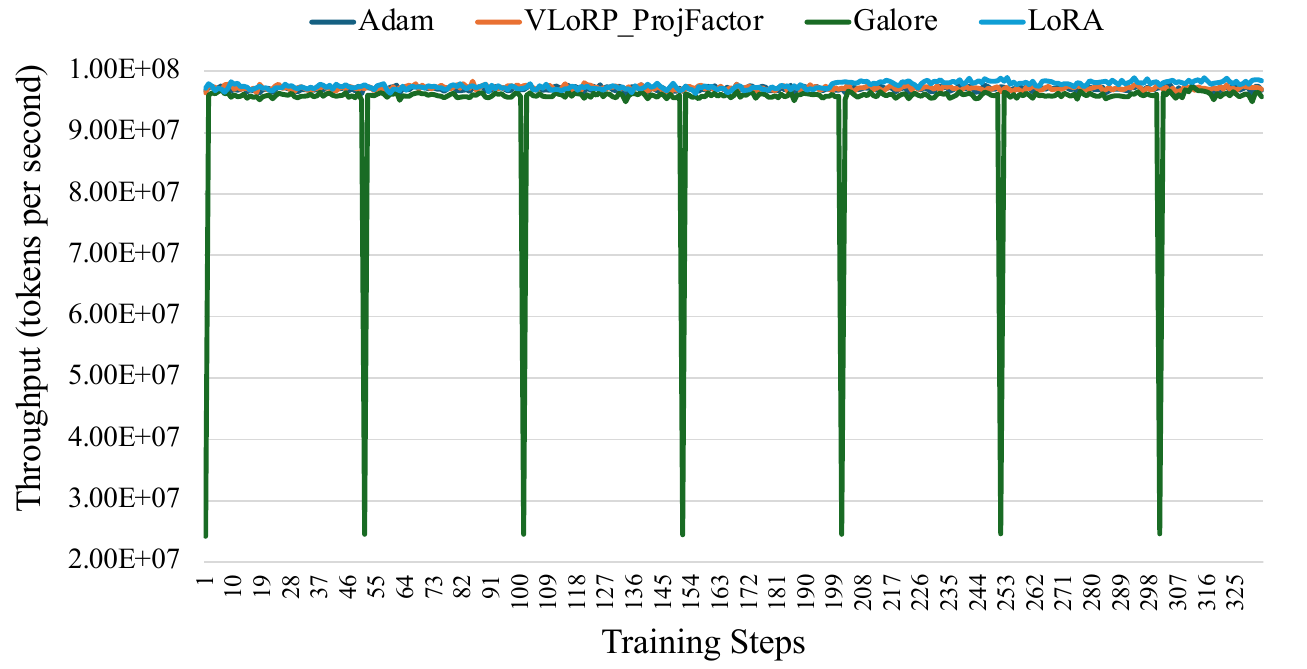} 
    \caption{Throughput Analysis of our method, LoRA, Galore, and Adam. Throughput, plotted on the y-axis, is defined as the number of tokens (including padding tokens) processed per second, while the x-axis represents the training step. Please note that the value of throughput would be influenced by factors such as the input sentence length, effective batch size, GPU hardware, and the number of padding tokens.}
    \label{fig:throughput}
\end{figure}
This section presents a comparative analysis of the throughput performance of VLoRP, LoRA, Galore, and the baseline Adam optimizer during training. For LoRA and Galore, the rank $r$ is set to $256$. In the case of VLoRP, the projection matrix is generated by sampling from a standard normal distribution, and the ProjFactor optimization algorithm is employed. For VLoRP, the granularity factor $c$ is set to $256$, and the rank $r$ is fixed at $1$. These methods are evaluated on the Commonsense Reasoning task using the LLaMA2-7B model as the testbed. All experiments are conducted with a maximum input sequence length of $1024$ and an effective batch size of $512$.

\cref{fig:throughput} illustrates the throughput of various methods over the course of 200 training steps. Overall, our method, demonstrates comparable throughput to both LoRA and the baseline Adam optimizer, while outperforming the Galore method. A notable observation is the periodic plunges in throughput experienced by Galore, occurring approximately every 50 iterations. This behavior can be attributed to the singular value decomposition (SVD) operation~\cite{Galore} used to regenerate the projection matrix in the GaLore algorithm, with the regeneration interval explicitly set to 50 iterations, which, according to our experiments, is the optimal update interval for projection matrices in GaLore.

\subsection{Experiments with Different Models}
\label{appendix:exp_with_diff_models}
Apart from LLaMA2-7B~\cite{touvron2023llama}, we also evaluated the effectiveness of our VLoRP approach on both a weaker model, GPT2-XL, and a more powerful model, LLaMA3.2-3B~\cite{dubey2024llama}. The results are presented in \cref{tab:Commonsense_Reasoning_GPT2-XL} and \cref{fig:gsm8k_LLaMA3b}.

\begin{table*}[h]
\footnotesize
\centering
\caption{Performance Comparison on Commonsense Benchmark Tasks with GPT2-XL. The table presents the results for several baseline methods and different configurations of our proposed VLoRP approach across eight commonsense reasoning tasks. All models are first finetuned on the Commonsense170k~\cite{commonsense170k} dataset and then evaluated separately on different tasks. We set the Memory Budget as 64.}
\resizebox{1.0\textwidth}{!}{%
\begin{tabular}{l|cccccccc|c}
    \toprule
    \textbf{Methods} & \textbf{ARC\_C} & \textbf{ARC\_E} & \textbf{BoolQ} & \textbf{HellaSwag} & \textbf{OBQA} & \textbf{PIQA} & \textbf{SIQA} & \textbf{winogrande}& \textbf{Avg.} \\
    \midrule
    \textbf{Adam} & 25.51 \tiny{$\pm$ 1.27} & 57.87 \tiny{$\pm$ 1.01} & 61.19 \tiny{$\pm$ 0.85} & 40.15 \tiny{$\pm$ 0.49} & 22.40 \tiny{$\pm$ 1.87} & 71.22 \tiny{$\pm$ 1.06} & 40.23 \tiny{$\pm$ 1.11} & 58.64 \tiny{$\pm$ 1.38} & 47.15\\
    \textbf{Adafactor} & 25.09 \tiny{$\pm$ 1.27} & 57.53 \tiny{$\pm$ 1.01} & 59.63 \tiny{$\pm$ 0.86} & 39.82 \tiny{$\pm$ 0.49} & 23.00 \tiny{$\pm$ 1.88} & 71.11 \tiny{$\pm$ 1.06} & 40.02 \tiny{$\pm$ 1.11} & 59.19 \tiny{$\pm$ 1.38} & 46.92 \\
    \midrule
    \textbf{LoRA(r=64)} & 24.83 \tiny{$\pm$ 1.26} & 57.11 \tiny{$\pm$ 1.02} & 58.59 \tiny{$\pm$ 0.86} & 39.98 \tiny{$\pm$ 0.49} & 22.80 \tiny{$\pm$ 1.88} & 70.89 \tiny{$\pm$ 1.06} & 40.28 \tiny{$\pm$ 1.11} & 59.12 \tiny{$\pm$ 1.38} & 46.70 \\
    \textbf{Galore(r=64)} & 25.09 \tiny{$\pm$ 1.27} & 57.83 \tiny{$\pm$ 1.01} & 59.08 \tiny{$\pm$ 0.86} & 40.20 \tiny{$\pm$ 0.49} & 22.80 \tiny{$\pm$ 1.88} & 71.00 \tiny{$\pm$ 1.06} & 40.48 \tiny{$\pm$ 1.11} & 57.38 \tiny{$\pm$ 1.39} & 46.73 \\
    \textbf{fira(r=64)} & 25.09 \tiny{$\pm$ 1.27} & 57.87 \tiny{$\pm$ 1.01} & 59.17 \tiny{$\pm$ 0.86} & 40.13 \tiny{$\pm$ 0.49} & 22.80 \tiny{$\pm$ 1.88} & 70.89 \tiny{$\pm$ 1.06} & 40.43 \tiny{$\pm$ 1.11} & 58.01 \tiny{$\pm$ 1.39} & 46.80 \\
    \textbf{APOLLO(r=64)} & 24.74 \tiny{$\pm$ 1.26} & 58.04 \tiny{$\pm$ 1.01} & 59.69 \tiny{$\pm$ 0.86} & 39.99 \tiny{$\pm$ 0.49} & 22.20 \tiny{$\pm$ 1.86} & 70.51 \tiny{$\pm$ 1.06} & 40.43 \tiny{$\pm$ 1.11} & 58.33 \tiny{$\pm$ 1.39} & 46.74 \\
    \midrule
    \small{\textbf{VLoRP}} & \\
    - \makecell[l]{\bm{$c=2^{-6},$}} \makecell[l]{\bm{$r=2^{12}$}} & 24.91 \tiny{$\pm$ 1.26} & 57.49 \tiny{$\pm$ 1.01} & 58.84 \tiny{$\pm$ 0.86} & 40.07 \tiny{$\pm$ 0.49} & 23.20 \tiny{$\pm$ 1.89} & 71.00 \tiny{$\pm$ 1.06} & 40.48 \tiny{$\pm$ 1.11} & 59.04 \tiny{$\pm$ 1.38} & 46.88 \\
    - \makecell[l]{\bm{$c=2^{-4},$}} \makecell[l]{\bm{$r=2^{10}$}} & 25.00 \tiny{$\pm$ 1.27} & 57.41 \tiny{$\pm$ 1.01} & 59.20 \tiny{$\pm$ 0.86} & 40.04 \tiny{$\pm$ 0.49} & 23.20 \tiny{$\pm$ 1.89} & 70.73 \tiny{$\pm$ 1.06} & 40.38 \tiny{$\pm$ 1.11} & 58.48 \tiny{$\pm$ 1.38} & 46.81 \\
    - \makecell[l]{\bm{$c=2^{-2},$}} \makecell[l]{\bm{$r=2^{8}$}} & 25.00 \tiny{$\pm$ 1.27} & 57.37 \tiny{$\pm$ 1.01} & 59.51 \tiny{$\pm$ 0.86} & 40.01 \tiny{$\pm$ 0.49} & 23.00 \tiny{$\pm$ 1.88} & 71.00 \tiny{$\pm$ 1.06} & 40.48 \tiny{$\pm$ 1.11} & 58.56 \tiny{$\pm$ 1.38} & 46.87 \\
    - \makecell[l]{\bm{$c=2^{0},$}} \makecell[l]{\bm{$r=2^{6}$}} & 25.51 \tiny{$\pm$ 1.27} & 57.45 \tiny{$\pm$ 1.01} & 59.33 \tiny{$\pm$ 0.86} & 40.18 \tiny{$\pm$ 0.49} & 23.20 \tiny{$\pm$ 1.89} & 70.67 \tiny{$\pm$ 1.06} & 40.28 \tiny{$\pm$ 1.11} & 58.25 \tiny{$\pm$ 1.39} & 46.86 \\ 
    - \makecell[l]{\bm{$c=2^{0},$}} \makecell[l]{\bm{$r=2^{8}$}} & 25.17 \tiny{$\pm$ 1.27} & 57.62 \tiny{$\pm$ 1.01} & 59.54 \tiny{$\pm$ 0.86} & 40.08 \tiny{$\pm$ 0.49} & 22.60 \tiny{$\pm$ 1.87} & 71.22 \tiny{$\pm$ 1.06} & 40.38 \tiny{$\pm$ 1.11} & 58.72 \tiny{$\pm$ 1.38} & 46.92 \\
    - \makecell[l]{\bm{$c=2^{2},$}} \makecell[l]{\bm{$r=2^{4}$}} & 25.09 \tiny{$\pm$ 1.27} & 57.62 \tiny{$\pm$ 1.01} & 59.79 \tiny{$\pm$ 0.86} & 40.08 \tiny{$\pm$ 0.49} & 22.80 \tiny{$\pm$ 1.88} & 70.89 \tiny{$\pm$ 1.06} & 40.33 \tiny{$\pm$ 1.11} & 58.96 \tiny{$\pm$ 1.38} & 46.94 \\ 
    - \makecell[l]{\bm{$c=2^{4},$}} \makecell[l]{\bm{$r=2^{2}$}} & 25.17 \tiny{$\pm$ 1.27} & 57.45 \tiny{$\pm$ 1.01} & 59.72 \tiny{$\pm$ 0.86} & 40.02 \tiny{$\pm$ 0.49} & 23.40 \tiny{$\pm$ 1.90} & 70.84 \tiny{$\pm$ 1.06} & 40.53 \tiny{$\pm$ 1.11} & 58.96 \tiny{$\pm$ 1.38} & 47.01 \\
    \rowcolor{lightblue}
    - \makecell[l]{\bm{$c=2^{6},$}} \makecell[l]{\bm{$r=2^{0}$}} & 25.51 \tiny{$\pm$ 1.27} & 57.87 \tiny{$\pm$ 1.01} & 60.67 \tiny{$\pm$ 0.85} & 40.11 \tiny{$\pm$ 0.49} & 23.00 \tiny{$\pm$ 1.88} & 71.06 \tiny{$\pm$ 1.06} & 40.23 \tiny{$\pm$ 1.11} & 58.56 \tiny{$\pm$ 1.38} & 47.13 \\
    \bottomrule
\end{tabular}
}
\label{tab:Commonsense_Reasoning_GPT2-XL}
\end{table*}

Specifically, Table~\ref{tab:Commonsense_Reasoning_GPT2-XL} shows the performance of VLoRP on GPT2-XL across eight commonsense reasoning benchmarks. VLoRP consistently outperforms or remains competitive with other PeFT methods such as LoRA, GaLore, and fira. Besides, among different configurations of VLoRP, finer-grained projections (\(c = 2^6, r = 1\)) achieve the highest average score (47.13), demonstrating the effectiveness of fine-grained low-rank projections.  

\begin{figure}[h!]
    \centering
    \includegraphics[width=0.95\linewidth]{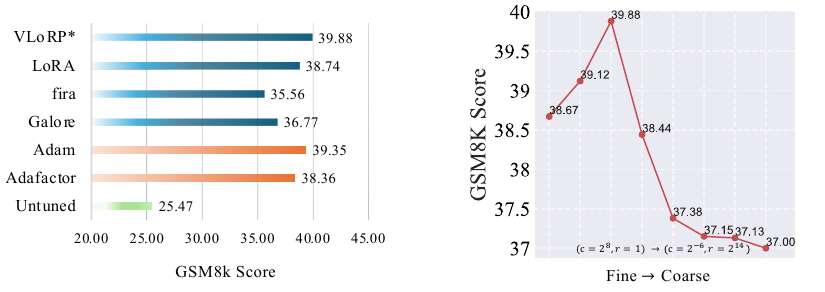} 
    \caption{\textbf{Left:} Performance comparison of different methods on GSM8K with LLaMA3.2-3B. \textbf{Right:} Performance comparison among the configurations of VLoRP with $\mathcal{M} = 256$. The x-axis indicates configurations from fine to coarse (left to right)}
    \label{fig:gsm8k_LLaMA3b}
\end{figure}

Figure~\ref{fig:gsm8k_LLaMA3b} further evaluates VLoRP on LLaMA3.2-3B using the GSM8K benchmark. The left subfigure compares VLoRP to other adaptation methods, where VLoRP achieves the highest GSM8K score (39.88). The right subfigure examines the impact of projection granularity on VLoRP’s performance. Instead of achieving the best performance at the finest granularity $(c=2^8, r=1)$, we find that the slightly coarser configuration (\(c = 2^4, r = 2^4\)) reaches the highest GSM8K score of 39.88. However, a general trend where finer-grained configurations yield better results still holds. 


\subsection{Specific Statistics of \cref{fig:ablation_of_memory_budget}}
\label{appendix:Specific_Statistics}
In this section, we present the detailed data for \cref{fig:ablation_of_memory_budget}, which displays the average performance across eight commonsense reasoning tasks. From \cref{tab:commonsense_memory_budget_256}, \cref{tab:commonsense_memory_budget_64}, and \cref{tab:commonsense_memory_budget_16}, it is evident that the finer configuration (larger $c$ and smaller $r$) consistently outperforms coarser configurations across nearly all tasks when evaluated under the same memory budget.

\begin{table*}[h!]
\footnotesize
\centering
\caption{Performance Comparison on Commonsense Benchmark Tasks (Memory Budget 256).}
\resizebox{1.0\textwidth}{!}{%
\begin{tabular}{l|cccccccc|c}
    \toprule
    \textbf{Configurations} & \textbf{ARC\_C} & \textbf{ARC\_E} & \textbf{BoolQ} & \textbf{HellaSwag} & \textbf{OBQA} & \textbf{PIQA} & \textbf{SIQA} & \textbf{winogrande}& \textbf{Avg.} \\
    \midrule
    - \makecell[l]{\bm{$c=2^{-6},$}} \makecell[l]{\bm{$r=2^{14}$}} &   42.92 \tiny{$\pm$ 1.45} & 76.22 \tiny{$\pm$ 0.87} & 79.27 \tiny{$\pm$ 0.71} & 57.53 \tiny{$\pm$ 0.49} & 32.60 \tiny{$\pm$ 2.10} & 77.91 \tiny{$\pm$ 0.97} & 46.72 \tiny{$\pm$ 1.13} & 69.85 \tiny{$\pm$ 1.29} & 60.38 \\
    - \makecell[l]{\bm{$c=2^{-4},$}} \makecell[l]{\bm{$r=2^{12}$}} &   43.34 \tiny{$\pm$ 1.45} & 76.26 \tiny{$\pm$ 0.87} & 79.54 \tiny{$\pm$ 0.71} & 57.58 \tiny{$\pm$ 0.49} & 32.00 \tiny{$\pm$ 2.09} & 77.64 \tiny{$\pm$ 0.97} & 46.72 \tiny{$\pm$ 1.13} & 70.01 \tiny{$\pm$ 1.29} & 60.39 \\
    - \makecell[l]{\bm{$c=2^{-2},$}} \makecell[l]{\bm{$r=2^{10}$}} &   43.34 \tiny{$\pm$ 1.45} & 76.30 \tiny{$\pm$ 0.81} & 79.45 \tiny{$\pm$ 0.71} & 57.47 \tiny{$\pm$ 0.49} & 32.20 \tiny{$\pm$ 2.09} & 77.75 \tiny{$\pm$ 0.97} & 46.78 \tiny{$\pm$ 1.13} & 70.01 \tiny{$\pm$ 1.29} & 60.41 \\
    - \makecell[l]{\bm{$c=2^{0},$}} \makecell[l]{\bm{$r=2^{8}$}} &   43.69 \tiny{$\pm$ 1.45} & 77.02 \tiny{$\pm$ 0.86} & 79.27 \tiny{$\pm$ 0.71} & 57.49 \tiny{$\pm$ 0.49} & 31.80 \tiny{$\pm$ 2.08} & 78.07 \tiny{$\pm$ 0.97} & 47.49 \tiny{$\pm$ 1.13} & 69.77 \tiny{$\pm$ 1.29} & 60.57 \\ 
    - \makecell[l]{\bm{$c=2^{2},$}} \makecell[l]{\bm{$r=2^{6}$}} &   44.03 \tiny{$\pm$ 1.45} & 76.81 \tiny{$\pm$ 0.87} & 79.17 \tiny{$\pm$ 0.71} & 57.59 \tiny{$\pm$ 0.49} & 31.80 \tiny{$\pm$ 2.08} & 78.02 \tiny{$\pm$ 0.97} & 47.19 \tiny{$\pm$ 1.13} & 69.53 \tiny{$\pm$ 1.29} & 60.53 \\ 
    - \makecell[l]{\bm{$c=2^{4},$}} \makecell[l]{\bm{$r=2^{4}$}} &   44.71 \tiny{$\pm$ 1.45} & 77.27 \tiny{$\pm$ 0.86} & 79.42 \tiny{$\pm$ 0.71} & 57.50 \tiny{$\pm$ 0.49} & 32.20 \tiny{$\pm$ 2.09} & 77.86 \tiny{$\pm$ 0.97} & 47.54 \tiny{$\pm$ 1.13} & 70.09 \tiny{$\pm$ 1.29} & 60.82 \\
    - \makecell[l]{\bm{$c=2^{6},$}} \makecell[l]{\bm{$r=2^{2}$}} &  44.97 \tiny{$\pm$ 1.45} & 77.65 \tiny{$\pm$ 0.85} & 80.46 \tiny{$\pm$ 0.69} & 57.56 \tiny{$\pm$ 0.49} & 33.60 \tiny{$\pm$ 2.11} & 77.97 \tiny{$\pm$ 0.97} & 48.06 \tiny{$\pm$ 1.13} & 69.69 \tiny{$\pm$ 1.29} & 61.25 \\
    \rowcolor{lightblue}
    - \makecell[l]{\bm{$c=2^{8},$}} \makecell[l]{\bm{$r=2^{0}$}} &  45.56 \tiny{$\pm$ 1.46} & 77.78 \tiny{$\pm$ 0.85} & 80.58 \tiny{$\pm$ 0.69} & 57.59 \tiny{$\pm$ 0.49} & 34.00 \tiny{$\pm$ 2.12} & 77.86 \tiny{$\pm$ 0.97} & 48.16 \tiny{$\pm$ 1.13} & 69.69 \tiny{$\pm$ 1.29} & 61.40 \\
    \bottomrule
\end{tabular}
}
\label{tab:commonsense_memory_budget_256}
\end{table*}

\begin{table*}[h!]
\footnotesize
\centering
\caption{Performance Comparison on Commonsense Benchmark Tasks (Memory Budget 64).}
\resizebox{1.0\textwidth}{!}{%
\begin{tabular}{l|cccccccc|c}
    \toprule
    \textbf{Configurations} & \textbf{ARC\_C} & \textbf{ARC\_E} & \textbf{BoolQ} & \textbf{HellaSwag} & \textbf{OBQA} & \textbf{PIQA} & \textbf{SIQA} & \textbf{winogrande}& \textbf{Avg.} \\
    \midrule
    - \makecell[l]{\bm{$c=2^{-6},$}} \makecell[l]{\bm{$r=2^{12}$}} & 42.83 \tiny{$\pm$ 1.45} & 76.09 \tiny{$\pm$ 0.87} & 78.99 \tiny{$\pm$ 0.71} & 57.54 \tiny{$\pm$ 0.49} & 31.80 \tiny{$\pm$ 2.08} & 77.86 \tiny{$\pm$ 0.97} & 46.32 \tiny{$\pm$ 1.13} & 69.77 \tiny{$\pm$ 1.29} & 60.15 \\
    - \makecell[l]{\bm{$c=2^{-4},$}} \makecell[l]{\bm{$r=2^{10}$}} & 43.00 \tiny{$\pm$ 1.45} & 76.14 \tiny{$\pm$ 0.87} & 78.99 \tiny{$\pm$ 0.71} & 57.55 \tiny{$\pm$ 0.49} & 31.80 \tiny{$\pm$ 2.08} & 77.97 \tiny{$\pm$ 0.97} & 46.16 \tiny{$\pm$ 1.13} & 69.53 \tiny{$\pm$ 1.29} & 60.14 \\
    - \makecell[l]{\bm{$c=2^{-2},$}} \makecell[l]{\bm{$r=2^{8}$}} & 43.09 \tiny{$\pm$ 1.45} & 76.30 \tiny{$\pm$ 0.87} & 78.78 \tiny{$\pm$ 0.72} & 57.57 \tiny{$\pm$ 0.49} & 31.40 \tiny{$\pm$ 2.08} & 77.97 \tiny{$\pm$ 0.97} & 46.16 \tiny{$\pm$ 1.13} & 69.06 \tiny{$\pm$ 1.30} & 60.04 \\
    - \makecell[l]{\bm{$c=2^{0},$}} \makecell[l]{\bm{$r=2^{6}$}} & 43.52 \tiny{$\pm$ 1.45} & 76.56 \tiny{$\pm$ 0.87} & 79.02 \tiny{$\pm$ 0.71} & 57.46 \tiny{$\pm$ 0.49} & 32.00 \tiny{$\pm$ 2.09} & 78.07 \tiny{$\pm$ 0.97} & 46.21 \tiny{$\pm$ 1.13} & 69.38 \tiny{$\pm$ 1.30} & 60.28 \\
    - \makecell[l]{\bm{$c=2^{2},$}} \makecell[l]{\bm{$r=2^{4}$}} & 43.43 \tiny{$\pm$ 1.45} & 76.43 \tiny{$\pm$ 0.87} & 79.11 \tiny{$\pm$ 0.71} & 57.53 \tiny{$\pm$ 0.49} & 31.40 \tiny{$\pm$ 2.08} & 78.02 \tiny{$\pm$ 0.97} & 46.26 \tiny{$\pm$ 1.13} & 69.46 \tiny{$\pm$ 1.29} & 60.21 \\
    - \makecell[l]{\bm{$c=2^{4},$}} \makecell[l]{\bm{$r=2^{2}$}} & 44.03 \tiny{$\pm$ 1.45} & 76.56 \tiny{$\pm$ 0.87} & 79.36 \tiny{$\pm$ 0.71} & 57.51 \tiny{$\pm$ 0.49} & 32.40 \tiny{$\pm$ 2.10} & 77.86 \tiny{$\pm$ 0.97} & 46.88 \tiny{$\pm$ 1.13} & 69.61 \tiny{$\pm$ 1.29} & 60.53 \\
    \rowcolor{lightblue}
    - \makecell[l]{\bm{$c=2^{6},$}} \makecell[l]{\bm{$r=2^{0}$}} & 45.39 \tiny{$\pm$ 1.45} & 77.19 \tiny{$\pm$ 0.87} & 79.91 \tiny{$\pm$ 0.71} & 57.46 \tiny{$\pm$ 0.49} & 33.20 \tiny{$\pm$ 2.11} & 77.91 \tiny{$\pm$ 0.97} & 47.70 \tiny{$\pm$ 1.13} & 69.93 \tiny{$\pm$ 1.29} & 61.09 \\
    \bottomrule
\end{tabular}
}
\label{tab:commonsense_memory_budget_64}
\end{table*}

\begin{table*}[h!]
\footnotesize
\centering
\caption{Performance Comparison on Commonsense Benchmark Tasks (Memory Budget 16).}
\resizebox{1.0\textwidth}{!}{%
\begin{tabular}{l|cccccccc|c}
    \toprule
    \textbf{Configurations} & \textbf{ARC\_C} & \textbf{ARC\_E} & \textbf{BoolQ} & \textbf{HellaSwag} & \textbf{OBQA} & \textbf{PIQA} & \textbf{SIQA} & \textbf{winogrande}& \textbf{Avg.} \\
    \midrule
    - \makecell[l]{\bm{$c=2^{-6},$}} \makecell[l]{\bm{$r=2^{10}$}} & 42.41 \tiny{$\pm$ 1.44} & 76.18 \tiny{$\pm$ 0.87} & 78.69 \tiny{$\pm$ 0.72} & 57.50 \tiny{$\pm$ 0.49} & 31.60 \tiny{$\pm$ 2.08} & 78.07 \tiny{$\pm$ 0.97} & 46.11 \tiny{$\pm$ 1.13} & 69.30 \tiny{$\pm$ 1.30} & 59.98 \\
    - \makecell[l]{\bm{$c=2^{-4},$}} \makecell[l]{\bm{$r=2^{8}$}} & 43.00 \tiny{$\pm$ 1.45} & 76.14 \tiny{$\pm$ 0.87} & 78.90 \tiny{$\pm$ 0.71} & 57.33 \tiny{$\pm$ 0.49} & 31.20 \tiny{$\pm$ 2.07} & 78.40 \tiny{$\pm$ 0.96} & 45.85 \tiny{$\pm$ 1.13} & 69.46 \tiny{$\pm$ 1.29} & 60.03 \\
    - \makecell[l]{\bm{$c=2^{-2},$}} \makecell[l]{\bm{$r=2^{6}$}} & 42.75 \tiny{$\pm$ 1.45} & 76.35 \tiny{$\pm$ 0.87} & 78.65 \tiny{$\pm$ 0.72} & 57.37 \tiny{$\pm$ 0.49} & 31.80 \tiny{$\pm$ 2.08} & 78.18 \tiny{$\pm$ 0.96} & 45.85 \tiny{$\pm$ 1.13} & 69.30 \tiny{$\pm$ 1.30} & 60.03 \\
    - \makecell[l]{\bm{$c=2^{0},$}} \makecell[l]{\bm{$r=2^{4}$}} & 43.43 \tiny{$\pm$ 1.45} & 76.22 \tiny{$\pm$ 0.87} & 79.08 \tiny{$\pm$ 0.71} & 57.56 \tiny{$\pm$ 0.49} & 32.00 \tiny{$\pm$ 2.09} & 78.02 \tiny{$\pm$ 0.97} & 46.37 \tiny{$\pm$ 1.13} & 69.69 \tiny{$\pm$ 1.29} & 60.30 \\
    - \makecell[l]{\bm{$c=2^{2},$}} \makecell[l]{\bm{$r=2^{2}$}} & 43.26 \tiny{$\pm$ 1.45} & 76.39 \tiny{$\pm$ 0.87} & 78.62 \tiny{$\pm$ 0.72} & 57.49 \tiny{$\pm$ 0.49} & 31.60 \tiny{$\pm$ 2.08} & 77.97 \tiny{$\pm$ 0.97} & 46.57 \tiny{$\pm$ 1.13} & 69.06 \tiny{$\pm$ 1.30} & 60.12 \\
    \rowcolor{lightblue}
    - \makecell[l]{\bm{$c=2^{4},$}} \makecell[l]{\bm{$r=2^{0}$}} & 44.88 \tiny{$\pm$ 1.45} & 76.94 \tiny{$\pm$ 0.86} & 79.39 \tiny{$\pm$ 0.71} & 57.63 \tiny{$\pm$ 0.49} & 33.20 \tiny{$\pm$ 2.11} & 78.07 \tiny{$\pm$ 0.97} & 47.49 \tiny{$\pm$ 1.13} & 69.46 \tiny{$\pm$ 1.29} & 60.88 \\
    \bottomrule
\end{tabular}
}
\label{tab:commonsense_memory_budget_16}
\end{table*}

\subsection{Implementation Details}
\label{appendix:implementation_details}
For all datasets, experiments were conducted on an \textbf{NVIDIA A100 GPU (80GB)} using the \textbf{bfloat16} datatype. By default, we applied our proposed ProjFactor method to VLoRP. For other low-rank-based PeFT methods, such as LoRA and GaLore, we set the rank to \( r = \mathcal{M} \) for a fair comparison, where \( \mathcal{M} \) is the memory budget defined in \cref{sec:3}.  
The training process follows a \textbf{batch size of 16} with \textbf{gradient accumulation over 32 steps}, yielding an \textbf{effective batch size of 512}. The \textbf{maximum input sequence length} is \textbf{1024 tokens}, and \textbf{activation checkpointing} is enabled to optimize memory usage. Learning rates (\(\eta\)) are set to the best values found through empirical testing: \(2\times 10^{-5}\) for Commonsense Reasoning, \(4\times 10^{-5}\) for MMLU, and \(10^{-4}\) for GSM8K. Notably, for GSM8K, we observed that Galore and fira require relatively larger learning rates, while Apollo performs badly on this benchmark.  
Regarding training duration, models are trained for \textbf{one epoch} on Commonsense Reasoning (333 iterations) and MMLU (196 iterations). For GSM8K, all models are trained for \textbf{three epochs} (138 iterations) due to: (1) the higher difficulty of GSM8K questions compared to the other benchmarks and (2) the significantly smaller training dataset size.

Besides, the reported memory usage in \cref{sec:experiments} refers to the actual GPU memory allocated once the model has stabilized, rather than the maximum memory reserved by \texttt{PyTorch}. The latter one is typically displayed by commands such as \texttt{watch nvidia-smi} and is often larger than the former counterpart.


\subsection{Showcase of Training Prompts}
\label{appendix:showcase_of_Prompts}
\includegraphics[width=\textwidth]{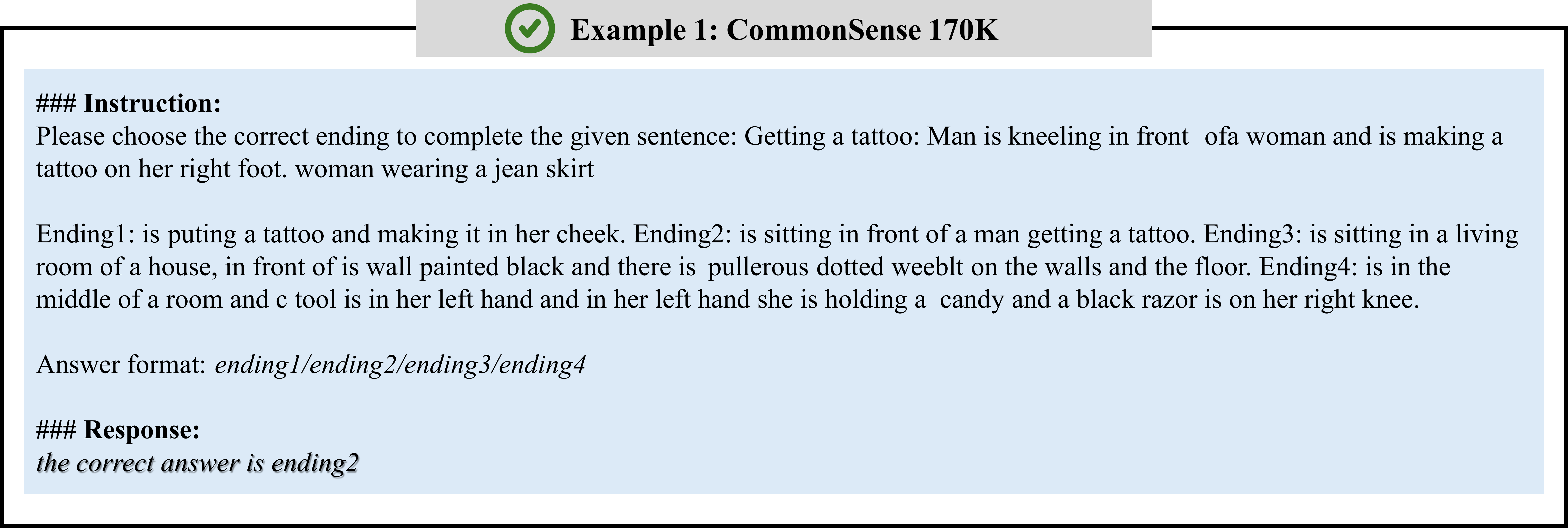}

\includegraphics[width=\textwidth]{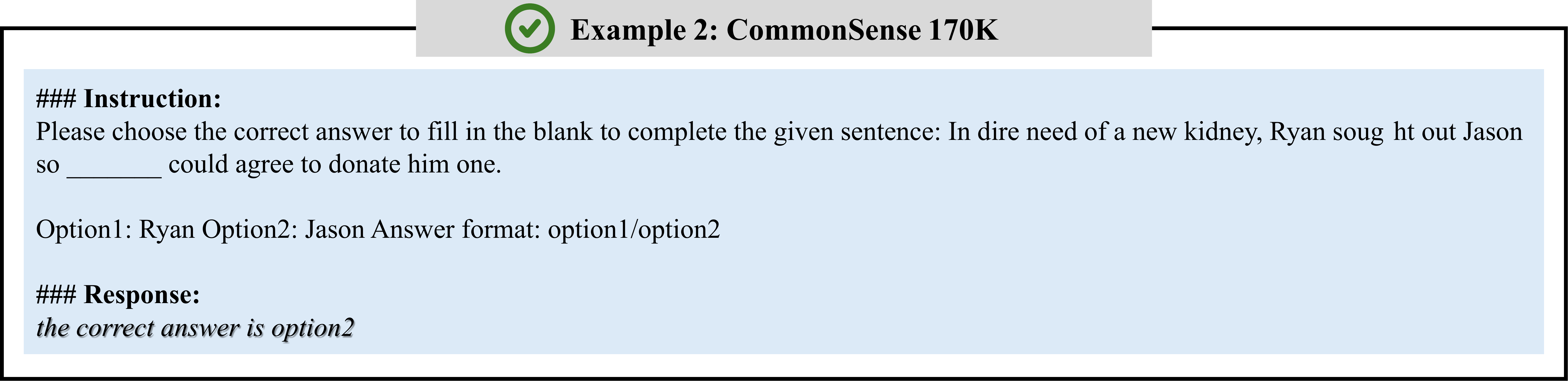}

\includegraphics[width=\textwidth]{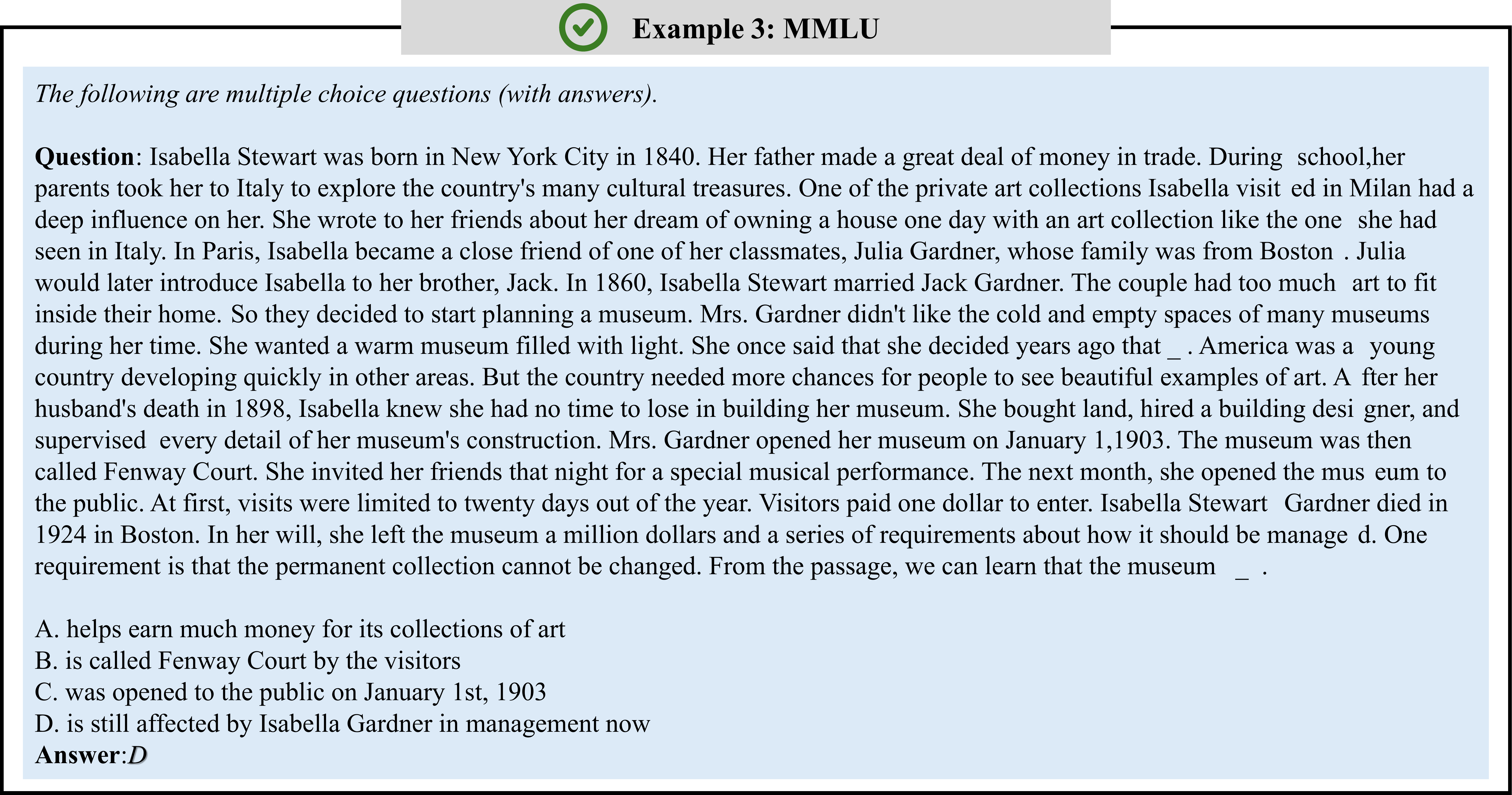}

\includegraphics[width=\textwidth]{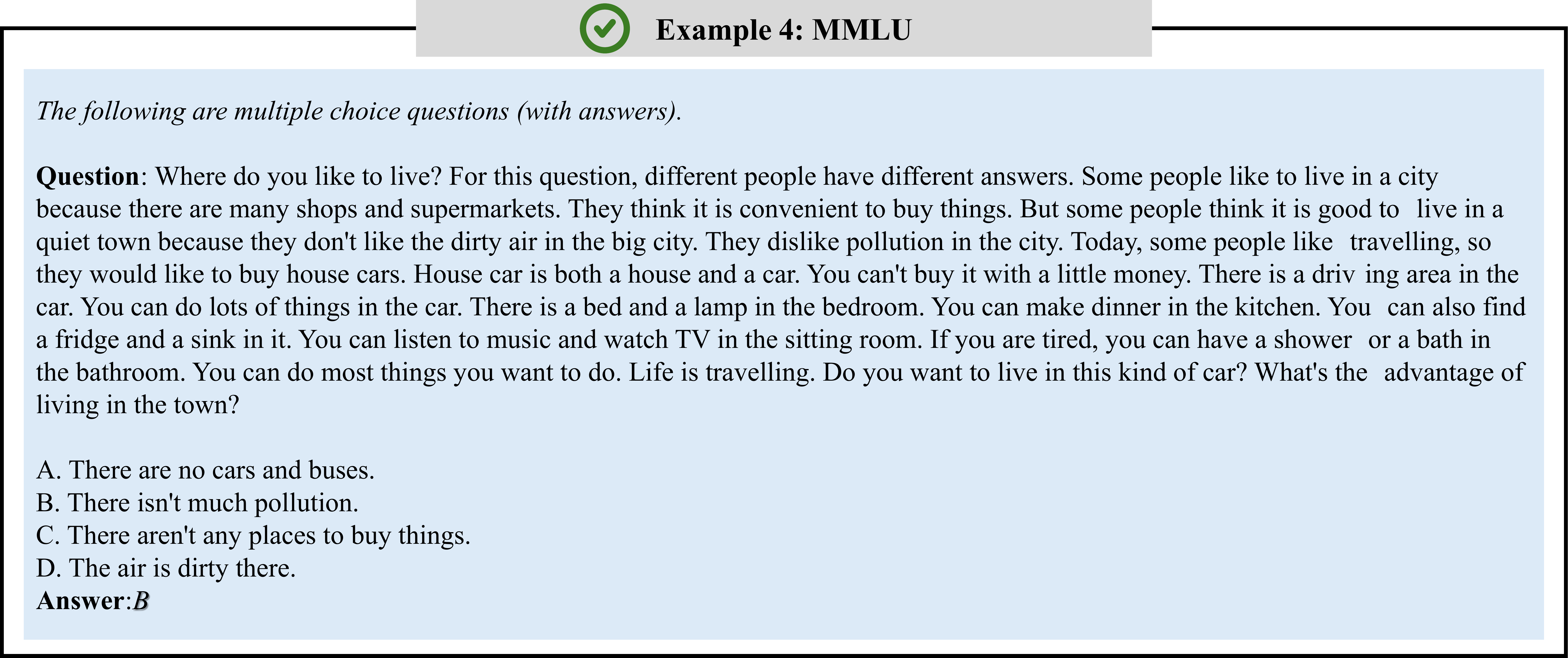}

\includegraphics[width=\textwidth]{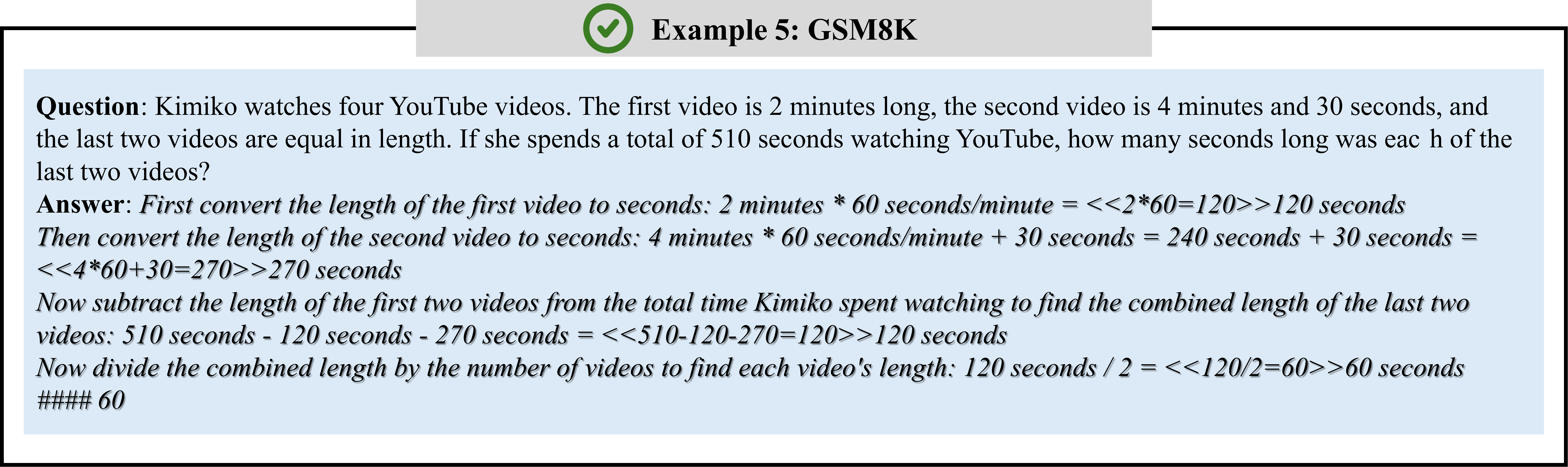}

\includegraphics[width=\textwidth]{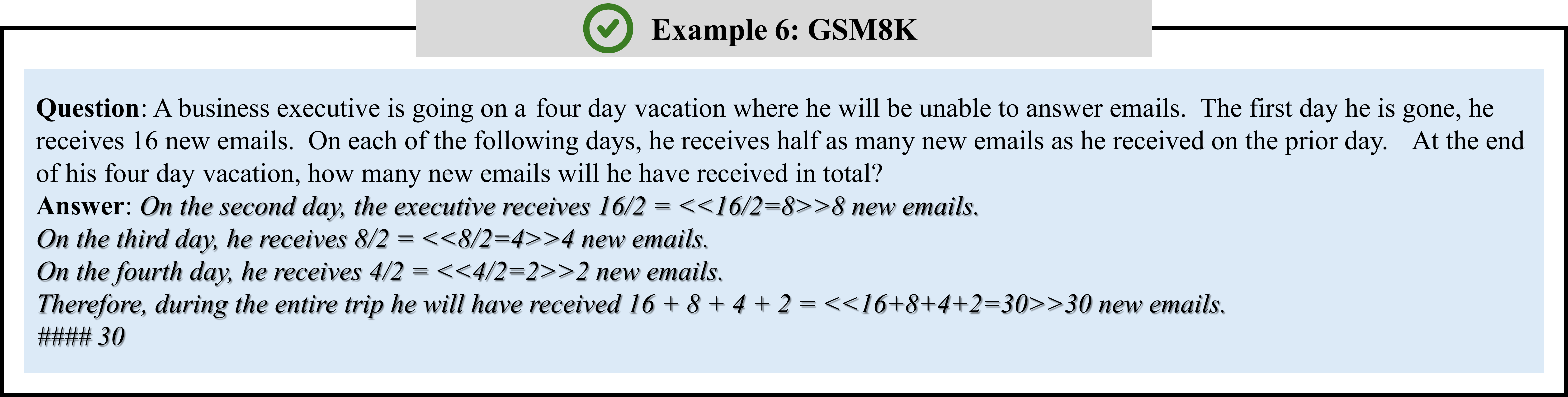}

\includegraphics[width=\textwidth]{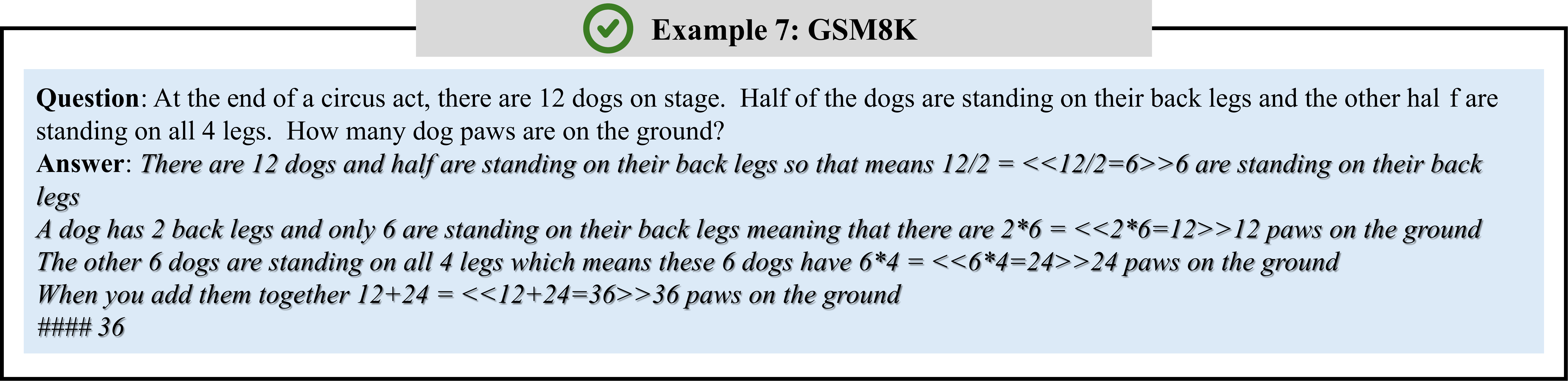}

\section{Related Works}
\label{appendix:related_works}
\subsection{Parameter-Efficient Finetuning} 
To mitigate the substantial costs associated with finetuning large-scale models, Parameter-Efficient finetuning (PEFT) methods have been introduced. These techniques adapt models to downstream tasks by training only a small fraction of the total parameters. Existing PEFT approaches can be broadly categorized into three primary families. The first category comprises adapter-based methods~\cite{Houlsby2019, He2021, Mahabadi2021}, which integrate additional trainable modules into the otherwise frozen backbone network.  For instance, \citet{Houlsby2019} proposes the sequential addition of linear modules to existing layers, while \citet{He2021} introduces the integration of these modules in parallel with the original layers to enhance performance. The second category includes prompt-based methods~\cite{Lester2021, Razdaibiedina2023, MultiLoRA}, which augment the initial input with extra soft tokens, focusing solely on finetuning these trainable vectors. However, prompt-based methods often face challenges stemming from sensitivity to initialization, which can impede their overall effectiveness. Notably, both adapter-based and prompt-based methods, whether modifying the model’s input or architecture, tend to increase inference latency compared to the baseline model. The third category is the low-rank-based methods (\textit{e.g.} LoRA) which exploit low-rank properties inside the training procedure, which we will discuss comprehensively in the next.

\subsection{Low-Rank Based Memory-Efficient Finetuning}
By using two low-rank matrices to estimate the increment of pre-trained weights without incurring additional inference overhead, LoRA~\citep{lora} and its improved variants have achieved remarkable success in the field of PeFT. For example, QLoRA~\citep{QLoRA} combines low-bit quantization with LoRA to facilitate the finetuning of LLMs. AdaLoRA~\citep{adalora} dynamically allocates the parameter budget across weight matrices based on importance scores, optimizing the use of trainable parameters. Additionally, methods such as VeRA~\citep{VeRA} reduce the number of trainable parameters by employing a single pair of low-rank matrices shared across all layers, learning small scaling vectors instead. \citet{LoRA-GA} align the gradients of the low-rank matrix product with those from full finetuning at the initial step, achieving competitive results. Furthermore, \citet{LoRA+} explore adjusting the learning rates of the LoRA adapter matrices independently, enhancing feature learning efficiency. 

Recently, another line of research~\citep{Galore, FLoRA, welore} has re-implemented LoRA methods from the perspective of low-rank gradient projection. \citet{FLoRA} investigated the training dynamics of LoRA methods and found that vanilla LoRA~\citep{lora} can be approximated by a process of randomly projecting gradients to a low-rank subspace and then projecting back. \citet{Galore} followed a similar idea but chose to obtain the projection matrix by performing Singular Value Decomposition (SVD) on the gradients to implement Galore, instead of using a randomly sampled matrix as in FloRA~\citep{FLoRA}. \citet{fira} extends Galore by incorporating the residual error between the full-rank gradient and its low-rank approximation, effectively simulating full-rank updates. APOLLO~\citep{appollo} approximates channel-wise learning-rate scaling through an auxiliary low-rank optimizer state derived from random projections.


\subsection{The Equivalence between LoRA and LoRP}
\label{sec:grad_compress_flora}
\citet{FLoRA} has shown that LoRA is approximately equivalent to an approach that compresses the gradient updates by down-projecting them onto a lower-dimensional space, and then projecting back to the original space:
\begin{theorem}[\citet{FLoRA}]
Consider a weight matrix \(W\in \mathbb{R}^{n\times m}\) with the low-rank factorization $\Delta W = BA$ where we initially have \(B_{0} = \bm{0}^{n\times r}\) and \(A_{0} \in \mathbb{R}^{r \times m}\) randomly sampled from a standard Gaussian distribution. Assuming that the learning rate \(\eta\) is sufficiently small, then the LoRA training procedure effectively restricts weight updates to the column space of \(A_{0}\).  Moreover, after \(T\) gradient-based update steps, the resulting weight matrix \(W_{T}\) satisfies
\begin{equation}
\label{eq:3.1}
W_{T} \approx W_{0} - \eta \sum_{t=0}^{T-1} G_t A_{0} A_{0}^{\top} / r,
\end{equation}
where \(G_t\) denotes the gradient of $W$ at the \(t\)-th step.
\label{thm:3.1}
\end{theorem}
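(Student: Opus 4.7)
The plan is to exploit the initialization asymmetry $B_0=\mathbf{0}$ to show that $A_t$ stays frozen at $A_0$ to leading order in $\eta$, while $B_t$ accumulates the gradient contributions. First, by the chain rule applied to the forward map $y=(W+BA)x$, the gradients of the loss with respect to $A$ and $B$ at step $t$ are
\[
\nabla_A \mathcal{L}_t = B_t^\top G_t, \qquad \nabla_B \mathcal{L}_t = G_t A_t^\top,
\]
where $G_t$ is the gradient of the loss with respect to the effective weight $W_0+B_tA_t$. The SGD updates therefore read $A_{t+1}=A_t-\eta B_t^\top G_t$ and $B_{t+1}=B_t-\eta G_tA_t^\top$. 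Because $B_0=\mathbf{0}$, the very first step gives $A_1=A_0$ and $B_1=-\eta G_0A_0^\top$, so the asymmetry is visible already at $t=1$.

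Next, I would prove inductively that $\|B_t\|=O(\eta)$ and $\|A_t-A_0\|=O(\eta^2)$ for all $t\leq T$, under the ``sufficiently small $\eta$'' assumption (quantified by $\eta T$ being small relative to the supremum of $\|G_t\|$ and $\|A_0\|$ along the trajectory). Granting these bounds at step $t$, the increment $A_{t+1}-A_t=-\eta B_t^\top G_t$ is $O(\eta^2)$ per step, while $B_{t+1}-B_t=-\eta G_t A_t^\top=-\eta G_tA_0^\top+O(\eta^3)$. Telescoping then yields
\[
B_T = -\eta\sum_{t=0}^{T-1}G_tA_0^\top + O(\eta^3), \qquad A_T = A_0 + O(\eta^2).
\]

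The effective weight is $W_T=W_0+B_TA_T$ since $B_0A_0=0$. Substituting the leading-order expressions and multiplying out,
\[
B_TA_T=\Bigl(-\eta\sum_{t=0}^{T-1}G_tA_0^\top\Bigr)A_0+O(\eta^3) = -\eta\sum_{t=0}^{T-1}G_tA_0^\top A_0 + O(\eta^3),
\]
which matches the displayed formula (the factor $1/r$ comes from the standard LoRA convention of absorbing $1/\sqrt{r}$ into the Gaussian initialization so that $A_0^\top A_0/r$ concentrates around the identity on $\mathbb{R}^m$, making $P:=A_0^\top/\sqrt{r}$ the projection matrix of \eqref{eq:proj_projback}). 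The first claim of the theorem — that updates lie in the span of the rows of $A_0$ — is then immediate: $B_TA_T\approx B_TA_0$, and every row of $B_TA_0$ is a linear combination of the rows of $A_0$.

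The main obstacle is making the perturbative expansion rigorous uniformly over $T$ steps. Two subtleties arise. First, $G_t$ itself depends on $B_tA_t$ through the loss, so one must show that the $\eta$-order feedback of this dependence is absorbed into the $O(\eta^3)$ error; this is typically handled by invoking smoothness of $\mathcal{L}$ in its weight argument together with $\|B_tA_t\|=O(\eta)$. Second, the bound $\|B_t\|=O(\eta)$ must persist uniformly in $t$, which requires the cumulative sum $\sum_t G_tA_0^\top$ to stay bounded and $\eta T$ to be small — precisely the ``learning rate sufficiently small'' clause in the statement. A cleaner alternative is to pass to the gradient-flow limit $\eta\to 0$ with $\eta T$ fixed, in which the $O(\eta^2)$ drift of $A_t$ vanishes identically and the approximation becomes an equality.
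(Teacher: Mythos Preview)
Your perturbative argument is correct and is essentially the route taken in the original FLoRA paper. Note, however, that the present paper does \emph{not} supply its own proof of this statement: Theorem~\ref{thm:3.1} is quoted from \citet{FLoRA}, and the only commentary the paper offers is the short paragraph following the theorem, which invokes the Johnson--Lindenstrauss lemma to explain why the random projector $A_0^\top A_0/r$ approximately preserves the geometry of the gradient rows. There is therefore no in-paper proof to compare against; your derivation---freeze $A$ to leading order via $B_0=\mathbf{0}$, accumulate $B_T=-\eta\sum_t G_tA_0^\top+O(\eta^3)$, and multiply out $B_TA_T$---is the standard one and matches FLoRA's own argument.

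One small note: your computation correctly produces $G_tA_0^\top A_0$ with $A_0^\top A_0\in\mathbb{R}^{m\times m}$; the displayed $A_0A_0^\top$ in the statement is a transcription slip given the stated dimensions $A_0\in\mathbb{R}^{r\times m}$. Your identification of $P:=A_0^\top/\sqrt{r}$ with the projection matrix of \eqref{eq:proj_projback} is exactly the link the paper is drawing between LoRA and LoRP.
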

The rationale of \cref{thm:3.1} is grounded in the Johnson–Lindenstrauss lemma and its extensions~\citep{jls-1, jls-2, jls-3}, which assert that random projections via Gaussian matrices approximately preserve the geometry with high probability. Moreover, \citet{FLoRA} quantifies the reconstruction error, demonstrating that the rank $r$ needs only to scale logarithmically to maintain low element-wise error, thus ensuring computational and memory efficiency.

\subsection{Stochastic Approximation}
Stochastic approximation methods \cite{robbins1951stochastic, nevel1976stochastic, SA92} are a family of iterative methods primarily used to solve root-finding or optimization problems. These methods address functions of the form $f(\theta) = \mathbb{E}_{\bm{z}}[F(\theta, \bm{z})]$, which represent the expected value of a function \( F \) that depends on a random variable \( z \). The goal is to infer properties of \( f \) without directly evaluating it. We focus primarily on its applications involving gradient estimation. 

\paragraph{Forward Gradient.}\label{sec:fg}
FG methods \cite{Wengert64, SilverGDHH22, baydin22, RenKLH23} update model parameters using directional gradients along multiple random perturbation directions and belong to the family of stochastic approximation techniques.
More formally, given a differentiable function \(f: \mathbb{R}^N \rightarrow \mathbb{R}\), the gradient at a given input \(\theta \in \mathbb{R}^N\) can be approximated as
\begin{equation}\label{eq:fg}
\hat{\nabla} f(\theta) := \left(  \nabla f(\theta)^{\top}z \right) z.
\end{equation}
There are multiple choices for the random variables $z$. All distributions satisfying \(\mathbb{E}[z] = 0\) and \(\mathbb{E}[zz^\top] = I_N\) are qualified, such as the standard Gaussian distribution and the Rademacher distribution. This way, for any given \(\theta\), \(\hat{\nabla} f(\theta)\) is also an unbiased estimator of \(\nabla f(\theta)\) since
\begin{equation}
\begin{aligned}
\mathbb{E} [\hat{\nabla} f(\theta)] &= \mathbb{E} \left[ \left(  \nabla f(\theta)^{\top}z \right) z \right] = \mathbb{E} \left[ z z^\top \right] \nabla f(\theta) \\&= I_N \nabla f(\theta)  = \nabla f(\theta).
\end{aligned}
\end{equation}
In practice, to reduce variance, Monte Carlo gradient estimation can be performed by averaging forward gradients over multiple random directions \cite{baydin22,hu2023}.
Utilizing forward-mode automatic differentiation techniques \cite{WilliamsZ89, Pearlmutter94}, the Jacobian-vector product \(\nabla_\theta  f(\theta)^{\top} z\) can be computed efficiently with a single forward pass. This enables forward gradient learning \cite{Wengert64, SilverGDHH22, baydin22, RenKLH23}, which updates model parameters based on the directional gradient along a random perturbation direction, enabling backpropagation-free training.

\end{document}